\documentclass{article}
\usepackage{matus}
\usepackage{bm}

\def\R{\mathbb{R}}
\def\N{\mathbb{N}}
\def\1{\mathds{1}}
\def\conv{\textup{conv}}

\def\cs{\bar{\partial}}
\def\csr{\cs_r}
\def\csp{\cs_\perp}
\def\nf{\nabla f}

\def\nL{\nabla\cL}
\def\na{\nabla\alpha}
\def\nra{\nabla_r\alpha}
\def\npa{\nabla_\perp\alpha}
\def\tW{\widetilde{W}}
\def\ta{\tilde{\alpha}}

\def\pc{k}
\def\setdef#1#2{\cbr{#1\,\middle|\,#2}}
\def\lexp{\ell_{\exp}}
\def\llog{\ell_{\log}}
\def\diag{\textup{diag}}
\def\eps{\epsilon}

\def\baru{\bar u}
\def\barq{\bar q}
\def\barp{\bar p}
\def\bars{\bar s}
\def\btheta{\bar \theta}
\def\T{{\scriptscriptstyle\mathsf{T}}}
\renewcommand{\S}{\mathbb{S}}
\def\oW{\overline{W}}

\def\cifar{\texttt{cifar}\xspace}

\title{Directional convergence and alignment in deep learning}

\iftrue
\author{Ziwei Ji\qquad{}Matus Telgarsky\\
\texttt{\{\href{mailto:ziweiji2@illinois.edu}{ziweiji2},\href{mailto:mjt@illinois.edu}{mjt}\}@illinois.edu} \\
University of Illinois, Urbana-Champaign}
\else
\author{Ziwei Ji\thanks{\tt{<\url{ziweiji2@illinois.edu}>} and \tt{<\url{mjt@illinois.edu}>};
University of Illinois, Urbana-Champaign.}
\and
Matus Telgarsky\footnotemark[1]}
\fi
\date{}

\begin{document}

\maketitle

\begin{abstract}
  In this paper, we show that although the minimizers of cross-entropy and
  related classification losses are off at infinity, network weights learned by
  gradient flow converge \emph{in direction}, with an immediate corollary that
  network predictions, training errors, and the margin distribution also
  converge.
  This proof holds for deep homogeneous networks
  ---
  a broad class of networks allowing for ReLU, max-pooling, linear, and
  convolutional layers
  ---
  and we additionally provide empirical support not just close to the theory
  (e.g., the AlexNet), but also on non-homogeneous networks (e.g., the
  DenseNet).
  If the network further has locally Lipschitz gradients, we show that these
  gradients also converge in direction, and asymptotically \emph{align} with the
  gradient flow path, with consequences on margin maximization, convergence of saliency maps,
  and a few other settings.
  Our analysis complements and is distinct from the well-known neural tangent
  and mean-field theories, and in particular makes no requirements on network
  width and initialization, instead merely requiring perfect classification
  accuracy.
  The proof proceeds by developing a theory of unbounded nonsmooth
  Kurdyka-{\L}ojasiewicz inequalities for functions definable in an o-minimal
  structure, and is also applicable outside deep learning.
\end{abstract}

\section{Introduction}\label{sec:intro}

Recent efforts to rigorously analyze the optimization of deep networks have
yielded many exciting developments, for instance the neural tangent
\citep{jacot_ntk,du_deep_opt,allen_deep_opt,zou_deep_opt}
and mean-field perspectives
\citep{montanari_mean_field,chizat_bach_meanfield}.
In these works, it is shown that small training or even testing error are
possible for wide networks.

The above theories, with finite width networks, usually require the weights to
stay close to initialization in certain norms.
By contrast, practitioners run their optimization methods as long as their
computational budget allows \citep{jascha_step_sizes}, and if the data can be
perfectly classified, the parameters are guaranteed to diverge in norm to
infinity \citep{kaifeng_jian_margin}.
This raises a worry that the prediction surface can continually change during
training;
indeed, even on simple data, as in \Cref{fig:contours}, the
prediction surface continues to change after perfect classification is achieved,
and even with large width is not close to the maximum margin predictor from the
neural tangent regime.
If the prediction surface never stops changing, then the generalization behavior,
adversarial stability, and other crucial properties of the predictor could also
be unstable.

In this paper, we resolve this worry by guaranteeing stable convergence behavior
of deep networks as training proceeds, despite this growth of weight vectors to
infinity.
Concretely:
\begin{enumerate}
  \item \textbf{Directional convergence:} the parameters converge \emph{in
  direction}, which suffices to guarantee convergence of many other relevant
  quantities, such as the \emph{prediction margins}.

  \item \textbf{Alignment:} when gradients exist, they converge in direction to
  the parameters, which implies various margin maximization results and saliency
  map convergence, to name a few.
\end{enumerate}

\subsection{First result: directional convergence}

\begin{figure}[t]
  \centering
  \begin{subfigure}[t]{0.31\textwidth}
    \centering
\includegraphics[width=\textwidth]{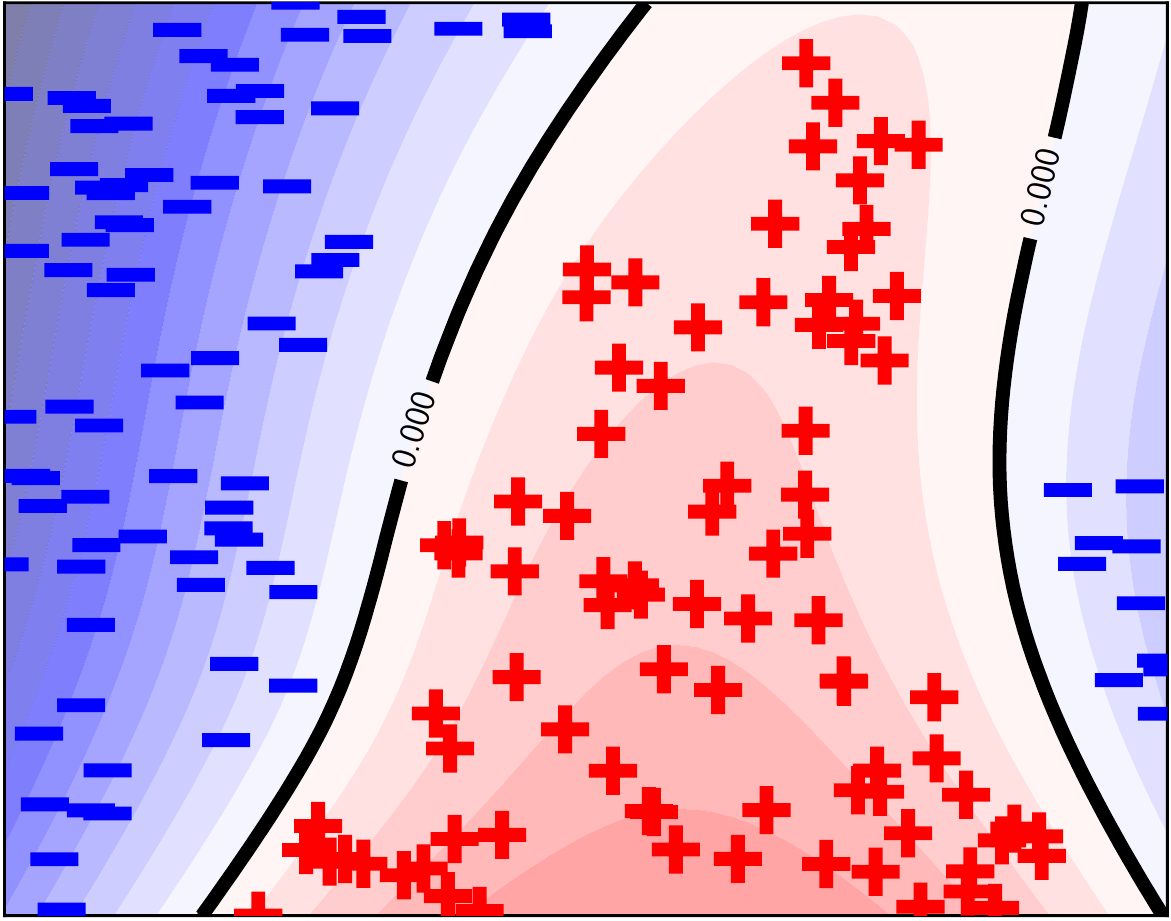}
    \caption{Shallow NTK max margin.}
    \label{fig:contours:ntk}
  \end{subfigure}\hfill
  \begin{subfigure}[t]{0.31\textwidth}
    \centering
\includegraphics[width=\textwidth]{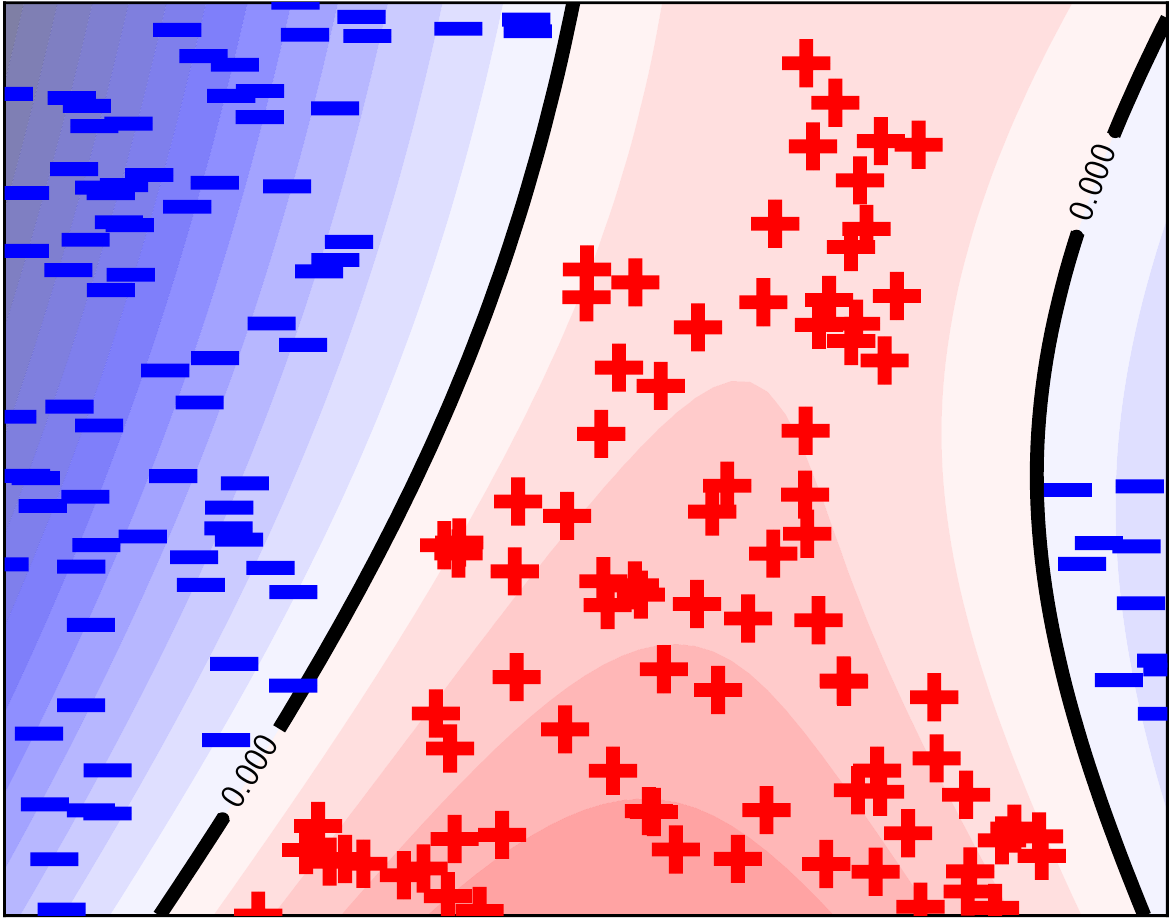}
\caption{Shallow net, early training.}
    \label{fig:contours:a}
  \end{subfigure}\hfill
\begin{subfigure}[t]{0.31\textwidth}
    \centering
\includegraphics[width=\textwidth]{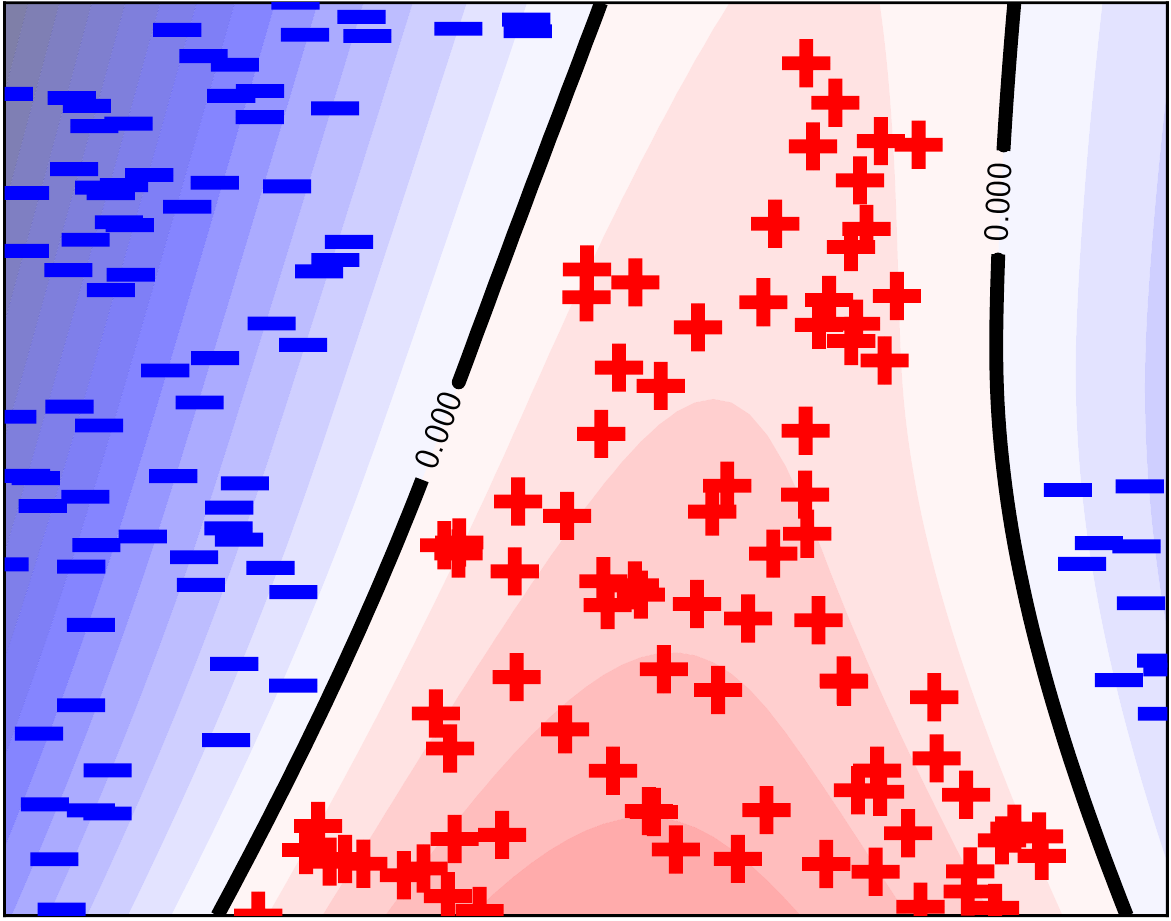}
\caption{Shallow net, late training.}
    \label{fig:contours:c}
  \end{subfigure}\caption{Prediction surface of a shallow network on simple synthetic data
    with blue negative examples (``{\color{blue}$\bm{-}$}'') and
    red positive examples (``{\color{red}$\bm{+}$}''), trained via gradient descent.
    \Cref{fig:contours:ntk} shows the prediction surface reached by freezing
    activations, which is also the prediction surface of the corresponding
    Neural Tangent Kernel (NTK) maximum margin predictor \citep{nati_logistic}.
    \Cref{fig:contours:a} shows the same network, but now without frozen activations,
    at the first moment with perfect classification.
    Training this network much longer converges to \Cref{fig:contours:c}.
  }
  \label{fig:contours}
\end{figure}

We show that the network parameters $W_t$ converge \emph{in direction}, meaning
the normalized iterates $\nicefrac {W_t}{\|W_t\|}$ converge.
Details are deferred to \Cref{sec:dir}, but here is a brief overview.

Our networks are \emph{$L$-positively homogeneous in the parameters}, meaning
scaling the parameters by $c>0$ scales the predictions by $c^L$, and
\emph{definable in some $o$-minimal structure},
a mild technical assumption which we will describe momentarily.
Our networks can be arbitrarily deep with many common types of layers (e.g.,
linear, convolution, ReLU, and max-pooling layers), but homogeneity rules out
some components such as skip connections and biases, which all satisfy
definability.

We consider binary classification with either the logistic loss
$\llog(z) := \ln(1+e^{-z})$ (binary cross-entropy) or the exponential loss
$\lexp(z):= e^{-z}$, and a standard gradient flow (infinitesimal gradient
descent) for non-differentiable non-convex functions via the Clarke
subdifferential.
We start from an initial risk smaller than $\nicefrac 1 n$, where $n$ denotes
the number of data samples; in this way, our analysis handles the late phase of
training, and can be applied after some other analysis guarantees risk
$\nicefrac 1 n$.

Under these conditions, we prove the following result, without any other
assumptions about the distribution of the parameters or the width of the network
(cf. \Cref{fact:dir}):
\begin{center}
    \em{The curve swept by $\nicefrac{W_t}{\|W_t\|}$ has finite length, and
    thus $\nicefrac{W_t}{\|W_t\|}$ converges.}
\end{center}

Our main corollary is that \emph{prediction margins} converge (cf.
\Cref{fact:margin_conv}), meaning convergence of the normalized per-example values
$\nicefrac {y_i \Phi(x_i;W_t)}{\|W_t\|^L}$, where $y_i$ is the label and
$\Phi(x_i;W_t)$ is the prediction on example $x_i$.
These quantities are central in the study of generalization of deep networks,
and their stability also implies stability of many other useful quantities
\citep{spec,margindist_predict,fantastic}.
As an illustration of directional convergence and margin convergence, we plot
the margin values for all examples in the standard \cifar data against training
iterations in \Cref{fig:margins}; these trajectories exhibit strong convergence
behavior, both within our theory
(a modified homogeneous AlexNet, as in \Cref{fig:margins:halexnet}),
and outside of it (DenseNet, as in \Cref{fig:margins:densenet}).

Directional convergence is often assumed throughout the
literature~\citep{GLSS18,chizat_bach_imp}, but has only been established for
linear predictors~\citep{nati_logistic}.
It is tricky to prove because it may still be false for highly smooth functions:
for instance, the homogeneous Mexican Hat function satisfies all our assumptions
\emph{except} definability, and can be adjusted to have arbitrary order of
continuous derivatives, but its gradient flow \emph{does not} converge in
direction, instead it spirals~\citep{kaifeng_jian_margin}.
To deal with similar pathologies in many branches of mathematics, the notion of
functions \emph{definable in some o-minimal structure} was developed: these are
rich classes of functions built up to limit oscillations and other bad
behavior.
Using techniques from this literature, we build general tools, in
particular unbounded nonsmooth Kurdyka-{\L}ojasiewicz inequalities, which
allows us to prove directional convergence, and may also be useful outside deep
learning.
More discussion on the o-minimal literature is given in \Cref{sec:rw}, technical
preliminaries are introduced in \Cref{sec:prelim}, and a proof overview is given
in \Cref{sec:dir}, with full details in the appendices.

\begin{figure}[t!]
  \centering
  \begin{subfigure}[t]{0.48\textwidth}
    \centering
    \includegraphics[width=\textwidth]{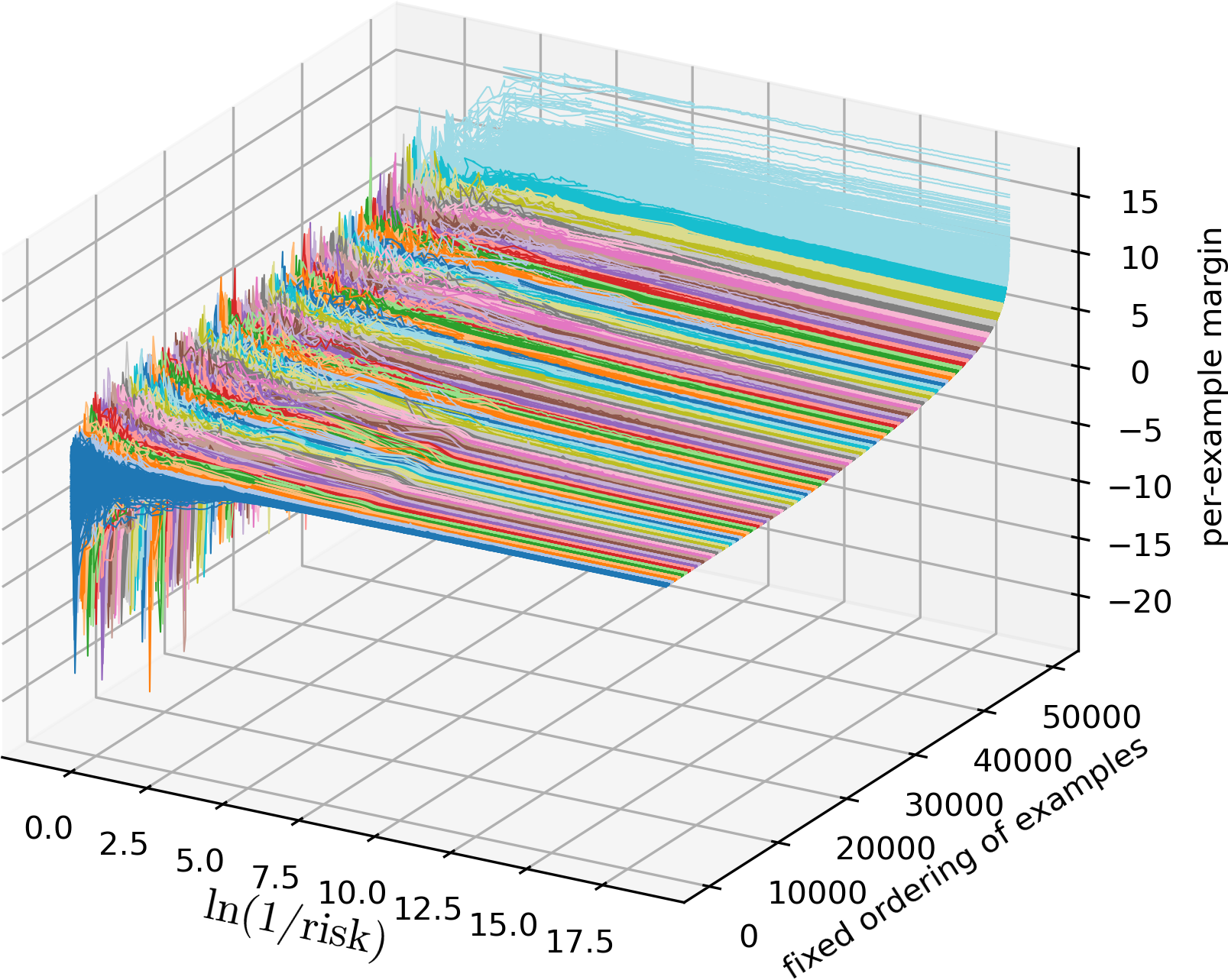}
    \caption{Margins while training H-AlexNet.}
    \label{fig:margins:halexnet}
  \end{subfigure}\hfill
  \begin{subfigure}[t]{0.48\textwidth}
    \centering
\includegraphics[width=\textwidth]{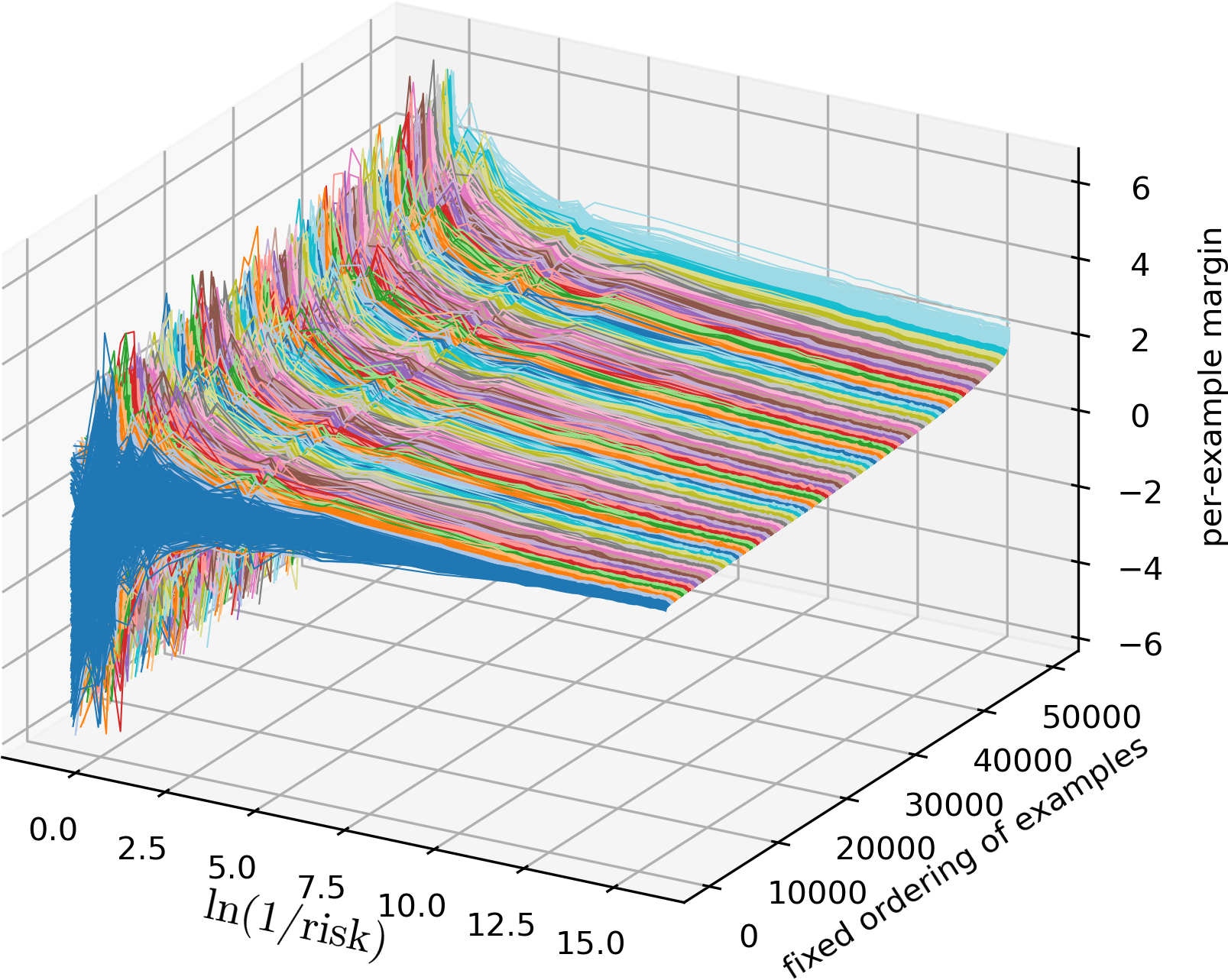}
    \caption{Margins while training DenseNet.}
    \label{fig:margins:densenet}
  \end{subfigure}\caption{The margins of all examples in \cifar, plotted against time,
    or rather optimization accuracy $\ln(n/\cL(W_t))$ to remove the effect
    of step size and other implementation coincidences.
    \Cref{fig:margins:halexnet} shows ``H-AlexNet'', a homogeneous version
    of AlexNet as described in the main text \citep{imagenet_sutskever},
    which is handled by our theory.
    \Cref{fig:margins:densenet} shows a standard
    DenseNet \citep{densenet}, which does not fit the theory in
    this work due to skip connections and biases, but still
    exhibits convergence of margins,
    thus suggesting a tantalizing open problem.
  }
  \label{fig:margins}
\end{figure}

\subsection{Second result: gradient alignment}

Our second contribution, in \Cref{sec:alignment}, is that if the network has
locally Lipschitz gradients, then these gradients also converge, and are
\emph{aligned} to the gradient flow path (cf. \Cref{fact:alignment}).
\begin{center}
  \emph{The gradient flow path, and the gradient of the risk along the path,
  converge to the same direction.}
\end{center}

As a practical consequence of this, recall the use of gradients within the
interpretability literature, specifically in \emph{saliency
maps}~\citep{saliency_maps_sanity_checks}: if gradients do not converge in
direction then saliency maps can change regardless of the number of iterations
used to produce them.
As a theoretical consequence, directional convergence and alignment imply margin
maximization in a variety of situations: this holds in the deep linear case,
strengthening prior work \citep{nati_lnn,align}, and in the 2-homogeneous
network case, with an assumption taken from the infinite width setting
\citep{chizat_bach_imp}, but presented here with finite width.

\subsection{Further related work}\label{sec:rw}

Our analysis is heavily inspired and influenced by the work of
\citet{kaifeng_jian_margin},
who studied margin maximization of homogeneous networks,
establishing monotonicity of a \emph{smoothed margin}, a quantity we also use.
However, they did not prove directional convergence but instead must use
subsequences.
Their work also left open alignment and global margin maximization.

\paragraph{Directional convergence.}

A standard approach to resolve directional convergence and similar questions is
to establish that the objective function in question is \emph{definable in some
o-minimal structure}, which as mentioned before, limits oscillations and other
complicated behavior.
This literature cannot be directly applied to our setting, owing to a combination of
nonsmooth layers like the ReLU and max-pooling, and the exponential function
used in the cross entropy loss, and as a result,
our proofs need to rebuild many o-minimal results from the ground up.

In more detail, an important problem in the o-minimal literature is the
\emph{gradient conjecture} of Ren\'{e} Thom: it asks when the existence of
$\lim_{t\to\infty}W_t=z$ further implies
$\lim_{t\to\infty} \nicefrac{(W_t-z)}{\|W_t-z\|}$ exists, and was established in
various definable scenarios by \citet{kurdyka_thom,kurdyka_quasi} via related
Kurdyka-\L{}ojasiewicz inequalities~\citep{kurdyka_grad}.
The underlying proof ideas can also be used to analyze
$\lim_{t\to\infty} \nicefrac {W_t}{\|W_t\|}$ when the weights go to
infinity~\citep{grandjean_limit}.
However, the prior results require the objective function to be either real
analytic, or definable in a ``polynomially-bounded'' o-minimal structure.
The first case causes the aforementioned nonsmoothness issue,
and excludes many common layers in deep learning such as the ReLU and max-pooling.
The second case excludes the exponential function, and means the logistic
and cross-entropy losses cannot be handled.
To resolve these issues, we had to redo large portions of the o-minimality theory,
such as the nonsmooth unbounded Kurdyka-{\L}ojasiewicz inequalities that can
handle the exponential/logistic loss, as presented in \Cref{sec:dir}.

\paragraph{Alignment.}

As discussed in \Cref{sec:alignment}, alignment implies the gradient flow
reaches a stationary point of the limiting margin maximization objective,
and therefore is related to various statements and results
throughout the literature on implicit bias and margin
maximization \citep{nati_logistic,min_norm}.
This stationary point perspective also appears in some nonlinear works, for
instance in the aforementioned work on margins by \citet{kaifeng_jian_margin},
which showed that \emph{subsequences} of the gradient flow converge to such
stationary points; in addition to fully handling the gradient flow, the present
work also differs in that alignment is in general a stronger notion, in that it
is unclear how to prove alignment as a consequence of convergence to KKT points.
Additionally, alignment can still hold when the objective function is not
definable and directional convergence is false, for example on the homogeneous
Mexican hat function, which cannot be handled by the approach in
\citep[Appendix J]{kaifeng_jian_margin}.
As a final pointer to the literature, many implicit bias works explicitly assume
directional convergence and some version of alignment
\citep{nati_lnn,chizat_bach_imp}, but neither do these works indicate a possible proof,
nor do they provide conclusive evidence.

\subsection{Experimental overview}

The experiments in \Cref{fig:contours,fig:margins} are performed in as standard a way as possible
to highlight that directional convergence is a reliable property;
full details are in \Cref{app_sec:empirical}.
Briefly, \Cref{fig:contours}
uses synthetic data and vanilla gradient descent (no momentum, no weight decay, etc.)
on a 10,000 node wide 2-layer \emph{squared} ReLU network and its Neural Tangent Kernel classifier;
by using the squared ReLU, both our directional convergence and our alignment results apply.
\Cref{fig:margins} uses standard \cifar firstly with a modified homogeneous AlexNet and
secondly with an
unmodified DenseNet, respectively inside and outside our assumptions.
SGD was used on \cifar due to training set size,
and seeing how directional convergence still seems to occur,
suggests another open problem.

\section{Preliminaries and assumptions}\label{sec:prelim}

In this section, we first introduce the notions of Clarke subdifferentials and
o-minimal structures, and then use these notions to describe the network model,
gradient flow, and \Cref{cond:Phi,cond:init}.
Throughout this paper, $\|\cdot\|$ denotes the $\ell_2$ (Frobenius) norm, and
$\|\cdot\|_\sigma$ denotes the spectral norm.

\paragraph{Locally Lipschitz functions and Clarke subdifferentials.}

Consider a function $f:D\to\R$ with $D$ open.
We say that $f$ is \emph{locally Lipschitz} if for any $x\in D$, there exists a
neighborhood $U$ of $x$ such that $f|_U$ is Lipschitz continuous.
We say that $f$ is $C^1$ if $f$ is continuously differentiable on $D$.

If $f$ is locally Lipschitz, it holds that
$f$ is differentiable a.e. \citep[Theorem 9.1.2]{borwein_lewis}.
The \emph{Clarke subdifferential} of $f$ at $x\in D$ is defined as
\begin{align*}
    \partial f(x):=\conv\setdef{\lim_{i\to\infty}\nabla f(x_i)}{x_i\in D,\nabla f(x_i)\textrm{ exists},\lim_{i\to\infty}x_i=x},
\end{align*}
which is nonempty convex compact \citep{clarke}, and if $f$ is continuously
differentiable at $x$, then $\partial f(x)=\{\nabla f(x)\}$.
Vectors in $\partial f(x)$ are called \emph{subgradients}, and we let $\cs f(x)$
denote the unique minimum-norm subgradient:
\begin{align*}
    \cs f(x):=\argmin_{x^*\in\partial f(x)}\|x^*\|.
\end{align*}
In the following analysis, we use $\cs f$ in many places that seem to call on
$\nabla f$.

\paragraph{O-minimal structures and definable functions.}

Formally, an o-minimal structure is a collection $\cS=\{\cS_n\}_{n=1}^\infty$,
where $\cS_n$ is a set of subsets of $\R^n$ which includes all algebraic sets
and is closed under finite union/intersection and complement, Cartesian product,
and projection, and $\cS_1$ consists of finite unions of open intervals and
points.
A set $A\subset\R^n$ is \emph{definable} if $A\in\cS_n$, and a function
$f:D\to\R^m$ with $D\subset\R^n$ is \emph{definable} if its graph is in
$\cS_{n+m}$.
More details are given in \Cref{app_sec:o}.

Many natural functions and operations are definable.
First of all, definability of functions is stable under algebraic operations,
composition, inverse, maximum and minimum, etc.
Moreover, \citet{wilkie_exp} proved that there exists an o-minimal structure
where polynomials and the exponential function are definable.
Consequently, definability allows many common layer types in deep learning, such
as fully-connected/convolutional/ReLU/max-pooling layers, skip connections, the
cross entropy loss, etc.; moreover, they can be composed arbitrarily
As will be discussed later, what is still missing is the handling of the
gradient flow on such functions.

\paragraph{The network model.}

Consider a dataset $\{(x_i,y_i)\}_{i=1}^n$, where $x_i\in\R^d$ are features and
$y_i\in\{-1,+1\}$ are binary labels, and a predictor $\Phi(\cdot;W):\R^d\to\R$
with parameters $W\in\R^\pc$.
We make the following assumption on the predictor $\Phi$.
\begin{assumption}\label{cond:Phi}
    For any fixed $x$, the prediction $W\mapsto\Phi(x;W)$ as a function of
    $W$ is locally Lipschitz, $L$-positively homogeneous for some $L>0$, and
    definable in some o-minimal structure including the exponential function.
\end{assumption}
As mentioned before, homogeneity means that $\Phi(x;cW) = c^L \Phi(x;W)$ for any
$c \geq 0$.
This means, for instance, that linear, convolutional, ReLU, and max-pooling
layers are permitted, but not skip connections and biases.
Homogeneity is used heavily throughout the theoretical study of deep networks
\citep{kaifeng_jian_margin}.

Given a decreasing loss function $\ell$, the total loss (or \emph{unnormalized empirical risk})
is given by
\begin{align*}
    \cL(W):=\sum_{i=1}^{n}\ell\del{y_i\Phi(x_i;W)}=\sum_{i=1}^{n}\ell(p_i(W)),
\end{align*}
where $p_i(W):=y_i\Phi(x_i;W)$ are also locally Lipschitz, $L$-positively
homogeneous and definable under \Cref{cond:Phi}.
We consider the exponential loss $\lexp(z):=e^{-z}$ and the logistic loss
$\llog(z):=\ln(1+e^{-z})$, in which case $\cL$ is also locally Lipschitz and
definable.

\paragraph{Gradient flow.}

As in \citep{davis_tame,kaifeng_jian_margin}, a curve $z$ from an interval $I$
to some real space $\R^m$ is called an \emph{arc} if it is absolutely continuous
on any compact subinterval of $I$.
It holds that an arc is a.e. differentiable, and the composition of an arc and a
locally Lipschitz function is still an arc.
We consider a gradient flow
$W:[0,\infty)\to\R^\pc$ that is an arc and satisfies
\begin{align}\label{eq:gf}
    \frac{\dif W_t}{\dif t}\in-\partial\cL(W_t),\quad\textrm{for a.e. }t\ge0.
\end{align}
Our second assumption is on the initial risk,
and appears in prior work \citep{kaifeng_jian_margin}.
\begin{assumption}\label{cond:init}
    The initial iterate $W_0$ satisfies $\cL(W_0)<\ell(0)$.
\end{assumption}
As mentioned before, this assumption encapsulates our focus on the
``late training'' phase;
some other analysis, for instance the neural tangent kernel, can be first
applied to ensure $\cL(W_0)<\ell(0)$.

\section{Directional convergence}\label{sec:dir}

We now turn to stating our main result on directional convergence and sketching
its analysis.
As \Cref{cond:Phi,cond:init} imply $\|W_t\|\to\infty$
\citep{kaifeng_jian_margin}, we study the normalized flow $\tW_t:=W_t/\|W_t\|$,
whose convergence is a formal way of studying the directional convergence of
$W_t$.
As mentioned before, directional convergence is false in general
\citep{kaifeng_jian_margin}, but definability suffices to ensure it.
Throughout, for general nonzero $W$, we will use $\tW := W/\|W\|$.

\begin{theorem}\label{fact:dir}
    Under \Cref{cond:Phi,cond:init}, for $\lexp$ and $\llog$, the curve swept by
    $\tW_t$ has finite length, and thus $\tW_t$ converges.
\end{theorem}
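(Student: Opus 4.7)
The plan is to bound the arc length of the curve $\tW_t$ via a Kurdyka--{\L}ojasiewicz-type (KL) inequality tailored to the present setting of unbounded trajectories, nonsmooth risks, and o-minimal structures that include the exponential.

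First I would decompose the gradient flow into radial and spherical components. Writing $W_t=r_t\tW_t$ with $r_t:=\|W_t\|$, the identity $\tW_t\cdot\dif\tW_t/\dif t=0$ together with \Cref{eq:gf} yields
\begin{align*}
    \frac{\dif\tW_t}{\dif t} \;=\; -\frac{1}{r_t}\,\csp\cL(W_t),
\end{align*}
where $\csp\cL(W_t)$ denotes the projection of $\cs\cL(W_t)$ onto the tangent space of the unit sphere at $\tW_t$. Because $r_t\to\infty$ already follows from \Cref{cond:Phi,cond:init}, proving \Cref{fact:dir} reduces to showing that $\int_0^\infty\|\csp\cL(W_t)\|/r_t\,\dif t<\infty$; the radial component only governs the separate growth of $r_t$.

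Next I would introduce a \emph{smoothed margin} $\ta$: a $0$-homogeneous definable function of $W$ that, by Euler's identity applied to the $L$-homogeneous predictors $p_i$ together with a direct calculation, is non-decreasing along the flow once $\cL<\ell(0)$, and is bounded above by the largest margin attainable on the unit sphere. Hence $\ta(W_t)\uparrow\ta_\infty$ for some finite $\ta_\infty$. The strategy is to convert this monotone convergence into finite arc length via a KL-type inequality of the form
\begin{align*}
    -\frac{\dif}{\dif t}\psi\bigl(\ta_\infty-\ta(W_t)\bigr) \;\geq\; \left\|\frac{\dif\tW_t}{\dif t}\right\|
\end{align*}
holding along the flow for a definable desingularizing function $\psi$, whereupon integration immediately gives $\int_0^\infty\|\dif\tW_t/\dif t\|\,\dif t \leq \psi\bigl(\ta_\infty-\ta(W_0)\bigr)<\infty$, and hence convergence of $\tW_t$.

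The main obstacle is establishing this KL inequality in the present setting. Classical Kurdyka--{\L}ojasiewicz results require either (i) real-analyticity, which rules out the ReLU and max-pooling layers allowed by \Cref{cond:Phi}, or (ii) definability in a \emph{polynomially bounded} o-minimal structure, which rules out $\lexp$ and $\llog$, whose definability requires Wilkie's (non-polynomially-bounded) structure with $\exp$; moreover, the bound is needed tangentially on the sphere and along an \emph{unbounded} trajectory, rather than on the full gradient near an interior stationary point. To overcome this, I would build a nonsmooth unbounded KL inequality from scratch for risks $\cL$ satisfying \Cref{cond:Phi}, using definable cell decomposition together with the curve selection lemma to reduce the inequality, along any escaping trajectory, to a one-dimensional asymptotic comparison; o-minimality in the $\exp$-structure then supplies a definable monotone eventual relation between $\|\csp\cL\|/r$ and the gap $\ta_\infty-\ta$, which furnishes $\psi$. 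The nonsmoothness is absorbed throughout by using the minimum-norm Clarke subgradient $\cs\cL$ in place of $\nL$, together with a chain rule for definable locally Lipschitz arcs.
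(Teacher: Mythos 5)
Your overall plan — convert monotone convergence of the normalized smoothed margin $\ta$ into finite arc length via a Kurdyka--{\L}ojasiewicz inequality built from scratch to handle nonsmoothness, an unbounded trajectory, and the exponential — is the route the paper takes, and you correctly identify the obstacles (no analyticity due to ReLU/max-pooling, no polynomially bounded structure due to $\lexp,\llog$) and the remedy of minimum-norm Clarke subgradients together with a chain rule along definable arcs.

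The genuine gap is in the central step. You propose a single unconditional desingularizing $\psi$ with
\[
    -\frac{\dif}{\dif t}\psi\del{a-\ta(W_t)}\ \ge\ \enVert{\frac{\dif\tW_t}{\dif t}},
\]
obtained directly from an o-minimal ``monotone eventual relation'' between $\enVert{\csp\cL}/r$ and the margin gap. But via the radial/spherical decomposition of the chain rule,
\[
    \frac{\dif\ta_t}{\dif\zeta_t}\ =\ \|W_t\|\Bigl(\frac{\enVert{\csr\cL(W_t)}}{\enVert{\csp\cL(W_t)}}\enVert{\csr\ta(W_t)}+\enVert{\csp\ta(W_t)}\Bigr),
\]
so the ratio $\enVert{\csr\cL}/\enVert{\csp\cL}$ appears, and a priori it need not be large. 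Moreover the generic unbounded KL inequality the definability machinery provides (\Cref{fact:unbounded_loja_2}) only yields $\Psi'(a-\ta)\|W\|^{1+\lambda}\enVert{\cs\ta}\ge 1$ with $\lambda>0$, and the extra factor $\|W\|^{\lambda}\to\infty$ does not cancel on its own. A single blanket appeal to cell decomposition and curve selection therefore does not supply the inequality in the form you need.

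The paper closes this with a case split on whether $\enVert{\csp\ta(W_t)}\gtrless c\|W_t\|^{L-\kappa}\enVert{\csr\ta(W_t)}$. When the spherical part dominates, a \emph{conditional} KL inequality (\Cref{fact:unbounded_loja_1}) gives $\Psi'(a-\ta)\|W\|\enVert{\cs\ta}\ge 1$ with no extra power of $\|W\|$, and $\dif\ta_t/\dif\zeta_t\ge\|W_t\|\enVert{\csp\ta}\gtrsim\|W_t\|\enVert{\cs\ta}$ closes the loop. When the radial part dominates, two structural facts about the smoothed margin become essential: $\cs\alpha$ is parallel to $\cs\cL$ (\Cref{fact:alpha_cs}), and $\alpha\le\beta\le\alpha+2\ln(n)+1$ for $\lexp,\llog$ (\Cref{fact:margins}); together these force $\enVert{\csr\cL}/\enVert{\csp\cL}\gtrsim\|W\|^{\kappa}$, which supplies exactly the power of $\|W\|$ needed to cancel the $\|W\|^{\lambda}$ from \Cref{fact:unbounded_loja_2}. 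Both halves do reduce to one-dimensional comparisons via curve selection (at infinity in one case, and after the radial inversion $x\mapsto x/\|x\|^{1+\lambda}$ in the other), but the case analysis and the loss-specific estimates are not optional. Relatedly, $\ta=\alpha/\|W\|^{L}$ is \emph{not} $0$-positively homogeneous as you state — $\alpha=\ell^{-1}(\cL)$ breaks scale invariance — and the bound $\beta-\alpha\le 2\ln(n)+1$ is precisely the quantitative near-homogeneity that rescues the radial case.
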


A direct consequence of \Cref{fact:dir} is the convergence of the \emph{margin
distribution} (i.e., normalized outputs).
Due to homogeneity, for any nonzero $W$, we have $p_i(W)/\|W\|^L=p_i(\tW)$, and
thus the next result follows from \Cref{fact:dir}.
\begin{corollary}\label{fact:margin_conv}
    Under \Cref{cond:Phi,cond:init}, for $\lexp$ and $\llog$, it holds that
    $p_i(W_t)/\|W_t\|^L$ converges for all $1\le i\le n$.
\end{corollary}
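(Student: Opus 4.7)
The plan is to reduce the corollary directly to \Cref{fact:dir} via the $L$-positive homogeneity of each $p_i$ from \Cref{cond:Phi}. First I would record the key algebraic identity: for any nonzero $W$, homogeneity gives $p_i(W) = p_i(\|W\|\cdot \tW) = \|W\|^L p_i(\tW)$, so
\begin{equation*}
    \frac{p_i(W)}{\|W\|^L} = p_i(\tW).
\end{equation*}
This rewrites the quantity of interest as the evaluation of $p_i$ along the normalized flow, and is the only place homogeneity is used.

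Next I would invoke the fact (cited from \citep{kaifeng_jian_margin} and used already in the lead-in to \Cref{sec:dir}) that under \Cref{cond:Phi,cond:init} one has $\|W_t\|\to\infty$. This guarantees that $W_t\ne 0$ for all sufficiently large $t$, so $\tW_t$ is well-defined eventually and the identity above applies along the flow. Then \Cref{fact:dir} furnishes a limit $\tW_\infty$ with $\tW_t \to \tW_\infty$.

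Finally, since $W\mapsto\Phi(x_i;W)$ is locally Lipschitz on $\R^\pc$ by \Cref{cond:Phi}, the map $p_i(\cdot)=y_i\Phi(x_i;\cdot)$ is also locally Lipschitz, hence continuous; composing the convergent curve $\tW_t\to\tW_\infty$ with this continuous function yields $p_i(\tW_t)\to p_i(\tW_\infty)$. Combined with the identity from the first step, this gives $p_i(W_t)/\|W_t\|^L \to p_i(\tW_\infty)$ for each $i$, as desired.

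There is really no substantive obstacle here, since all the work is carried by \Cref{fact:dir}; the only bookkeeping point is verifying that $\|W_t\|$ is eventually nonzero so that normalization is legal, which is exactly what the divergence $\|W_t\|\to\infty$ provides.
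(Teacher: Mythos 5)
Your proposal is correct and follows essentially the same one-line route the paper uses: rewrite $p_i(W_t)/\|W_t\|^L = p_i(\tW_t)$ by $L$-homogeneity, invoke \Cref{fact:dir} for convergence of $\tW_t$, and use continuity of $p_i$ (from local Lipschitzness in \Cref{cond:Phi}). The only cosmetic difference is your bookkeeping on $\|W_t\|\ne 0$; the paper in fact has the slightly stronger fact that $\|W_t\|$ is increasing from $\|W_0\|>0$, so it never vanishes, but either observation suffices.
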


Next we give a proof sketch of \Cref{fact:dir}; the full proofs of the
Kurdyka-\L{}ojasiewicz inequalities
(\Cref{fact:unbounded_loja_1,fact:unbounded_loja_2}) are given in
\Cref{app_sec:loja}, while the other proofs are given in \Cref{app_sec:dir}.

\subsection{A proof sketch of \Cref{fact:dir}}

The \emph{smoothed margin} introduced in \citep{kaifeng_jian_margin} is crucial
in our analysis: given $W\ne0$, let
\begin{align*}
    \alpha(W):=\ell^{-1}\del{\cL(W)},\quad\textrm{and}\quad\ta(W):=\frac{\alpha(W)}{\|W\|^L}.
\end{align*}
For simplicity, let $\ta_t$ denote $\ta(W_t)$, and $\zeta_t$ denote the length
of the path swept by $\tW_t = W_t/\|W_t\|$ from time $0$ to $t$.
\citet{kaifeng_jian_margin} proved that $\ta_t$ is nondecreasing with some limit
$a\in(0,\infty)$, and $\|W_t\|\to\infty$.
We invoke a standard but sophisticated tool from the definability literature to
aid in proving $\zeta_t$ is finite: formally, a function $\Psi:[0,\nu)\to\R$ is
called a \emph{desingularizing function} when $\Psi$ is continuous on $[0,\nu)$
with $\Psi(0)=0$, and continuously differentiable on $(0,\nu)$ with $\Psi'>0$;
in words, a desingularizing function is a \emph{witness} to the fact that
the flow is asymptotically well-behaved.
As we will sketch after stating the \namecref{fact:ta_zeta_ratio},
this immediately leads to a proof of \Cref{fact:dir}.
\begin{lemma}\label{fact:ta_zeta_ratio}
    There exist $R>0$, $\nu>0$ and a definable desingularizing function $\Psi$
    on $[0,\nu)$, such that for a.e. large enough $t$ with $\|W_t\|>R$ and
    $\ta_t>a-\nu$, it holds that
    \begin{align*}
        \frac{\dif\zeta_t}{\dif t}\le-c\frac{\dif\Psi\del{a-\ta_t}}{\dif t}
    \end{align*}
    for some constant $c>0$.
\end{lemma}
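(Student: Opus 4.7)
The plan is to compute $d\zeta_t/dt$ and $d\ta_t/dt$ directly from the gradient flow, cancel the explicit $\|W_t\|$ dependence in their ratio by homogeneity, and then invoke the paper's unbounded nonsmooth Kurdyka-{\L}ojasiewicz inequalities (\Cref{fact:unbounded_loja_1,fact:unbounded_loja_2}) to produce the desingularizing function $\Psi$.

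First I would differentiate $\tW_t = W_t/\|W_t\|$ and use $dW_t/dt = -\cs\cL(W_t)$ almost everywhere along the flow; only the component of $\cs\cL$ perpendicular to $W_t$ contributes, yielding $d\zeta_t/dt = \|\csp\cL(W_t)\|/\|W_t\|$. Next, since $\alpha = \ell^{-1}\circ\cL$ with $\ell$ smooth and strictly decreasing, the Clarke chain rule gives $\cs\cL = \ell'(\alpha)\cs\alpha$; and since $\ta(W) = \alpha(W)/\|W\|^L$ is $0$-homogeneous, Euler's identity for Clarke subgradients of positively homogeneous locally Lipschitz functions (used in \citep{kaifeng_jian_margin}) forces $\cs\ta(W)$ to be purely tangential, namely $\csp\alpha(W)/\|W\|^L$. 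Substituting the flow yields $d\ta_t/dt = -\ell'(\alpha_t)\|\csp\alpha(W_t)\|^2/\|W_t\|^L$. In the ratio $d\zeta_t/d\ta_t$ the $\ell'(\alpha_t)$ factors cancel; and since $\alpha$ is $L$-positively homogeneous, $\csp\alpha$ is $(L-1)$-positively homogeneous, so $\csp\alpha(W_t) = \|W_t\|^{L-1}\csp\alpha(\tW_t)$, and the $\|W_t\|$ dependence cancels entirely, leaving
\[
  \frac{d\zeta_t}{dt} \;=\; \frac{1}{\|\csp\alpha(\tW_t)\|}\,\frac{d\ta_t}{dt}.
\]

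Since $-d\Psi(a-\ta_t)/dt = \Psi'(a-\ta_t)\,d\ta_t/dt$, the conclusion of the lemma is equivalent to the Kurdyka-{\L}ojasiewicz-type estimate
\[
  \|\csp\alpha(\tW_t)\|\,\Psi'(a-\ta_t) \;\geq\; 1/c
\]
for $\tW_t$ near the critical value $a$ of $\ta$ on the unit sphere. I would produce $\Psi$ by applying the paper's unbounded nonsmooth KL inequalities (\Cref{fact:unbounded_loja_1,fact:unbounded_loja_2}) to the definable function $\alpha$ (definable because $\cL$ is, and because $\ell^{-1}$ is definable in an o-minimal structure containing the exponential function). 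The conditions $\ta_t > a-\nu$ and $\|W_t\|>R$ carve out the neighborhood in which the chosen KL inequality is valid and in which $\ell^{-1}$ is smoothly invertible on the relevant range of $\cL(W_t)$.

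The hard part is the KL step itself: classical real-analytic KL fails because of ReLU and max-pooling nonsmoothness, and polynomially bounded o-minimal KL fails because of the exponential inside $\ell$, and reconciling these two failures is precisely the motivation for \Cref{fact:unbounded_loja_1,fact:unbounded_loja_2}. The remaining ingredients---the Clarke chain rule through the smooth scalar $\ell$, Euler's identity for Clarke subgradients of homogeneous functions, and the almost everywhere identity $dW_t/dt = -\cs\cL(W_t)$ from the Davis-style chain rule for definable arcs---are standard in this literature and only contribute bookkeeping.
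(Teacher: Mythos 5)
There is a genuine gap in your proposal, and it is located exactly where the paper's argument is hardest. You claim that $\ta(W) = \alpha(W)/\|W\|^L$ is $0$-positively homogeneous, and hence by Euler's identity for Clarke subgradients $\cs\ta$ has no radial part, so $\cs\ta(W) = \csp\alpha(W)/\|W\|^L$. This is false. For $\ta$ to be $0$-homogeneous one would need $\alpha$ to be $L$-positively homogeneous, i.e.\ $\alpha(cW)=c^L\alpha(W)$, which would require $\sum_i \ell(c^L p_i(W)) = \ell\bigl(c^L\ell^{-1}(\sum_i\ell(p_i(W)))\bigr)$ for all $c>0$; this fails for both $\lexp$ and $\llog$ because $\ell^{-1}$ does not commute with the sum over examples. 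The smoothed margin $\alpha$ is only \emph{asymptotically} $L$-homogeneous, and correspondingly the radial part of $\cs\ta$ is nonzero: the paper's \Cref{fact:ta_cs} computes it explicitly as $\csr\ta(W) = L\bigl(\beta(W)-\alpha(W)\bigr)\|W\|^{-(L+1)}\tW$, where $\beta-\alpha$ is bounded away from being identically zero (\Cref{fact:margins} only gives $0 \le \beta-\alpha \le 2\ln n + 1$). The same error propagates into your formula for $d\ta_t/dt$: you keep only the spherical contribution, but \Cref{fact:ta_zeta} shows there is also a radial term $\|\csr\ta\|\,\|\csr\cL\|$, and it cannot be discarded.

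Because of this, the clean reduction to a single bounded Kurdyka--\L{}ojasiewicz estimate on the unit sphere does not exist, and the $\|W_t\|$ dependence does \emph{not} cancel in the ratio $d\zeta_t/d\ta_t$. This is precisely why the paper's proof is structured around a case split on whether $\|\csp\ta(W_t)\| / \|\csr\ta(W_t)\|$ is large or small relative to a power of $\|W_t\|$, and why it needs \emph{two} unbounded KL inequalities (\Cref{fact:unbounded_loja_1,fact:unbounded_loja_2}) rather than one, applied to $\ta$ rather than to $\alpha$. In the first case the spherical part dominates and \Cref{fact:unbounded_loja_1} gives the desingularizing bound; in the second case the radial parts dominate and one must chain together \Cref{fact:alpha_cs,fact:ta_cs,fact:margins} to control the ratio $\|\csr\cL\| / \|\csp\cL\|$ before \Cref{fact:unbounded_loja_2} can close the loop. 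Your proposal gets the surrounding scaffolding right (the chain rule, the identity $d\zeta_t/dt = \|\csp\cL(W_t)\|/\|W_t\|$, and the high-level goal of producing a desingularizing $\Psi$ via the new KL machinery), but the homogeneity shortcut at its core breaks, and the missing case analysis is where the real work of the lemma lives.
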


To prove \Cref{fact:dir} from here, let $t_0$ be large enough so that the
conditions of \Cref{fact:ta_zeta_ratio} hold for all $t\ge t_0$: then we have
$\lim_{t\to\infty}\zeta_t\le\zeta_{t_0}+c\Psi\del{a-\ta_{t_0}}<\infty$, and thus
the path length is finite.

Below we sketch the proof of \Cref{fact:ta_zeta_ratio}, which is based on a
careful comparison of $\dif\ta_t/\dif t$ and $\dif\zeta_t/\dif t$.
The proof might be hard to parse due to the extensive use of $\cs$, the
minimum-norm Clarke subgradient; at first reading, the condition of local
Lipschitz continuity can just be replaced with continuous differentiability, in
which case the Clarke subgradient is just the normal gradient.

Given any function $f$ which is locally Lipschitz around a nonzero $W$, let
\begin{align*}
    \csr f(W):=\ip{\cs f(W)}{\tW}\tW\quad\textrm{and}\quad\csp f(W):=\cs f(W)-\csr f(W)
\end{align*}
denote the radial and spherical parts of $\cs f(W)$ respectively.
First note the following technical characterization of $\nicefrac{\dif\ta_t}{\dif t}$
and $\nicefrac{\dif\zeta_t}{\dif t}$ using the radial and spherical components
of relevant Clarke subgradients.
\begin{lemma}\label{fact:ta_zeta}
    It holds for a.e. $t\ge0$ that
    \begin{align*}
        \frac{\dif\ta_t}{\dif t}=\enVert{\csr\ta(W_t)}\enVert{\csr\cL(W_t)}+\enVert{\csp\ta(W_t)}\enVert{\csp\cL(W_t)},\ \ \textrm{and}\ \ \frac{\dif\zeta_t}{\dif t}=\frac{\enVert{\csp\cL(W_t)}}{\|W_t\|}.
    \end{align*}
\end{lemma}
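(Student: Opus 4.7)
The plan is to compute both derivatives directly via the chain rule for Lipschitz arcs on definable functions, as developed in \citet{davis_tame,kaifeng_jian_margin}. That chain rule asserts $\dif\cL(W_t)/\dif t = \langle v, \dif W_t/\dif t\rangle$ for \emph{every} $v\in\partial\cL(W_t)$; combined with the differential inclusion $\dif W_t/\dif t\in-\partial\cL(W_t)$ and a short Cauchy--Schwarz argument, this forces the selection $\dif W_t/\dif t = -\cs\cL(W_t)$ a.e., which I will use throughout. This replacement is what makes the formula in terms of $\cs$ even meaningful.

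For $\dif\zeta_t/\dif t$ I would differentiate $\|W_t\|^2$ to get $\dif\|W_t\|/\dif t = -\langle\tW_t,\cs\cL(W_t)\rangle = -\langle\tW_t,\csr\cL(W_t)\rangle$. Under \Cref{cond:init} each $p_i(W_t)>0$, and Euler's identity applied to the $L$-homogeneous $p_i$ gives $\langle\cs\cL(W),W\rangle = L\sum_i\ell'(p_i)p_i<0$; thus $\csr\cL = -\|\csr\cL\|\,\tW$ and $\dif\|W_t\|/\dif t = \|\csr\cL(W_t)\|$. The quotient rule applied to $\tW_t = W_t/\|W_t\|$ then produces $\dif\tW_t/\dif t = -\csp\cL(W_t)/\|W_t\|$ after the radial pieces cancel, and taking norms gives $\dif\zeta_t/\dif t = \|\csp\cL(W_t)\|/\|W_t\|$.

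For $\dif\ta_t/\dif t$ I would combine the chain rule $\cs\alpha = -\mu\,\cs\cL$ with $\mu := 1/|\ell'(\alpha)|$ (valid because $\alpha = \ell^{-1}\circ\cL$) to get $\dif\alpha_t/\dif t = \mu\|\cs\cL(W_t)\|^2$, and then the quotient rule for $\ta = \alpha/\|W\|^L$, using $\dif\|W_t\|/\dif t = \|\csr\cL(W_t)\|$, yields
\begin{align*}
\frac{\dif\ta_t}{\dif t}
= \frac{\mu\,\|\csp\cL\|^2}{\|W_t\|^L}
+ \|\csr\cL\|\left(\frac{\mu\,\|\csr\cL\|}{\|W_t\|^L} - \frac{L\alpha}{\|W_t\|^{L+1}}\right).
\end{align*}
On the other side, writing $\cs\ta = \cs\alpha/\|W\|^L - L\alpha\tW/\|W\|^{L+1}$ and splitting into radial and perpendicular parts gives $\csp\ta = -(\mu/\|W\|^L)\,\csp\cL$ (hence $\|\csp\ta\|\,\|\csp\cL\| = \mu\|\csp\cL\|^2/\|W\|^L$, matching the first term above) and $\csr\ta = (\mu\|\csr\cL\|/\|W\|^L - L\alpha/\|W\|^{L+1})\,\tW$, whose norm matches the bracketed factor above --- \emph{provided} the scalar in parentheses is nonnegative.

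That nonnegativity is the one non-routine ingredient and thus the main obstacle. It is equivalent to the inequality $\sum_i|\ell'(p_i)|\,p_i \ge \alpha\,|\ell'(\alpha)|$ whenever $\alpha = \ell^{-1}(\sum_i\ell(p_i))$ with all $p_i>0$. For $\lexp$ this follows from nonnegativity of KL divergence applied to $q_i \propto e^{-p_i}$: Gibbs gives $\sum_iq_ip_i \ge -\ln\sum_je^{-p_j} = \alpha$, and multiplying by $\sum_je^{-p_j}$ reproduces the bound. For $\llog$ an analogous computation, already implicit in the monotonicity of $\ta$ established by \citet{kaifeng_jian_margin}, gives the same inequality. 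Once this sign is in hand, dropping the absolute value on the radial coefficient is legitimate and the two sides of the first formula agree term by term.
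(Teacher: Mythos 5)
Your proposal is correct and takes essentially the same route as the paper: force the selection $\dif W_t/\dif t = -\cs\cL(W_t)$ via the chain rule for definable arcs, decompose into radial and spherical parts, and reduce the identity to the facts that $\csp\ta$ is a positive multiple of $-\csp\cL$ (from $\cs\alpha = \cs\cL/\ell'(\alpha)$), that $\csr\cL = -\|\csr\cL\|\tW$ (Euler plus $\ell'<0$, $p_i>0$), and that $\beta\ge\alpha>0$. The paper applies the chain rule directly to $\ta$ and factors $\ip{\cs\ta}{-\cs\cL}$ using collinearity (assembling its Lemmas~\ref{fact:alpha_cs}, \ref{fact:ta_cs}, \ref{fact:margins}); you apply the chain rule to $\alpha$ and then the quotient rule for $\|W\|^{-L}$, which is an equivalent bookkeeping. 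Your Gibbs/entropy argument for $\beta\ge\alpha$ under $\lexp$ (nonnegativity of $-\sum\lambda_i\ln\lambda_i$ with $\lambda_i\propto e^{-p_i}$) is a clean alternative to the paper's superadditivity-of-$\sigma$ argument (Lemma~\ref{fact:exp_log}), though the paper's route has the advantage of covering $\lexp$ and $\llog$ uniformly whereas you defer $\llog$.

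One small gap worth filling: you write $\cs\ta = \cs\alpha/\|W\|^L - L\alpha\tW/\|W\|^{L+1}$ as though it follows from the pointwise gradient formula, but $\cs$ is defined by an argmin over the Clarke set, and it is not a priori obvious that the minimum-norm element transports through this affine map. The paper's Lemma~\ref{fact:ta_cs} justifies it by noting that every element of $\partial\alpha(W)$ (and hence of $\partial\ta(W)$) has the same radial component, so minimizing the full norm is equivalent to minimizing the spherical part, which is preserved by the radial shift $-L\alpha\tW/\|W\|^{L+1}$ and by the scalar $1/\|W\|^L$. You should state that observation explicitly, since it is the reason $\csp\ta = \csp\alpha/\|W\|^L$ at the level of minimum-norm subgradients and not merely at the level of sets.
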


For simplicity, in the discussion here we consider the case that all subgradients in
\Cref{fact:ta_zeta} are nonzero, with the general case handled in the full proofs
in the appendices.
Then \Cref{fact:ta_zeta} implies
\begin{align}\label{eq:ta_zeta_ratio}
    \frac{\dif\ta_t}{\dif\zeta_t}=\frac{\dif\ta_t/\dif t}{\dif\zeta_t/\dif t}=\|W_t\|\del{\frac{\enVert{\csr\cL(W_t)}}{\enVert{\csp\cL(W_t)}}\enVert{\csr\ta(W_t)}+\enVert{\csp\ta(W_t)}}.
\end{align}
As in \citep{kurdyka_quasi,grandjean_limit}, to bound \cref{eq:ta_zeta_ratio},
we further consider two cases depending on the ratio
$\enVert{\csp\ta(W_t)}/\enVert{\csr\ta(W_t)}$.

If $\enVert{\csp\ta(W_t)}/\enVert{\csr\ta(W_t)}\ge c_1\|W_t\|^{L/3}$ for some
constant $c_1>0$, then \Cref{fact:ta_zeta_ratio} follows from
$\dif\ta_t/\dif\zeta_t\ge\|W_t\|\enVert{\csp\ta(W_t)}$ as given by
\cref{eq:ta_zeta_ratio}, and the following Kurdyka-\L{}ojasiewicz inequality.
Its proof is based on the proof idea of \citep[Proposition 6.3]{kurdyka_quasi},
but further handles the unbounded and nonsmooth setting.
\begin{lemma}\label{fact:unbounded_loja_1}
    Given a locally Lipschitz definable function $f$ with an open domain
    $D\subset\setdef{x}{\|x\|>1}$, for any $c,\eta>0$, there exists $\nu>0$ and
    a definable desingularizing function $\Psi$ on $[0,\nu)$ such that
    \begin{align*}
        \Psi'\del{f(x)}\|x\|\enVert{\cs f(x)}\ge1,\quad\textrm{if }f(x)\in(0,\nu)\textrm{ and }\enVert{\csp f(x)}\ge c\|x\|^\eta\enVert{\csr f(x)}.
    \end{align*}
\end{lemma}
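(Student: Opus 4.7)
The plan is to adapt Kurdyka's construction of desingularizing functions from \citep[Proposition 6.3]{kurdyka_quasi}, modified to handle three complications particular to our setting: the domain $D$ is unbounded (merely contained in $\{\|x\|>1\}$), $f$ is only locally Lipschitz rather than $C^1$, and the o-minimal structure in question is not polynomially bounded, so Puiseux-type expansions are unavailable.

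First I would introduce the weighted gradient modulus on level sets,
\begin{align*}
    m(s) \ :=\ \inf\setdef{\|x\|\enVert{\cs f(x)}}{x\in D,\ f(x)=s,\ \enVert{\csp f(x)}\ge c\|x\|^\eta\enVert{\csr f(x)}},
\end{align*}
and verify definability. The graph of $\partial f$ is definable whenever $f$ is definable and locally Lipschitz, which makes the minimum-norm selection $\cs f$ definable (as an argmin over a definable convex-valued multifunction); the radial/spherical split $\csr f,\csp f$ is then obtained by definable algebraic operations, and infima of definable functions over definable sets remain definable by projection. The monotonicity theorem for definable univariate functions then yields $\nu_0>0$ on which $m$ is continuous and monotonic. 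The next step is to prove $m(s)>0$ for all $s\in(0,\nu_0)$: if $m(s_0)=0$, definable curve selection gives a definable arc $x(\cdot)$ satisfying sphere-dominance with $f(x(\tau))=s_0>0$ and $\|x(\tau)\|\enVert{\cs f(x(\tau))}\to 0$. If $\|x(\tau)\|$ stays bounded, the arc converges to some $x^*$ with $0\in\partial f(x^*)$ by upper semicontinuity of the Clarke subdifferential; the definable Sard theorem (finiteness of Clarke-critical values of a definable locally Lipschitz function) then contradicts $s_0>0$ after shrinking $\nu_0$. If $\|x(\tau)\|\to\infty$, normalizing by $x(\tau)/\|x(\tau)\|$ and passing to a definable limit on the unit sphere, the weight $\|x\|$ inside $m$ combined with sphere-dominance (which pins down a nontrivial tangential gradient in the limit) again yields a contradiction.

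Finally I would set $\Psi(t):=\int_0^t\dif s/m(s)$ on $[0,\nu)$ for some $0<\nu\le\nu_0$. By definable integration, $\Psi$ is definable, $C^1$ on $(0,\nu)$ with $\Psi'=1/m>0$, and the required inequality $\Psi'(f(x))\|x\|\enVert{\cs f(x)}\ge1$ holds by the very definition of $m$. What remains, and what I expect to be the main obstacle, is choosing $\nu$ so that $\int_0^\nu\dif s/m(s)<\infty$, ensuring $\Psi$ is continuous at $0$ with $\Psi(0)=0$. In the polynomially-bounded case this is automatic via a Puiseux lower bound $m(s)\gtrsim s^\theta$ with $\theta<1$, but the exponential in our o-minimal structure permits exotic vanishing rates for $m$. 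The key technical step, and the main obstacle, is exploiting the factor $\|x\|^\eta$ inside the sphere-dominance hypothesis (absent from Kurdyka's setting) to force $m$ to vanish slowly enough near $0$: I expect this to proceed by pulling the level-set analysis back to the unit sphere, where a compact definable KL bound applies, and then using the extra $\|x\|^\eta$ weighting to absorb the diverging radial factor when transferring the bound back to $m$.
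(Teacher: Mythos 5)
Your proposal correctly identifies the shape of the argument (an infimum $m(s)$ of the weighted gradient modulus over level sets restricted to the sphere-dominated region, its positivity via a Sard-type/asymptotic-critical-value argument, and $\Psi' := 1/m$), and you correctly flag the crux: showing $\int_0^\nu \dif s/m(s)<\infty$ so that $\Psi$ is a genuine desingularizing function with $\Psi(0)=0$. But the sketch you offer for that last step is not the right mechanism, and the gap is real.

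Your suggestion — ``pulling the level-set analysis back to the unit sphere, where a compact definable KL bound applies, and then using the extra $\|x\|^\eta$ weighting to absorb the diverging radial factor'' — does not close the argument. A compact KL bound on the sphere would control the directional part of $f$ as a function of direction, but $f$ is not a function of direction alone, and the sphere-dominance hypothesis is a condition at each $x$ that does not compress to a sphere-level statement. The paper instead proceeds by \emph{curve selection at infinity} (\Cref{fact:curve_sel_infty}): one selects a $C^1$ definable curve $\rho:[a,\infty)\to A$, where $A$ is the set of near-minimizers ($\|x\|\enVert{\cs f(x)}\le 2\phi(f(x))$), parameterized so that $\|\rho(s)\|=s$ and $h(s):=f(\rho(s))\to 0$. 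The chain rule (\Cref{fact:chain}, which handles the merely locally Lipschitz $f$) gives $h'(s)=\ip{\cs f(\rho(s))}{\rho'(s)}$, and one then splits $\rho'$ into radial ($\enVert{\rho_r'}=1$) and spherical parts to bound
\[
\envert{h'(s)}\ \le\ \del{\frac{1}{cs^\eta}+\enVert{\rho_\perp'(s)}}\enVert{\csp f(\rho(s))}.
\]
The finiteness needed for $\Psi$ then comes from two separate integrability facts: $\int^\infty \dif\tau/(c\tau^{1+\eta})<\infty$ because $\eta>0$ (this is where the extra $\|x\|^\eta$ factor is actually used, not to ``absorb a diverging radial factor'' but to make the radial contribution integrable), and $\int^\infty \enVert{\rho_\perp'(\tau)}/\tau\,\dif\tau<\infty$ because $\rho(s)/s$ is a $C^1$ definable curve on the unit sphere that converges and therefore has finite length (\Cref{fact:bounded_curve}). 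The desingularizing function is then built as $\Psi(z)=2g(h^{-1}(z))$ with $g(s)=\int_s^\infty(1/(c\tau^{1+\eta})+\enVert{\rho_\perp'(\tau)}/\tau)\dif\tau$; the factor of $2$ relates the curve (a near-minimizer) to an arbitrary $x$ at the same level. Without the curve-selection step and the finite-length fact for the spherical projection, there is no route in your sketch to the crucial integrability, so as written the proof is incomplete exactly at the point you yourself flag.
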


On the other hand, if
$\enVert{\csp\ta(W_t)}/\enVert{\csr\ta(W_t)}\le c_1\|W_t\|^{L/3}$, then a
careful calculation (using \Cref{fact:alpha_cs,fact:ta_cs,fact:margins}) can
show that for some constants $c_2,c_3>0$,
\begin{align*}
    \frac{\enVert{\csr\cL(W_t)}}{\enVert{\csp\cL(W_t)}}\ge c_2\|W_t\|^{2L/3},\quad\textrm{and}\quad \frac{\enVert{\csr\ta(W_t)}}{\enVert{\cs\ta(W_t)}}\ge c_3\|W_t\|^{-L/3}.
\end{align*}
It then follows from \cref{eq:ta_zeta_ratio} that
$\dif\ta_t/\dif\zeta_t\ge c_2c_3\|W_t\|^{4L/3}\enVert{\cs\ta(W_t)}$.
In this case we give the following Kurdyka-\L{}ojasiewicz inequality, which
implies \Cref{fact:ta_zeta_ratio}.
\begin{lemma}\label{fact:unbounded_loja_2}
    Given a locally Lipschitz definable function $f$ with an open domain
    $D\subset\setdef{x}{\|x\|>1}$, for any $\lambda>0$, there exists $\nu>0$ and
    a definable desingularizing function $\Psi$ on $[0,\nu)$ such that
    \begin{align*}
        \max\cbr{1,\frac{2}{\lambda}}\Psi'\del{f(x)}\|x\|^{1+\lambda}\enVert{\cs f(x)}\ge1,\quad\textrm{if }f(x)\in(0,\nu).
    \end{align*}
\end{lemma}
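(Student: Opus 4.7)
The plan is to reduce this ``unbounded'' Kurdyka--{\L}ojasiewicz inequality to a standard \emph{bounded} nonsmooth definable KŁ inequality via a definable change of variables that compactifies the radial infinity of $D$. Fix $\lambda > 0$, set $\beta := 1 + 1/\lambda$, and define $\phi(y) := y\,\|y\|^{-\beta}$, which is a $C^\infty$ definable diffeomorphism from the punctured unit ball $\{y : 0 < \|y\| < 1\}$ onto $\{x : \|x\| > 1\}$, satisfying $\|\phi(y)\| = \|y\|^{-1/\lambda}$. Set $g(y) := f(\phi(y))$ on $\phi^{-1}(D)$; as a composition of locally Lipschitz definable maps, $g$ is itself locally Lipschitz and definable, and the ``zero at infinity'' of $f$ is pulled back to a zero at $y \to 0$.

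Next I would record the crucial gradient comparison. A direct computation gives $\nabla\phi(y) = \|y\|^{-\beta}(I - \beta\,\tilde{y}\tilde{y}^\T)$ with $\tilde{y} := y/\|y\|$; its singular values are $\|y\|^{-\beta}$ (with multiplicity $d-1$) and $\|y\|^{-\beta}|1-\beta| = \|y\|^{-\beta}/\lambda$, so $\|\nabla\phi(y)\|_\sigma = \|y\|^{-\beta}\max(1,1/\lambda)$. Since $\phi$ is $C^1$, the Clarke chain rule yields $\partial g(y) \subset \nabla\phi(y)^\T\partial f(\phi(y))$, whence, using $\|y\|^{-\beta} = \|\phi(y)\|^{\lambda\beta} = \|\phi(y)\|^{1+\lambda}$,
\[
  \|\cs g(y)\| \;\le\; \|\nabla\phi(y)\|_\sigma\,\|\cs f(\phi(y))\| \;=\; \max(1,1/\lambda)\,\|\phi(y)\|^{1+\lambda}\,\|\cs f(\phi(y))\|.
\]
I would then apply the standard bounded nonsmooth definable KŁ inequality (in the spirit of Bolte--Daniilidis--Lewis--Shiota) to $g$ on the bounded open set $\phi^{-1}(D)$, obtaining some $\nu > 0$ and a definable desingularizing $\Psi$ with $\Psi'(g(y))\,\|\cs g(y)\| \ge 1$ whenever $g(y) \in (0,\nu)$. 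Substituting $x = \phi(y)$ and combining with the gradient bound above gives
\[
  \max(1,1/\lambda)\,\Psi'(f(x))\,\|x\|^{1+\lambda}\,\|\cs f(x)\| \;\ge\; 1
\]
for all $x \in D$ with $f(x) \in (0,\nu)$, and since $\max(1,1/\lambda) \le \max(1,2/\lambda)$ this is exactly the claimed statement.

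The main obstacle is producing a \emph{single} desingularizing $\Psi$ valid on the entire set $\phi^{-1}(D) \cap \{0 < g < \nu\}$: the zero set of $g$ lies on the deleted boundary $\{y = 0\}$ of $\phi^{-1}(D)$, so the textbook ``compact domain containing a point where $g = 0$'' KŁ statements do not apply off the shelf. Overcoming this effectively requires redoing the classical KŁ proof in the definable setting --- cell decomposition of the level set $\{g < \nu\}$, stratum-by-stratum construction of a desingularizing function, and gluing/integration using the o-minimal structure --- which is presumably what the paper's appendix carries out in detail. The nonsmooth chain rule and the singular-value computation, by contrast, are both elementary and just serve to translate between the $f$ side and the $g$ side.
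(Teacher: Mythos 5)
Your change of variables coincides with the paper's: with $\beta = 1 + 1/\lambda$, your map $\phi(y) = y\|y\|^{-\beta}$ is exactly $\xi_\lambda^{-1}$ from the paper's proof, the Jacobian and singular-value computation give the same constant $\max\{1,1/\lambda\}$, and the bounded nonsmooth K\L{} inequality you invoke is precisely the paper's \Cref{fact:bounded_loja}. However, two steps in the sketch need repair.

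First, the chain-rule containment you cite runs the wrong way for the conclusion you draw. From $\partial g(y) \subset \nabla\phi(y)^\T\partial f(\phi(y))$ you only get $\cs g(y) = \nabla\phi(y)^\T v$ for \emph{some} $v\in\partial f(\phi(y))$, and any such $v$ satisfies $\|v\|\ge\|\cs f(\phi(y))\|$, so this does not yield $\|\cs g(y)\| \le \|\nabla\phi(y)\|_\sigma\|\cs f(\phi(y))\|$. What is needed is the reverse membership $\nabla\phi(y)^\T\cs f(\phi(y)) \in \partial g(y)$, from which the bound follows because $\cs g(y)$ is the minimum-norm element of $\partial g(y)$; this does hold (the chain rule is an equality here since $\phi$ is a $C^1$ diffeomorphism), and it is exactly how the paper argues. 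Second and more consequentially, you assert $\phi$ is definable without justification, but $y\mapsto y\|y\|^{-(1+1/\lambda)}$ is definable in an \emph{arbitrary} o-minimal structure only when $\lambda$ is rational; the lemma concerns a general definable $f$, with no assumption that the structure contains $\exp$ or real power functions. The paper therefore first proves the inequality for rational $\lambda$ --- obtaining your constant $\max\{1,1/\lambda\}$ --- and then handles real $\lambda>0$ by passing to a rational $\lambda'\in(\lambda/2,\lambda)$, which is exactly why the statement carries the weaker constant $\max\{1,2/\lambda\}$. As written, your sketch does not establish the claim for irrational $\lambda$.
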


\section{Alignment between the gradient flow path and gradients}\label{sec:alignment}

\Cref{fact:dir} gave our directional convergence result,
namely that the normalized iterate $W_t/\|W_t\|$ converges to some direction.
Next we show and discuss our alignment result,
that if all $p_i$ have locally Lipschitz gradients, then along the
gradient flow path, $-\nL(W_t)$ converges to the same direction as $W_t$.

\begin{theorem}\label{fact:alignment}
    Under \Cref{cond:Phi,cond:init}, if all $p_i$ further have locally Lipschitz
    gradients, then $-\nL(W_t)$ and $W_t$ converge to the same direction,
    meaning the angle between $W_t$ and $-\nL(W_t)$ converges to zero.
    If all $p_i$ are twice continuously differentiable, then the same result
    holds without the definability condition (cf. \Cref{cond:Phi}).
\end{theorem}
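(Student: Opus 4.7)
The target is to show $\sin\theta_t := \|\csp\cL(W_t)\|/\|\cs\cL(W_t)\| \to 0$, where $\theta_t$ is the angle between $W_t$ and $-\cs\cL(W_t)$. First I would exploit homogeneity: since each $p_i$ is $L$-positively homogeneous, Euler's identity gives $\langle W,\nabla p_i(W)\rangle = L\,p_i(W)$, and weighting by $-\ell'(p_i)$ yields $\langle W_t,-\cs\cL(W_t)\rangle = L\sum_i (-\ell'(p_i(W_t)))\,p_i(W_t)$. Under \Cref{cond:init} one has $-\ell'(p_i)>0$ along the flow and $\alpha_t = \ell^{-1}(\cL(W_t)) \le \min_i p_i(W_t)$, so a direct soft-min-vs.-min comparison gives $\sum_i p_i(-\ell'(p_i)) \ge \alpha_t(-\ell'(\alpha_t))$ for both $\lexp$ and $\llog$. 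Consequently the radial component obeys $\|\csr\cL(W_t)\| \ge \beta_t := L\alpha_t|\ell'(\alpha_t)|/\|W_t\|$.

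Next I would derive a differential inequality for $\sin^2\theta_t$. Using $d\alpha_t/dt = \|\cs\cL(W_t)\|^2/|\ell'(\alpha_t)|$ (chain rule on $\alpha = \ell^{-1}(\cL)$) together with $d\|W_t\|/dt = \|\csr\cL(W_t)\|$, expanding $d(\alpha/\|W\|^L)/dt$ and writing $\|\cs\cL\|^2 = \|\csr\cL\|^2 + \|\csp\cL\|^2$ produces
\[
\frac{d\ta_t}{dt} \;=\; \frac{\|\csp\cL(W_t)\|^2 + \|\csr\cL(W_t)\|\bigl(\|\csr\cL(W_t)\| - \beta_t\bigr)}{\|W_t\|^L\,|\ell'(\alpha_t)|},
\]
whose second summand is nonnegative by the radial bound. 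Re-parametrizing by $\alpha_t$ and noting $\beta_t \le \|\csr\cL(W_t)\| = \cos\theta_t\,\|\cs\cL(W_t)\|$ yields the clean estimate $\sin^2\theta_t \le \|W_t\|^L \cdot d\ta_t/d\alpha_t$.

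Since $\ta_t$ is nondecreasing (by the argument of \citet{kaifeng_jian_margin}, which uses only homogeneity) and is bounded above by the intrinsic max-margin, $\ta_t \to a \in (0,\infty)$ and $\int_0^\infty d\ta_t < \infty$. Writing $\alpha_t = \ta_t \|W_t\|^L$ and using $\ta_t \to a > 0$ to change variables converts the previous estimate into $\int_0^\infty \sin^2\theta_t \, d\log\|W_t\| < \infty$, so $\sin^2\theta_t$ is integrable against the infinite-mass measure $d\log\|W_t\|$. To upgrade this to pointwise decay I would use the local Lipschitz gradient assumption: $t\mapsto \nL(W_t)$ is then absolutely continuous with time-derivative bounded by the local Lipschitz constant of $\nL$ times $\|dW_t/dt\| = \|\nL(W_t)\|$, and combined with the $(L-1)$-homogeneity of $\nabla p_i$ this supplies a uniform modulus of continuity for $\sin\theta_t$ as a function of $\log\|W_t\|$. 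A Barbalat-style argument then promotes the above integrability to the desired pointwise $\sin\theta_t \to 0$.

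The twice-continuously-differentiable case without definability follows the same script: the monotonicity of $\ta_t$ and its limit in $(0,\infty)$ rely only on homogeneity, the core inequality $\sin^2\theta_t \le \|W_t\|^L \cdot d\ta_t/d\alpha_t$ is purely calculational, and $C^2$-smoothness supplies the Lipschitz control needed for the Barbalat step. The principal obstacle I foresee in both settings is precisely this final step: converting integrability of $\sin^2\theta_t$ against an unbounded time measure into genuine pointwise decay, without the definability-based tameness that rules out oscillatory pathologies of the Mexican-hat type mentioned earlier.
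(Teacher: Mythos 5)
Your derivation up through the integral bound is sound and parallels the paper's own Lemma \Cref{fact:theta_int}: the monotonicity of $\ta_t$, the radial lower bound $\|\csr\cL(W_t)\|\geq L\alpha_t|\ell'(\alpha_t)|/\|W_t\|$ (via $\beta\geq\alpha$), and the resulting estimate $\sin^2\theta_t \leq \|W_t\|^L\,d\ta_t/d\alpha_t$ are all correct, and converting to $\int \sin^2\theta_t\,d\log\|W_t\|<\infty$ works because $\beta(W_t)/\|W_t\|^L$ is bounded below for large $t$.

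The genuine gap is exactly where you flag it: the Barbalat step. For Barbalat you would need a uniform bound on $|d\theta_t/d(\log\|W_t\|)|$, but this fails. The unit vector $\hat g_t := -\nL(W_t)/\|\nL(W_t)\|$ evolves according to $\tfrac{d\hat g_t}{dt}=-(I-\hat g_t\hat g_t^\T)\nabla^2\cL(W_t)\hat g_t$, and the Hessian $\nabla^2\cL = \sum_i \ell''(p_i)\nabla p_i\nabla p_i^\T + \sum_i \ell'(p_i)\nabla^2 p_i$ has spectral norm dominated by the $\ell''$ rank-one terms, of order $|\ell'(\alpha(W_t))|\cdot\|W_t\|^{2L-2}$. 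Meanwhile $d\log\|W_t\|/dt = \|\csr\cL(W_t)\|/\|W_t\| \asymp |\ell'(\alpha(W_t))|\cdot\|W_t\|^{L-2}$. So $|d\theta_t/d(\log\|W_t\|)|$ can grow like $\|W_t\|^L\to\infty$; there is no uniform modulus of continuity in $\log\|W_t\|$, and $C^2$-smoothness or locally Lipschitz gradients alone do not fix this. (The $(L-1)$-homogeneity of $\nabla p_i$ bounds $\nabla^2 p_i(\tW)$ on the unit sphere, but it does not tame the $\ell''$ rank-one part.) This is not merely a technical annoyance: without exploiting some special structure, the Hessian contribution is exactly one power of $\|W\|^L$ too large.

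The paper's route sidesteps the modulus-of-continuity obstruction by choosing a potential $\cJ(W)=\|\na(W)\|^2/\|W\|^{2L-2}$ and computing $\ip{\nabla\cJ(W)}{-\nL(W)}$ directly (\Cref{fact:der_norm}). The dangerous $\ell''$ rank-one part of $\nabla^2\alpha$ — equivalently the Hessian of the concave log-sum-exp type function $\pi$ — contributes with a \emph{definite (non-positive) sign} to this inner product, and therefore cancels rather than needing to be bounded. What survives is $O(-\ell'(\alpha(W))\|W\|^{L-2}\sin^2\theta)$, which is integrable by your \Cref{fact:theta_int}-type bound. One then uses \citep[Corollary C.10]{kaifeng_jian_margin} to find a time where $\theta_t$ is already small, the one-sided integral bound to prevent $\cJ$ from increasing appreciably afterward, and the identity $1/\cos\theta_t = \sqrt{\cJ(W_t)}\cdot\|W_t\|^L/(L\beta(W_t))$ together with $\beta(W_t)/\|W_t\|^L\to a$ to conclude $\theta_t\to0$. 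So the missing idea is: do not try to upgrade integrability to pointwise decay via continuity of the angle, but rather select a Lyapunov-like quantity whose flow derivative structurally absorbs the otherwise-unbounded Hessian term.
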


Below we first sketch the proof of \Cref{fact:alignment}, with full details
in \Cref{app_sec:alignment}, and then in \Cref{sec:margins}
present a few global margin maximization consequences, which are proved in
\Cref{app_sec:margins}.

\subsection{A proof sketch of \Cref{fact:alignment}}

Recall that $\lim_{t\to\infty}\alpha(W_t)/\|W_t\|^L=a$.
The first observation is that $\alpha(W_t)$, the smoothed margin function,
asymptotes to the exact margin $\min_{1\le i\le n}p_i(W_t)$ which is
$L$-positively homogeneous.
Therefore $\alpha$ is asymptotically $L$-positively homogeneous, and formally we
can show
\begin{align}\label{eq:beta_limit}
    \lim_{t\to\infty}\ip{\frac{\na(W_t)}{\|W_t\|^{L-1}}}{\frac{W_t}{\|W_t\|}}=\lim_{t\to\infty}\frac{\ip{\na(W_t)}{W_t}}{\|W_t\|^L}=aL,
\end{align}
which can be viewed as an asymptotic version of Euler's homogeneous function
theorem (cf. \Cref{fact:clarke_euler}).
Consequently, the inner product between $\na(W_t)/\|W_t\|^{L-1}$ and $\tW_t$
converges.

Let $\theta_t$ denote the angle between $W_t$ and $-\nL(W_t)$, which is also the
angle between $W_t$ and $\na(W_t)$, since $\nL(W_t)$ and $\na(W_t)$ point to
opposite directions by the chain rule.
By \citep[Corollary C.10]{kaifeng_jian_margin}, given any $\epsilon>0$,
there exists a time $t_\epsilon$ such that $\theta_{t_\epsilon}<\epsilon$.
The question is whether such a small angle can be maintained after $t_\epsilon$.
This is not obvious since, as mentioned above, the smoothed margin $\alpha(W_t)$
asymptotes to the exact margin $\min_{1\le i\le n}p_i(W_t)$, which may be
nondifferentiable even with smooth $p_i$, due to nondifferentiability of the
minimum.
Consequently, the exact margin may have discontinuous Clarke subdifferentials,
and since the smoothed margin asymptotes to it, it is unclear whether
$\theta_t\to0$.  (This point was foreshadowed earlier, where it was pointed
out that alignment is not a clear consequence of convergence to stationary points of the
margin maximization objective.)

To handle this, the key to our analysis is the potential function
$\cJ(W):=\enVert{\na(W_t)}^2/\|W_t\|^{2L-2}$.
Suppose at time $t$, it holds that  $\ip{\na(W_t)/\|W_t\|^{L-1}} {\tW_t}$ is
close to $aL$, and $\theta_t$ is very small.
If $\theta_{t'}$ becomes large again at some $t'>t$, it must follows that
$\cJ(W_{t'})$ is much larger than $\cJ(W_t)$.
We prove that this is impossible, by showing that
\begin{align}\label{eq:jt_int_limit}
    \lim_{t\to\infty}\int_t^{\infty}\frac{\dif\cJ(W_\tau)}{\dif\tau}\dif\tau=0,
\end{align}
and thus \Cref{fact:alignment} follows.
The proof of \cref{eq:jt_int_limit} is motivated by the dual convergence
analysis in \citep{dual_conv}, and also uses the positive homogeneity of
$\nabla p_i$ and $\nabla^2p_i$ (which exist a.e.).

\subsection{Main alignment consequence: margin maximization}
\label{sec:margins}

A variety of (global) margin maximization results are immediate consequences of
directional convergence and alignment.
This subsection investigates two examples:
deep linear networks, and shallow squared ReLU networks.

Deep linear networks predict with $\Phi(x_i;W) = A_L\cdots A_1x_i$, where the
parameters $W = (A_L,\ldots, A_1)$ are organized into $L$ matrices.
This setting has been considered in the literature, but the original work
assumed directional convergence, alignment and a condition on the support
vectors \citep{nati_lnn}; a follow-up dropped the directional convergence and
alignment assumptions, but instead assumed the support vectors span the
space $\R^d$ \citep{align}.
As follows, we not only drop the all aforementioned assumptions, but moreover
include a \emph{proof} rather than an assumption of directional convergence.

\begin{proposition}
  \label{fact:deep_linear}
  Suppose $W_t = (A_L(t),\ldots,A_1(t))$ and $\cL(W_0) < \ell(0)$.
  Then a unique linear max margin predictor
  $\baru := \argmax_{\|u\|\leq 1} \min_i y_i x_i^\T u$ exists, and there exist
  unit vectors $(v_L,\ldots,v_1,v_0)$ with $v_L = 1$ and $v_0 = \baru$ such that
  \[
    \lim_{t\to\infty} \frac{A_j(t)}{\|A_j(t)\|} = v_j v_{j-1}^\T
    \qquad
    \text{and}
    \qquad
    \lim_{t\to\infty} \frac {A_L(t)\cdots A_1(t)}{\|A_L(t) \cdots A_1(t)\|} = \baru^\T.
\]
\end{proposition}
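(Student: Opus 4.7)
First I would dispose of existence and uniqueness of $\bar u$ via separability: the assumption $\cL(W_0)<\ell(0)$ together with $\ell\ge 0$ and $\ell$ strictly decreasing forces $p_i(W_0)=y_i x_i^\T(A_L(0)\cdots A_1(0))>0$ for every $i$, so the data are linearly separable and the hard-margin linear predictor $\bar u$ is uniquely defined.

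Next I would invoke the two main theorems of the paper. Since $\Phi(x;W)=A_L\cdots A_1 x$ is polynomial in $W$ (hence $C^\infty$), $L$-positively homogeneous, locally Lipschitz, and definable, both \Cref{fact:dir} and \Cref{fact:alignment} apply. Thus $W_t/\|W_t\|\to\bar W=(\bar A_L,\ldots,\bar A_1)$ with $\|\bar W\|=1$, and $-\nL(W_t)/\|\nL(W_t)\|$ converges to the same $\bar W$; reading off block by block, $A_j(t)/\|W_t\|\to\bar A_j$ and $-\nabla_{A_j}\cL(W_t)/\|\nL(W_t)\|\to\bar A_j$. Moreover, the monotone smoothed margin $\ta_t\to a>0$ combined with $L$-homogeneity of $u_t:=A_L(t)\cdots A_1(t)$ yields $u_t/\|W_t\|^L\to\bar A_L\cdots\bar A_1$ with $\min_i y_i(\bar A_L\cdots\bar A_1)x_i\ge a>0$. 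Hence this product is nonzero, forcing $\bar A_j\ne 0$ and every partial product $\bar A_L\cdots\bar A_{j+1}\ne 0$.

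The heart of the proof is a rank-$1$ observation. The chain rule gives
\[
\nabla_{A_j}\cL(W)=(A_L\cdots A_{j+1})^\T\cdot\sum_{i=1}^n\ell'(y_i u x_i)\,y_i\,x_i^\T(A_{j-1}\cdots A_1)^\T,
\]
an outer product of rank at most $1$ whose column space is spanned by $(A_L\cdots A_{j+1})^\T$. Define $v_L:=1$ and $v_j:=(\bar A_L\cdots\bar A_{j+1})^\T/\|\bar A_L\cdots\bar A_{j+1}\|$ for $0\le j<L$, which are unit vectors by the non-degeneracy above. Per-block alignment gives $-\nabla_{A_j}\cL(W_t)/\|\nabla_{A_j}\cL(W_t)\|\to\bar A_j/\|\bar A_j\|$, forcing $\bar A_j$ to be rank exactly $1$ with left singular vector $v_j$, i.e.\ $\bar A_j=\|\bar A_j\|\,v_j w_j^\T$ for some unit $w_j$. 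A one-line telescoping,
\[
(\bar A_L\cdots\bar A_j)^\T=\bar A_j^\T(\bar A_L\cdots\bar A_{j+1})^\T=\|\bar A_j\|\,\|\bar A_L\cdots\bar A_{j+1}\|\,w_j,
\]
shows $v_{j-1}=w_j$ after normalization, producing the claimed $A_j(t)/\|A_j(t)\|\to v_j v_{j-1}^\T$. Taking the full product, $\bar A_L\cdots\bar A_1$ is a positive scalar multiple of $v_L v_0^\T=v_0^\T$, so $A_L(t)\cdots A_1(t)/\|A_L(t)\cdots A_1(t)\|\to v_0^\T$. Finally, combining directional convergence with alignment places $\bar W$ at a KKT point of $\min\tfrac{1}{2}\|W\|^2$ subject to $y_i\Phi(x_i;W)\ge 1$, as used in \Cref{sec:margins}. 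In the deep linear case, the balanced-factorization identity $\min\{\sum_j\|A_j\|_F^2:A_L\cdots A_1=u\}=L\|u\|^{2/L}$ collapses the outer problem to the ordinary linear SVM $\min\|u\|$ subject to $y_i x_i^\T u\ge 1$, whose unique solution has direction $\bar u$; thus $v_0=\bar u$.

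The main obstacle is the layer-to-layer rank-$1$ coupling: showing not just that $\bar A_j$ has rank at most one with prescribed left singular vector $v_j$, but that the matching right singular vector equals $v_{j-1}$, so the layer directions genuinely telescope. The identity above pins this down, but it rests on the non-degeneracy $\|\bar A_L\cdots\bar A_{j+1}\|>0$, itself coming from the positive limiting margin via the smoothed-margin analysis. A secondary subtlety is upgrading the generic KKT/margin-maximization consequence of alignment into the uniqueness statement $v_0=\bar u$, which relies on the AM-GM-type reduction of the deep linear max-margin problem to the standard linear SVM.
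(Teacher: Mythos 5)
Your proposal is essentially correct and follows a genuinely different route from the paper's in the middle of the argument. The paper invokes \Cref{fact:alignment:layers} (a Cauchy--Schwarz saturation argument) to establish both per-block alignment and $\bar s_j^{2/L}=1/L>0$, whereas you read off per-block alignment directly from the block structure of the normalized gradient direction, and derive $\bar A_j\neq 0$ (and nonvanishing of every partial product) from the positivity of the limiting smoothed margin. The paper then constructs $(v_L,\ldots,v_0)$ by a top-down induction that extracts each $v_{j-1}$ from the right factor of the gradient; you instead define $v_j$ directly as the normalized $(\bar A_L\cdots\bar A_{j+1})^\T$, deduce the rank-one form $\bar A_j=\|\bar A_j\|v_j w_j^\T$ from the rank-one form of the per-block gradient, and telescope to get $w_j=v_{j-1}$. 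Both constructions produce the same $v_j$; yours is arguably more transparent because it names the $v_j$ up front and separates the rank-one observation from the coupling.

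The one place that needs tightening is the last step, $v_0=\bar u$. Asserting that ``$\bar W$ is a KKT point of $\min\tfrac12\|W\|^2$ s.t.\ $y_i\Phi(x_i;W)\ge1$, and the balanced-factorization identity collapses this to the linear SVM'' conflates global optima with first-order stationary points: the identity $\min\{\sum_j\|A_j\|^2:A_L\cdots A_1=u\}=L\|u\|^{2/L}$ characterizes the \emph{global} minimizer of the deep problem, but a KKT point of that non-convex problem need not, a priori, achieve it, and the paper nowhere proves the general implication ``alignment $\Rightarrow$ deep KKT point.'' The cleaner closing step is already sitting inside your telescoping: specializing to $j=1$ gives $v_0=w_1=\lim_{t\to\infty}-\nabla_u\cL(W_t)/\|\nabla_u\cL(W_t)\|$, a normalized nonnegative combination of the $y_ix_i$ whose weights, proportional to $-\ell'(y_iu_tx_i)$, concentrate on examples minimizing $y_ix_i^\T v_0$. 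That is exactly the first-order optimality condition of the \emph{convex} linear max-margin problem, which forces $v_0=\bar u$. This is what the paper's final chain of equalities for $\ip{u}{\nabla_u\cL(W)}/(\|u\|\cdot\|\nabla_u\cL(W)\|)$ is establishing; you should appeal to that rather than the deep-problem KKT and balanced factorization.
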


Thanks to directional convergence and alignment
(cf. \Cref{fact:dir,fact:alignment}), the proof boils down to writing
down the gradient expression for each layer and doing some algebra.

A more interesting example is a certain 2-homogeneous case,
which despite its simplicity is a universal approximator;
this setting was studied by
\citet{chizat_bach_imp}, who considered the infinite width case,
and established margin maximization under \emph{assumptions} of directional
convergence and gradient convergence.
Unfortunately, it is not clear if \Cref{fact:dir,fact:alignment} can be applied to fill
these assumptions, since they do not handle infinite width, and indeed it is not clear
if infinite width networks or close relatives are definable in an o-minimal structure.
Instead, here we consider the finite width case, albeit with an additional assumption.

Following \citep[S-ReLU]{chizat_bach_imp},
organize $W_t$ into $m$ rows $(w_j(t))_{j=1}^m$,
with normalizations $\theta_j(t) := w_j(t)/\|w_j(t)\|$ where $\theta_j(t) = 0$ when $\|w_j(t)\|=0$,
and consider
\begin{equation}
  \Phi(x_i; W) := \sum_j (-1)^j \max\{0,w_j^\T x_i\}^2
  \quad
  \text{and}
  \quad
  \varphi_{ij}(w) := y_i (-1)^j \max\{0,w^\T x_i\}^2,
  \label{eq:2homo}
\end{equation}
whereby $p_i(W) = \sum_j \varphi_{ij}(w_j)$, and $\Phi$, $p_i$,
and $\varphi_{ij}$ are all 2-homogeneous and definable.
(The ``$(-1)^j$'' may seem odd, but is an easy trick to get universal approximation without
outer weights.)

\begin{proposition}
  \label{fact:covering}
  Consider the setting in \cref{eq:2homo} along with
  $\cL(W_0) < \ell(0)$
  and $\|x_i\|\leq1$.
\begin{enumerate}
    \item \textbf{(Local guarantee.)}
      $s\in\R^m$ with $s_j(t) := \nicefrac{\|w_j(t)\|^2}{\|W_t\|^2}$
      satisfies $s \to \barp\in\Delta_m$ (probability simplex on $m$ vertices),
      and $\theta_j \to \btheta_j$ with $\btheta_j=0$ if $s_j = 0$, and
      \[
        a
        = \lim_{t\to\infty} \min_i \frac{p_i(W_t)}{\|W_t\|^2}
        = \lim_{t\to\infty} \min_i \sum_j s_j(t) \varphi_{ij}(\theta_j(t))
        = \min_i \max_{s\in \Delta_m} \sum_j s_j\varphi_{ij}(\btheta_j).
      \]

    \item \textbf{(Global guarantee.)}
      Suppose the \emph{covering condition}:
      there exist $t_0$ and $\eps>0$ with
      \[
        \max_j \|\theta_j(t_0) - \btheta_j\|_2 \leq \eps,
        \ \text{and}\ {}
        \max_{\theta'\in\S^{d-1}}
        \max\cbr[2]{
          \min_{2\mid j} \|\theta_j(t_0) - \theta'\|,
          \min_{2\nmid j} \|\theta_j(t_0) -\theta'\|
        }\leq \eps,
      \]
      where $\S^{d-1} := \{ \theta \in\R^d : \|\theta\|=1\}$.
      Then margins are approximately (globally) maximized:
      \[
        \lim_{t\to\infty} \min_i \frac{p_i(W_t)}{\|W_t\|^2}
        \geq
        \max_{\nu\in\cP(\S^{d-1})}
        \min_i
        y_i \int\max\{0, x_i^\T \theta\}^2\dif\nu(\theta) - 4\eps,
      \]
      where $\cP(\S^{d-1})$ is the set of \emph{signed} measures on $\S^{d-1}$ with mass at most $1$.
    \end{enumerate}
\end{proposition}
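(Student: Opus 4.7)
The plan is to read off the limiting objects from \Cref{fact:dir}, then combine alignment (\Cref{fact:alignment}) with LP duality for part~1, and use the covering to approximate a target signed measure in part~2.

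For part~1, directional convergence yields per-neuron limits $\bar w_j := \lim_t w_j(t)/\|W_t\|$, whence $s_j(t) \to \barp_j := \|\bar w_j\|^2$ with $\barp \in \Delta_m$ (since $\|\tW_t\|=1$); for live indices $\barp_j > 0$ one has $\theta_j(t) \to \btheta_j := \bar w_j/\|\bar w_j\|$, and the dead-index convention $\btheta_j := 0$ makes $\varphi_{ij}(\btheta_j) = 0$. The first two equalities follow from \Cref{fact:margin_conv} together with the $2$-homogeneity identity $p_i(W_t)/\|W_t\|^2 = \sum_j s_j(t)\,\varphi_{ij}(\theta_j(t))$. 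For the third equality (read as: the bilinear LP value $\max_{s\in\Delta_m}\min_i \sum_j s_j \varphi_{ij}(\btheta_j)$ equals $a$), I would leverage alignment. Together with the asymptotic Euler identity \cref{eq:beta_limit}, alignment upgrades $\oW := \lim_t \tW_t$ to a KKT point of $\max_{\|W\|=1}\min_i p_i(W)$, producing $\lambda \in \Delta_n$ supported on indices $i$ achieving $p_i(\oW) = a$ such that $\sum_i \lambda_i \nabla \varphi_{ij}(\bar w_j) = 2a\,\bar w_j$ for every $j$. Taking the inner product with $\bar w_j$ and applying Euler's theorem to the $2$-homogeneous $\varphi_{ij}$ collapses this to $\sum_i \lambda_i \varphi_{ij}(\btheta_j) = a$ for every live $j$ (and the relation is vacuous for dead $j$). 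This certifies that $(\barp,\lambda)$ is primal-dual optimal in the LP pair $\max_{s\in\Delta_m}\min_i \sum_j s_j \varphi_{ij}(\btheta_j) = \min_{\mu\in\Delta_n}\max_j \sum_i \mu_i \varphi_{ij}(\btheta_j) = a$, with finite-dimensional LP duality (equivalently, Sion's minimax on the bilinear pay-off) closing the identity.

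For part~2, I would reduce a general target to a feasible $\tilde s$ in the LP from part~1 and plug it in. Fix an optimal signed $\nu^* \in \cP(\S^{d-1})$ with $\|\nu^*\|_{\mathrm{TV}} \le 1$ and Hahn decomposition $\nu^* = \nu_+^* - \nu_-^*$. The covering hypothesis combined with $\|\theta_j(t_0) - \btheta_j\| \le \eps$ and the triangle inequality gives that every $\theta' \in \S^{d-1}$ lies within $2\eps$ of some $\btheta_j$ of each parity. Construct $\tilde s \in \Delta_m$ by a measurable nearest-point assignment: push each mass element of $\nu_+^*$ to a closest even-indexed $\btheta_j$, each mass element of $\nu_-^*$ to a closest odd-indexed $\btheta_j$, accumulate mass into the corresponding coordinate of $\tilde s$, and if $\|\nu^*\|_{\mathrm{TV}} < 1$ dump the residual on a dead coordinate so $\sum_j \tilde s_j = 1$. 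Because $\theta \mapsto \max\{0, x_i^\T \theta\}^2$ is $2\|x_i\| \le 2$-Lipschitz on $\S^{d-1}$, the per-atom error is at most $2 \cdot 2\eps$, and integrating across total variation at most $1$ yields $\sum_j \tilde s_j \varphi_{ij}(\btheta_j) \ge y_i \int \max\{0, x_i^\T \theta\}^2 \dif\nu^*(\theta) - 4\eps$ for every $i$. Taking $\min_i$ and then invoking the part~1 LP identity to lower-bound $a$ by $\min_i \sum_j \tilde s_j \varphi_{ij}(\btheta_j)$ completes the bound.

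The main obstacle is extracting the per-neuron KKT identity $\sum_i \lambda_i \varphi_{ij}(\btheta_j) = a$ cleanly: translating \Cref{fact:alignment}, a statement about directions along the flow path, into a genuine first-order stationarity certificate for the nonsmooth limit objective $\min_i p_i$ requires Clarke-subdifferential care at two sources of nondifferentiability, the ReLU kink inside $\varphi_{ij}$ at $w_j^\T x_i = 0$ and the $\min_i$ selector, though the ray structure of the limit and $2$-homogeneity keep the relevant directional derivatives one-sided. A secondary subtle point is pinning down the dead-neuron convention $\btheta_j = 0$ so that both the LP duality in part~1 and the covering-based construction in part~2 depend only on the live directions. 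The final bookkeeping in part~2 is making the $\nu^* \mapsto \tilde s$ assignment measurable with exact mass conservation so the $4\eps$ constant comes out clean.
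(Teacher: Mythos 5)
Your proposal tracks the paper's proof closely. For the local guarantee, the paper supplies exactly the two tools you identified as the "main obstacle'': \Cref{fact:alignment:layers} packages directional convergence and alignment into per-node limit identities, which for $L=2$ read $\lim_t \ip{w_j}{\nabla_{w_j}\alpha(W_t)}/\|w_j\|^2 = 2a$ whenever $\bar s_j>0$, and \Cref{fact:dual_opt} produces your multiplier $\lambda$ as an accumulation point $\barq$ of the flow dual weights $q_i(t)=\partial\alpha/\partial p_i(W_t)$, with $\barq\in\Delta_n$ and supported on margin minimizers. Combining these gives the scalar certificate $\sum_i\barq_i\varphi_{ij}(\btheta_j)=a$ for live $j$ (and $0$ for dead $j$) directly, after which Sion/LP duality closes the identity exactly as you sketch. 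The paper deliberately never phrases this as ``$\oW$ is a KKT point of $\max_{\|W\|=1}\min_i p_i$'' — it extracts the certificate from flow-time quantities rather than from the nonsmooth limit objective — but the content is the same, and the gap you flagged (nonsmoothness of $\min_i$) is real; it is precisely what those two lemmas paper over. Note in passing that the ReLU-kink worry you raised is moot here since $z\mapsto\max\{0,z\}^2$ is $C^1$. For the global guarantee, your primal construction (plug a nearest-point-assigned $\tilde s$ into the part-1 LP) and the paper's dual bound (control $\min_q\max_\nu - \min_q\max_s$ via two $2\eps$ discretization errors) are interchangeable Lagrangian faces of the same estimate, with the same Hahn decomposition, the same parity-respecting assignment, and the same $2$-Lipschitz bound on $\theta\mapsto\max\{0,x_i^\T\theta\}^2$. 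One small care in your version: the covering is stated for $\theta_j(t_0)$, not for $\btheta_j$, so the ``closest even/odd $\btheta_j$'' step should route through the $\eps$-net of $\theta_j(t_0)$ and then apply $\max_j\|\theta_j(t_0)-\btheta_j\|\le\eps$; with $\eps<1/2$ this guarantees the nearest point is a live $\btheta_j\in\S^{d-1}$ rather than a dead $\btheta_j=0$, after which the $4\eps$ constant comes out exactly as in the paper.
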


The first part (the ``local guarantee'')
characterizes the limiting margin as the maximum margin of a \emph{linear}
problem obtained by taking the limiting directions $(\btheta_j)_{j=1}^m$ and treating
the resulting $\varphi_{ij}(\btheta_j)$ as features.
The quality of this margin is bad if the limiting directions are bad, and therefore
we secondly (the ``global guarantee'')
consider a case where our margin is nearly as good as the
\emph{infinite width global max margin value} as defined by
\citep[eq. (5)]{chizat_bach_imp}; see discussion therein for a justification of this choice,
and moreover calling it the globally maximal margin.

The \emph{covering condition} deserves further discussion.
In the infinite width setting, it holds for all $\eps>0$ assuming directional
convergence \citep[Proof of Theorem D.1]{chizat_bach_imp}, but cannot hold in
such generality here as we are dealing with finite width.
Similar properties have appeared throughout the literature:
\citet[Section 3]{wei_reg} explicitly re-initialized network nodes to guarantee a good
covering,
and more generally \citep{rong_saddle_point_escape} added noise to escape
saddle points in general optimization problems.

\section{Concluding remarks and open problems}

In this paper, we established that the normalized parameter vectors
$\nicefrac {W_t}{\|W_t\|}$ converge, and that under an additional assumption of
locally Lipschitz gradients, the gradients also converge and align with the
parameters.

There are many promising avenues for future work based on these results.
One basic line is to weaken our assumptions: dropping homogeneity to allow for
DenseNet and ResNet, and analyzing finite-time methods like (stochastic)
gradient descent, and moreover their rates of convergence.
We also handled only the binary classification case, however our tools should
directly allow for cross-entropy.

Another direction is into further global margin maximization results, beyond the
simple networks in \Cref{sec:margins}, and into related generalization
consequences of directional convergence and alignment.

\subsection*{Acknowledgements}

The authors thank Zhiyuan Li and Kaifeng Lyu for lively discussions during an early phase
of the project.
The authors are grateful for support from the NSF under grant IIS-1750051,
and from NVIDIA via a GPU grant.

\bibliography{bib}
\bibliographystyle{plainnat}

\appendix

\section{Experimental setup}
\label{app_sec:empirical}

The goal of the experiments is to illustrate that directional convergence is
a clear, reliable phenomenon.  Below we detail the setup for the two types of experiments:
contour plots in \Cref{fig:contours}, and margin plots in \Cref{fig:margins} (with
ResNet here in \Cref{fig:margins:resnet}).

\paragraph{Data.}
\Cref{fig:contours} used two-dimensional synthetic data in order to capture the entire
prediction surface; data was generated by labeling points in the plane
with a random network (which included a bias term),
and then deleting low-margin points.
Then, when training from scratch to produce the contours,
data was embedded in $\R^3$ by appending a $1$;
this added bias made the maximum margin network much simpler.

\Cref{fig:margins} used the standard \cifar dataset in its 10 class
configuration \citep{cifar}.  There are 50,000 data points, each with 3072
dimensions, organized into $32\times32$ images with 3 color channels.

\paragraph{Models.}
A few simple models both inside and outside our technical assumptions were used.
All code was implemented in PyTorch \citep{pytorch_official_citation}.

\Cref{fig:contours} worked with a style of 2-layer network which appears widely
throughout theoretical investigations:
specifically, there is first a wide linear layer
(in our case, $10,000$ nodes), then a \emph{squared} ReLU layer,
and then a layer of random signs which is not trained.
This squared ReLU network with one trainable layer is 2-homogeneous,
and was chosen both to fit with the alignment guarantee in \Cref{fact:alignment},
and also to amplify differences with the NTK.
Note that this simple architecture is still a universal approximator with non-convex
training.  \Cref{fig:contours:a,fig:contours:c} trained this network,
which can be written as $x\mapsto \sum_j s_j \max\cbr[1]{0,\ip{w_j}{x}}^2$,
where $s_j\in\pm 1$ are fixed random signs and $(w_j)_{j=1}^m$ are the trainable parameters.
\Cref{fig:contours:ntk} trained the corresponding
NTK~\citep{jacot_ntk,du_deep_opt,allen_deep_opt,zou_deep_opt},
meaning the linear predictor
obtained by freezing the network activations, which thus has the form
$x\mapsto \sum_j s_j \ip{v_j}{x} \max\{0, \ip{w_j}{x}\}$,
where $(w_j)_{j=1}^m$ from before are now fixed, and only $(v_j)_{j=1}^m$ are trained.

\Cref{fig:margins} used convolutional networks.
Firstly, \Cref{fig:margins:halexnet}
used ``H-AlexNet'', which is based on
a simplified version of the standard AlexNet \citep{imagenet_sutskever}
as presented in the PyTorch \cifar tutorial
\citep{pytorch_official_citation},
but
with biases disabled in order to give a homogeneous network.  The network ultimately
consists of
ReLU layers, max-pooling layers, linear layers, and convolutional layers,
and is 5-homogeneous.  In particular, H-AlexNet satisfies all conditions we need for
directional convergence.

The two models outside the assumptions were DenseNet (cf. \Cref{fig:margins:densenet}
and ResNet (cf. \Cref{fig:margins:resnet}), used
unmodified from the PyTorch source, namely by invoking
\texttt{torchvision.models.densetnet121}
and \texttt{torchvision.models.resnet18} with argument
\texttt{num\_classes=10}.

\paragraph{Training.}
Training was a basic gradient descent (GD) for \Cref{fig:contours},
and a basic stochastic gradient descent (SGD) for \Cref{fig:margins,fig:margins:resnet}
with a mini-batch size of 512;
there was no weight decay or other regularization, no momentum, etc.;
it is of course an interesting question how more sophisticated optimization schemes,
including AdaGrad and AdaDelta and others,
affect directional convergence and alignment.
Experiments were run to accuracy $10^{-8}$ or greater in order to train significantly
past the point $\cL(W_0) < \ell(0)$ from \Cref{cond:init},
and to better depict directional convergence.

To help reach such small risk, the main ideas were to rewrite the objective functions
to be numerically stable, and secondly to scale the step size by $\nicefrac 1 {\cL(W_{t-1})}$,
which incidentally is consistent with gradient flow on $\alpha$ with exponential loss,
and is moreover an idea found across the margin literature, most
notably as the step size used in AdaBoost \citep{freund_schapire_adaboost}.
This can lead to some numerical instability,
so the step size was reduced if the norm of the induced update was too large,
meaning the norm of the gradient times the step size was too large.
A much more elaborate numerical scheme was reported by
\citet[Appendix L]{kaifeng_jian_margin}, but not used here.

One point worth highlighting is the role of SGD, which seems as though it should have introduced
a great deal of noise into the plots, and after all is outside the assumptions of the paper
(which requires gradient flow, let alone gradient descent).  Though not depicted here,
experiments in \Cref{fig:margins} were also tried on subsampled data and full gradients,
and \Cref{fig:contours} was tried with SGD in place of GD; while gradient descent does result
in smoother plots, the difference is small overall, leaving the rigorous analysis of directional
convergence with SGD as a promising future direction.

\begin{figure}[t!]
  \centering
  \includegraphics[width=0.6\textwidth]{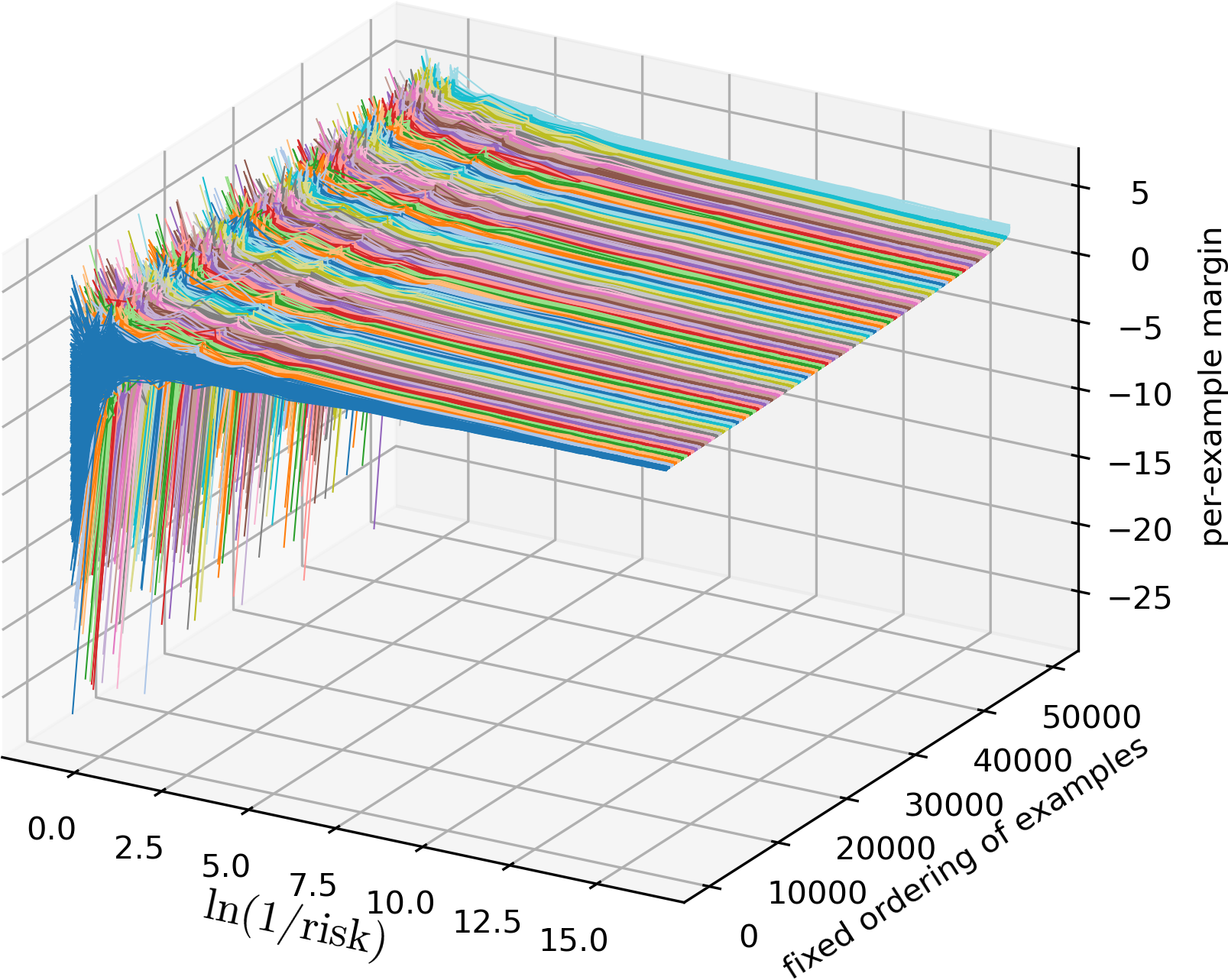}
  \caption{ResNet margins over time, plotted in the same way as \Cref{fig:margins};
  see \Cref{app_sec:empirical} for details.}
  \label{fig:margins:resnet}
\end{figure}

\paragraph{Margin plots.}  A few further words are in order for the margin
plots in \Cref{fig:margins,fig:margins:resnet}.

While margins are well-motivated from generalization and other theoretical perspectives
\citep{spec,margindist_predict,fantastic}, we also use margin plots as a visual surrogate
for prediction surface contour plots from \Cref{fig:contours}, but now for high-dimensional
data, even with high-dimensional outputs.
In particular, \Cref{fig:margins,fig:margins:resnet} track the prediction surface
but restricted to the training set, showing, in a sense, the output trajectory for each
data example.
Since the output dimension is 10 classes,
we convert this to a single real number via the usual multi-class margin
$(x,y) \mapsto \Phi(x;W_t)_y - \max_{j\neq y} \Phi(x;W_t)_j$.

In the case of homogeneous networks, it is natural to normalize this quantity
by $\|W_t\|^L$; for the inhomogeneous cases DenseNet and ResNet, no such normalization
is available. Therefore, for consistency, at each time $t$, margins were normalized
by the median nonnegative margin across all data.

To show the evolution of the margins most clearly, we sorted margins according to the
final margin level, and used this fixed data ordering for all time; as a result, lines
in the plot indeed correspond to trajectories of single examples.
Moreover, we indexed time by the log of the inverse risk,
namely $\ln \nicefrac n {\cL(W_t)}$ in our notation.  While this may seem odd at first,
importantly it washes out the effect of small step-sizes and other implementation choices;
and crucially disallows an artificial depiction of directional convergence by
choosing rapidly-vanishing step sizes.

\section{Results on o-minimal structures}\label{app_sec:o}

An o-minimal structure is a collection $\cS=\{\cS_n\}_{n=1}^\infty$, where
each $\cS_n$ is a set of subsets of $\R^n$ satisfying the following conditions:
\begin{enumerate}
    \item $\cS_1$ is the collection of all finite unions of open intervals and points.

    \item $\cS_n$ includes the zero sets of all polynomials on $\R^n$:
      if $p$ is a polynomial on $\R^n$,
      then $\setdef{x\in\R^n}{p(x)=0}\in\cS_n$.

    \item $\cS_n$ is closed under finite union, finite intersection, and complement.

    \item $\cS$ is closed under Cartesian products:
      if $A\in\cS_m$ and $B\in\cS_n$, then $A\times B\in\cS_{m+n}$.

    \item $\cS$ is closed under projection $\Pi_n$ onto the first $n$ coordinates:
      if $A\in\cS_{n+1}$,
      then $\Pi_n(A)\in\cS_n$.
\end{enumerate}
Given an o-minimal structure $\cS$, a set $A\subset\R^n$ is definable if
$A\in\cS_n$, and a function $f:D\to\R^m$ with $D\subset\R^n$ is definable if the
graph of $f$ is in $\cS_{n+m}$.
Due to the stability under projection, the domain of a definable function is
definable.
In the following we consider an arbitrary fixed o-minimal structure.

\subsection{Basic properties}

A convenient way to construct definable sets and functions is to use
\emph{first-order formulas}:
\begin{itemize}
    \item If $A$ is a definable set, then ``$x\in A$'' is a first-order formula.

    \item If $\phi$ and $\psi$ are first-order formulas, then $\phi\wedge\psi$,
    $\phi\vee\psi$, $\neg\phi$ and $\phi\Rightarrow\psi$ are first-order
    formulas.

    \item If $\phi(x,y)$ is a first-order formula where $x\in\R^n$ and
    $y\in\R^m$, and $A\subset\R^n$ is definable, then $\forall x\in A\phi(x,y)$
    and $\exists x\in A\phi(x,y)$ are first-order formulas.
\end{itemize}
Given a first-order formula, the set of free variables which satisfy the formula
is definable \citep[Appendix A]{van_geo}.
The following basic properties of definable sets and functions can then be shown
(see \citep{van_geo,coste_o,le_lec}).
\begin{enumerate}
    \item Given any $\alpha,\beta\in\R$ and any definable functions
    $f,g:D\to\R$, we have $\alpha f+\beta g$ and $fg$ are definable.
    If $g\ne0$ on $D$, then $f/g$ is definable.
    If $f\ge0$ on $D$, then $f^{1/\ell}$ is definable for any positive integer
    $\ell$.

    \item Given a function $f:D\to\R^m$, let $f_i$ denote the $i$-th coordinate
    of its output.
    Then $f$ is definable if and only if all $f_i$ are definable.

    \item Any composition of definable functions is definable.

    \item Any coordinate permutation of a definable set is definable.
    Consequently, if the inverse of a definable function exists, it is also
    definable.

    \item The image and pre-image of a definable set by a definable function is
    definable.
    Particularly, given any real-valued definable function $f$, all of
    $f^{-1}(0)$, $f^{-1}\del{(-\infty,0)}$ and $f^{-1}\del{(0,\infty)}$
    are definable.

    \item Any combination of finitely many definable functions with disjoint
    domains is definable.
    For example, the pointwise maximum and minimum of definable functions are
    definable.
\end{enumerate}
The proofs are standard and omitted.
To illustrate the idea, we give a proof of the following standard result on the
infimum and supremum operation.
\begin{lemma}\label{fact:inf}
    Let $A\subset\R^{n+1}$ be definable and $\Pi_n$ denote the projection
    onto the first $n$ coordinates.
    Suppose $\inf\setdef{y}{(x,y)\in A}>-\infty$ for all $x\in\Pi_n(A)$, then
    the function from $\Pi_n(A)$ to $\R$ given by
    \begin{align*}
        x\mapsto\inf\setdef{y}{(x,y)\in A}
    \end{align*}
    is definable.
    Consequently, we have:
    \begin{enumerate}
        \item Let $f:D\to\R$ be definable and bounded below, and $g:D\to\R^m$ be
        definable.
        Then $h:g(D)\to\R$ given by $h(y):=\inf_{x\in g^{-1}(y)}f(x)$ is
        definable.

        \item Let $f:D_f\to\R$ and $g:D_g\to\R$ be definable and bounded below,
        then their infimal convolution $h:D_f+D_g\to\R$ given by
        \begin{align*}
            h(z):=\inf\setdef{f(x)+g(y)}{x\in D_f,y\in D_g,x+y=z}
        \end{align*}
        is definable.

        \item A function $f:D\to\R$ is definable if and only if its epigraph is
        definable.

        \item Given a definable set $A$, the function
        $d_A(x):=\inf_{y\in A}\|x-y\|$ is definable, which implies the closure,
        interior and boundary of $A$ are definable.

        \item The lower-semicontinuous envelope of a definable function is
        definable.
    \end{enumerate}
\end{lemma}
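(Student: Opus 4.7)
I would prove the main statement by exhibiting an explicit first-order formula whose solution set is the graph of the infimum function, and then derive each of the five consequences by constructing a definable auxiliary set to which the main statement applies.

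\textbf{Main statement.} The graph of $F(x) := \inf\setdef{y}{(x,y)\in A}$ consists of the pairs $(x,z) \in \Pi_n(A) \times \R$ satisfying the conjunction
\begin{align*}
\bigl(\forall y \in \R : (x,y) \in A \Rightarrow z \leq y\bigr) \wedge \bigl(\forall \eps \in (0,\infty)\ \exists y \in \R : (x,y) \in A \wedge y < z + \eps\bigr),
\end{align*}
which says exactly that $z$ is a lower bound of the slice $A_x := \setdef{y}{(x,y)\in A}$ that is approached arbitrarily closely. Since $A$, the order relation, addition, and $\Pi_n(A)$ are all definable, the principle recalled just before the lemma --- that the solution set of a first-order formula over definable parameters is definable --- places the graph of $F$ in $\cS_{n+1}$. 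The hypothesis $\inf A_x > -\infty$ ensures that the quantifiers determine a real number for each $x \in \Pi_n(A)$.

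\textbf{Consequences.} The template is the same throughout: produce a definable $B$ whose fiberwise infimum over the last coordinate recovers the target function, then invoke the main statement. For (1), take $B$ to be the image of the definable map $x \mapsto (g(x), f(x))$. For (2), take the image of $(x,y) \mapsto (x+y,\, f(x)+g(y))$ on $D_f \times D_g$. For (3), the forward direction writes $\epi(f) = \setdef{(x,y)}{\exists z : (x,z) \in \mathrm{graph}(f) \wedge y \geq z}$, and the backward direction recovers $f$ as the fiberwise infimum of its epigraph. For (4), take the image of $(x,y) \mapsto (x, \|x-y\|)$ on $\R^n \times A$ to get definability of $d_A$; then $\overline{A} = d_A^{-1}(\{0\})$, $\mathrm{int}(A) = \R^n \setminus \overline{\R^n \setminus A}$, and $\partial A = \overline{A} \setminus \mathrm{int}(A)$ follow from closure under pre-image, complement, and set difference. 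For (5), write the lower-semicontinuous envelope as $\bar f(x) = \sup_{\eps > 0} \inf\setdef{f(z)}{z \in D,\ \|z-x\| < \eps}$, apply the main statement to get the inner infimum as a definable function of $(x, \eps)$, and handle the outer supremum by the symmetric first-order formula with $\leq$ and $\geq$ swapped.

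\textbf{Main obstacle.} I do not expect a conceptual difficulty, since the entire argument is a mechanical application of the first-order-formula machinery and the enumerated closure properties (projection, complement, finite union and intersection, Cartesian product). The only point requiring care is to verify, at each step, that each quantified variable ranges over a definable set (such as $\R$ or $(0,\infty)$) and that each auxiliary set is genuinely definable before the main statement is invoked; both checks are routine given the bullet lists preceding the lemma.
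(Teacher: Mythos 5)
Your proof of the main statement and of consequences (1)--(4) takes essentially the same route as the paper. For the main claim, you characterize the graph of the infimum via ``$z$ is a lower bound of the slice $A_x$, and every $z+\eps$ fails to be a lower bound,'' whereas the paper encodes ``$z$ is a lower bound, and $z$ is an upper bound of the set of lower bounds.'' Both are valid first-order formulas and the difference is cosmetic. For (1)--(4) your auxiliary sets coincide with the paper's (image of $x\mapsto(g(x),f(x))$, Minkowski sum of graphs, the epigraph formula, image of $(x,y)\mapsto(x,\|x-y\|)$).

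Consequence (5) is where you genuinely diverge, and where there is a small gap. The paper observes that the epigraph of the lsc envelope $\bar f$ is the closure of the epigraph of $f$, then applies (3) and (4); no boundedness hypothesis enters. You instead write $\bar f(x) = \sup_{\eps>0}\inf\setdef{f(z)}{z\in D,\ \|z-x\|<\eps}$ and propose to apply the main statement to the inner infimum as a function of $(x,\eps)$. The main statement requires the infimum over each fiber to be finite, but for a definable $f$ that is not bounded below, the inner infimum can equal $-\infty$ on part of the $(x,\eps)$-domain (for example for $\eps$ large) even when $\bar f(x)$ itself is finite. Your construction would therefore need an extra step: restrict to the definable set of $(x,\eps)$ where the fiber is bounded below (which is itself expressible by a first-order formula, $\exists w\,\forall z\bigl((z\in D\wedge\|z-x\|<\eps)\Rightarrow f(z)\ge w\bigr)$), apply the main claim there, and note that the outer supremum is unchanged by this restriction whenever $\bar f(x)\in\R$. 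This is repairable but is an omission as stated. The paper's route via the closure of the epigraph is cleaner precisely because it never introduces a quantity that could be $-\infty$.
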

\begin{proof}
    Note that the set
    \begin{align*}
        A_\ell:=\setdef{(x,y)}{x\in\Pi_n(A),\textrm{ and }\forall(x,y')\in A,y\le y'}
    \end{align*}
    is definable, since it is given by the following first-order formula:
    \begin{align*}
        (x,y):\quad x\in\Pi_n(A)\ \wedge\ \forall(x',y')\in A\del{(x=x')\Rightarrow(y\le y')}.
    \end{align*}
    Similarly, the set
    \begin{align*}
        A_{\ell u}:=\setdef{(x,y)}{x\in\Pi_n(A),\textrm{ and }\forall(x,y')\in A_\ell,y\ge y'}
    \end{align*}
    is definable, and thus so is $A_\ell\cup A_{\ell u}$, which is the graph of
    the desired function.

    Now we prove the remaining claims.
    \begin{enumerate}
        \item Let $G_f$ denote the graph of $f$, and $G_g$ denote the graph of
        $g$.
        We can just apply the main claim to the following definable set:
        \begin{align*}
            (y,z):\quad y\in g(D)\ \wedge\ \exists(x,y')\in G_g\exists(x',z')\in G_f\del{(x=x')\wedge(y=y')\wedge(z=z')}.
        \end{align*}

        \item First, the Minkowski sum of two definable sets $A$ and $B$ is
        definable:
        \begin{align*}
            z:\quad\exists x\in A\exists y\in B(x+y=z).
        \end{align*}
        Then we can just apply the main claim to the Minkowski sum of the graphs
        of $f$ and $g$.

        \item Let $G_f$ denote the graph of $f$.
        If $G_f$ is definable, then the epigraph is definable:
        \begin{align*}
            (x,y):\quad x\in D\ \wedge\ \forall(x',y')\in G_f\del{(x=x')\Rightarrow(y\ge y')}.
        \end{align*}
        If the epigraph is definable, then $G_f$ is definable due to the main
        claim.

        \item We can just apply the main claim to the set
        \begin{align*}
            (x,r):\quad \exists y\in A\del{\|x-y\|=r}.
        \end{align*}
        The closure of $A$ is just $d_A^{-1}(0)$.
        The interior of $A$ is the complement of $d_{A^c}^{-1}(0)$.
        The boundary is the difference between the closure and interior.

        \item The epigraph of the lower-semicontinuous envelope of $f$ is the
        closure of the epigraph of $f$.
    \end{enumerate}
\end{proof}

As another example, note that the types of networks under discussion are definable.

\begin{lemma}
  \label{fact:ominimal:deepnet}
  Suppose there exist $k,d_0,d_1,\ldots,d_L>0$ and $L$ definable functions
  $(g_1,\ldots,g_L)$ where
  $g_j:\R^{d_0}\times\cdots\times\R^{d_{j-1}}\times\R^k\to\R^{d_j}$.
  Let $h_1(x,W):=g_1(x,W)$, and for $2\le j\le L$,
  \begin{align*}
      h_j(x,W):=g_j\del{x,h_1(x,W),\ldots,h_{j-1}(x,W),W},
  \end{align*}
  then all $h_j$ are definable.
  It suffices if each output coordinate of $g_j$ is the minimum or maximum over
  some finite set of polynomials, which allows for linear, convolutional, ReLU,
  max-pooling layers and skip connections.
\end{lemma}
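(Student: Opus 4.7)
The plan is to prove both claims by invoking the basic closure properties of definable functions established earlier in \Cref{app_sec:o}, namely that definability is preserved under composition, coordinate permutations, and pointwise finite minima/maxima, together with the fact that polynomials are definable (which follows from the zero-set axiom for o-minimal structures combined with the graph characterization of definable functions).

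For the first claim, I would proceed by induction on $j$. The base case $h_1=g_1$ is immediate since $g_1$ is assumed definable. For the inductive step, assume $h_1,\ldots,h_{j-1}$ are definable. The map
\[
    (x,W)\mapsto\bigl(x,h_1(x,W),\ldots,h_{j-1}(x,W),W\bigr)
\]
is definable because each of its output coordinates is either a coordinate projection (hence definable) or a coordinate of some $h_i$ (definable by the inductive hypothesis), and a vector-valued map is definable if and only if each of its coordinates is. Composing this definable map with the definable function $g_j$ yields $h_j$, which is therefore definable by closure under composition.

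For the second claim, I would first note that each polynomial $p:\R^N\to\R$ is definable: its graph $\{(x,y):y=p(x)\}$ equals $\{(x,y):y-p(x)=0\}$, which is the zero set of a polynomial and therefore belongs to the o-minimal structure. Since definability is preserved under finite pointwise maxima and minima (a consequence of \Cref{fact:inf} applied to finite index sets, or directly from closure under finite union/intersection of the epigraphs), each output coordinate of $g_j$ is definable, and hence so is $g_j$ itself (again using the coordinate-wise characterization of definability). The first part then applies to conclude that the composed network $h_L$ is definable. Finally, I would briefly observe that linear, convolutional, ReLU ($z\mapsto\max\{0,z\}$), and max-pooling layers are all literally given coordinate-wise by max/min of finitely many polynomial (in fact linear) expressions, and skip connections correspond to the map $(x,h_1,\ldots,h_{j-1},W)\mapsto h_i + g(\cdots)$ which is itself polynomial in previous coordinates, so all standard layers fall under the sufficient condition.

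No substantial obstacle is expected here: the lemma is essentially a bookkeeping statement that packages the closure properties of o-minimal structures into a form directly applicable to the network model of \Cref{cond:Phi}. The only mild subtlety is making sure that when feeding previous layer outputs into $g_j$, the resulting composite is expressed as a composition of definable maps whose domains and codomains line up, which is handled cleanly by constructing the intermediate tuple map above.
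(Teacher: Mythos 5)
Your proof is correct and follows essentially the same route as the paper: induction on $j$ via closure of definability under composition (making the intermediate tuple map explicit, which the paper leaves implicit), then observing that polynomials are definable and that max/min of finitely many definable functions is definable, covering the standard layer types. No substantive difference in approach.
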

\begin{proof}
    The definability of $h_j$ can be proved by induction using the fact that
    definability is preserved under composition.
    Next, note that the minimum and maximum of a finite set of polynomials is
    definable.
    Lastly, note that each output coordinate of linear and convolutional layers
    can be written as a polynomial of their input and the parameters; each
    output coordinate of a ReLU layer is the maximum of two polynomials; each
    output of a max-pooling layer is a maximum of polynomials.
    Skip connections are allowed by the definition of $h_j$.
\end{proof}

Below are some useful properties of definable functions.
\begin{proposition}[name={\citep[Exercise 2.7]{le_lec}}]\label{fact:1d_limit}
    Given a definable function $f:(a,b)\to\R$ where $-\infty\le a<b\le\infty$,
    it holds that $\lim_{x\to a^+}f(x)$ and $\lim_{x\to b^-}f(x)$ exist in
    $\R\cup\{-\infty,+\infty\}$.
\end{proposition}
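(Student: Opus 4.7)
The plan is to reduce the claim to the classical \emph{monotonicity theorem} for o-minimal structures: any definable $f:(a,b)\to\R$ admits a finite partition $a=a_0<a_1<\cdots<a_k=b$ such that on each open piece $(a_{i-1},a_i)$ the restriction of $f$ is continuous and is either constant, strictly increasing, or strictly decreasing. Granted this, the conclusion is immediate: on the leftmost piece $(a,a_1)$ the function $f$ is monotone or constant, and a monotone real-valued function on an open interval always has a one-sided limit in $\R\cup\{-\infty,+\infty\}$ at each endpoint (the appropriate $\sup$ or $\inf$), so $\lim_{x\to a^+}f(x)$ exists; the same argument applied at $(a_{k-1},b)$ handles $\lim_{x\to b^-}f(x)$.

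To establish the monotonicity theorem from the axioms, I would introduce the three subsets of $(a,b)$
\begin{align*}
  C_+ &:= \setdef{x\in(a,b)}{\exists\delta>0:\ f|_{(x-\delta,x+\delta)}\text{ is strictly increasing}},\\
  C_- &:= \setdef{x\in(a,b)}{\exists\delta>0:\ f|_{(x-\delta,x+\delta)}\text{ is strictly decreasing}},\\
  C_0 &:= \setdef{x\in(a,b)}{\exists\delta>0:\ f|_{(x-\delta,x+\delta)}\text{ is constant}},
\end{align*}
each visibly definable from the graph of $f$ by a first-order formula (the quantifier $\exists\delta$ is absorbed using closure under projection, and the universal quantifiers inside the monotonicity statement are expressed via complement). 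By the axiom that every element of $\cS_1$ is a finite union of points and open intervals, each of $C_+,C_-,C_0$ has precisely this form; and on any open component $J$ of, say, $C_+$, a standard compactness argument (cover any $[x,y]\subset J$ by finitely many neighborhoods of local strict increase and chain them) upgrades local behavior to strict monotonicity together with continuity on all of $J$, with analogous statements for $C_-$ and $C_0$.

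The main obstacle, and the technical heart of the argument, is showing that the exceptional set $B:=(a,b)\setminus(C_+\cup C_-\cup C_0)$ is \emph{finite}. Suppose for contradiction that $B$ contained an open subinterval $I$; then no point of $I$ would admit a neighborhood on which $f$ is monotone or constant. Using definability of the set of local extrema and the set of discontinuities of $f|_I$ --- each a definable subset of $\R$, hence a finite union of points and intervals by o-minimality of $\cS_1$ --- one extracts an open sub-subinterval on which $f$ is continuous and free of interior extrema, forcing monotone behavior there by the intermediate value theorem and contradicting $I\subset B$. Once $B$ is known to be finite, combining the finitely many open components of $C_+\cup C_-\cup C_0$ with the finitely many points of $B$ yields the desired partition of $(a,b)$, and \Cref{fact:1d_limit} follows by the monotone-limit argument in the first paragraph.
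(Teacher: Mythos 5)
Your proposal takes a genuinely different route from the paper, and the comparison is instructive. You reduce the claim to the Monotonicity Theorem (which the paper indeed records, as \Cref{fact:monotonicity}, but only \emph{after} and independently of \Cref{fact:1d_limit}, and cites rather than proves). The paper's own proof is far more economical: it does not invoke any monotonicity decomposition at all. Instead it argues by contradiction. If $\lim_{x\to a^+}f(x)$ failed to exist, there would be a value $k$ with $\liminf_{x\to a^+}f(x)<k<\limsup_{x\to a^+}f(x)$, so $g:=f-k$ would take both positive and negative values in every right-neighborhood of $a$. But $g^{-1}\del{(-\infty,0)}$, $g^{-1}(0)$, $g^{-1}\del{(0,\infty)}$ are definable subsets of $\R$, hence each a finite union of points and open intervals by the $\cS_1$ axiom, which forces $g$ to have a constant sign on some $(a,a+\epsilon_0)$ --- a contradiction. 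This uses only the defining axiom of o-minimality, no compactness arguments, no classification of extrema or discontinuities. What your route buys is generality and reusability (the Monotonicity Theorem is needed elsewhere in the paper anyway); what the paper's route buys is brevity and minimal prerequisites, which matters here since \Cref{fact:1d_limit} is presented as a warm-up illustration of how the $\cS_1$ axiom bites.

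One caution about your sketch of the Monotonicity Theorem itself: the step showing the exceptional set $B$ is finite has a real gap. You write that the set of discontinuities of $f|_I$ is definable and hence a finite union of points and intervals, from which you wish to extract a sub-subinterval of continuity. But ``finite union of points and intervals'' is consistent with the discontinuity set being \emph{all} of $I$, so this by itself extracts nothing. Ruling out a definable function being discontinuous on an entire interval is a genuine (and nontrivial) lemma in the standard development of o-minimality; it is usually proved via an auxiliary argument showing that on some subinterval $f$ is either constant or injective, and that an injective definable function on an interval is strictly monotone on a further subinterval. If you intend to take the Monotonicity Theorem as a black box --- which is entirely reasonable, and is what the paper does when it later states \Cref{fact:monotonicity} --- then your reduction in the first paragraph of your proposal is correct and complete; but as written, the sketched proof of the Monotonicity Theorem would not survive scrutiny without filling in this missing lemma.
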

\begin{proof}
    We consider $\lim_{x\to a^+}f(x)$ where $a\in\R$; the other cases can be
    handled similarly.
    If $\lim_{x\to a^+}f(x)$ does not exist, then there exists $k\in\R$ such
    that $\limsup_{x\to a^+}f(x)>k>\liminf_{x\to a^+}f(x)$.
    In other words, for any $\epsilon>0$, there exists
    $x_1,x_2\in(a,a+\epsilon)$ such that $f(x_1)>k$ and $f(x_2)<k$.
    However, since $g:=f-k$ is definable on $(a,b)$, it holds that
    $g^{-1}\del{(-\infty,0)}$, and $g^{-1}(0)$, and $g^{-1}\del{(0,\infty)}$ are
    all definable, and thus they are all finite unions of open intervals and
    points.
    It then follows that there exists $\epsilon_0>0$ such that $g=f-k$ has a
    constant sign (i.e., $>0$, $=0$ or $<0$) on $(a,a+\epsilon_0)$, which is a
    contradiction.
\end{proof}

\begin{theorem}[name={Monotonicity Theorem~\citep[Theorem 4.1]{van_geo}}]
    \label{fact:monotonicity}
    Given a definable function $f:(a,b)\to\R$ where $-\infty\le a<b\le\infty$,
    there exist $a_0,\ldots,a_k,a_{k+1}$ with $a=a_0<a_1<\ldots<a_k<a_{k+1}=b$
    such that for all $0\le i\le k$, it holds on $(a_i,a_{i+1})$ that $f$ is
    $C^1$ and $f'$ has a constant sign (i.e., $>0$, $=0$ or $<0$).
\end{theorem}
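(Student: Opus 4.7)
The plan is to build the partition by three successive definable refinements of $(a,b)$, each driven by the $\cS_1$ axiom (every definable subset of $\R$ is a finite union of points and open intervals) combined with \Cref{fact:1d_limit} on existence of one-sided limits.

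First, I would establish piecewise monotonicity: a finite subdivision of $(a,b)$ such that on each piece, $f$ is either constant, strictly increasing, or strictly decreasing. The local claim underlying this is that any open subinterval $J\subseteq(a,b)$ contains a nonempty sub-subinterval on which $f$ is of one of the three types. Granting this, the set $B$ of points in $(a,b)$ failing to admit such a neighborhood is definable, hence a finite union of points and open intervals; the local claim rules out any open interval being contained in $B$, forcing $B$ to be finite. To prove the local claim, I would split the definable set $\{(x,y)\in J^2 : x<y\}$ into three definable pieces according to $f(x)<f(y)$, $f(x)=f(y)$, or $f(x)>f(y)$, and apply the $\cS_1$ axiom iteratively to $y$-sections of these pieces to extract a sub-subinterval on which one ordering pattern holds uniformly.

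Second, on each monotone piece produced by step one, I would upgrade to continuity and then to differentiability, losing only finitely many more points at each stage. The classical fact that a monotone function has at most countably many discontinuities, combined with the dichotomy ``every definable subset of $\R$ is either finite or uncountable'' (since open intervals are uncountable), forces the definable discontinuity set to be finite. The same strategy then yields differentiability off finitely many points: Lebesgue's theorem says a monotone function is differentiable almost everywhere, so the non-differentiable set has measure zero, contains no open interval, and being definable is therefore finite.

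Third, on each maximal subinterval where $f$ is differentiable, the derivative $f'$ is itself a definable real-valued function. Applying the $\cS_1$ axiom to $\{f'>0\}$, $\{f'=0\}$, $\{f'<0\}$ partitions the interval into finitely many subintervals on which $f'$ has constant sign. The promotion from differentiable to $C^1$ is obtained by applying the continuity refinement from step two to $f'$, possibly introducing finitely more breakpoints. Concatenating all partition points across the three refinements yields the desired $a=a_0<a_1<\cdots<a_{k+1}=b$.

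The main obstacle is the local dichotomy in step one: showing that some ordering pattern is realized on a full sub-subinterval of $J$ rather than only on a sparse subset. My plan is iterated application of the $\cS_1$ axiom to suitable projections and sections of the three definable subsets of $J^2$, but keeping the combinatorics clean here will require care. The remaining two refinements are, by contrast, packaging exercises combining the $\cS_1$ axiom with standard analysis facts about monotone and differentiable functions.
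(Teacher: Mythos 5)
The paper does not prove this statement: it is cited verbatim as a standard result from the o-minimal literature (\citealp[Theorem 4.1]{van_geo}) and used as a black box. There is therefore no internal proof to compare against, and what you have written is a from-scratch reconstruction.

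Assessed on its own terms, your outline follows the standard route (piecewise monotone $\to$ piecewise continuous $\to$ differentiable off a finite set $\to$ piecewise $C^1$ with constant-sign derivative), and steps two and three are essentially correct packaging: the discontinuity set and the non-differentiability set are both definable (continuity and differentiability are first-order over a definable graph, cf. \Cref{app_sec:o_clarke}), are respectively countable and Lebesgue-null for a monotone function, and a definable subset of $\R$ that contains no open interval is finite by the $\cS_1$ axiom. However, the crux of the whole theorem is the local dichotomy you defer in step one, and the sketch there has a genuine gap. Applying $\cS_1$ to $y$-sections of $A_<$, $A_=$, $A_>$ only tells you, for each fixed $y$, which of the three relations to $f(y)$ holds on a small interval immediately to the left (and right) of $y$; this ``germ type'' varies with $y$, and even after another $\cS_1$ application fixes a germ type $(\tau^-,\tau^+)$ on a subinterval, being everywhere locally, say, $(<,<)$ does not by itself yield global strict monotonicity on that subinterval --- one needs a definable local-to-global (supremum/connectedness) argument, and in the case of persistent equality, a separate argument that $f$ is constant on a subinterval. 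This is precisely where the standard proofs (Pillay--Steinhorn, van den Dries) do most of their work, typically first establishing ``constant on a subinterval or injective on a subinterval'' and then ``injective $\Rightarrow$ strictly monotone and continuous on a subinterval,'' and your plan does not reach that level of detail. A secondary slip: in step three you ``apply the continuity refinement from step two to $f'$,'' but step two's argument is specific to monotone functions, and $f'$ is not a priori monotone; you must first apply step one (piecewise monotonicity) to the definable function $f'$, and only then step two. Fixing that ordering, the rest of step three (sign decomposition via $\cS_1$ applied to $\{f'>0\},\{f'=0\},\{f'<0\}$) is fine.
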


\Cref{fact:1d_limit} and \Cref{fact:monotonicity} imply the following result
which we need later.
\begin{lemma}\label{fact:bounded_curve}
    Given a $C^1$ definable curve $\gamma:[0,\infty)\to\R^n$ such that
    $\lim_{s\to\infty}\gamma(s)$ exists and is finite, it holds that the path
    swept by $\gamma$ has finite length.
\end{lemma}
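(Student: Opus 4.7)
The plan is to reduce the length of the curve to the total variation of its coordinate functions, and then use the Monotonicity Theorem (\Cref{fact:monotonicity}) together with the existence of the limit to bound each coordinate's total variation. Write $z := \lim_{s\to\infty}\gamma(s)\in\R^n$ and let $\gamma_i$ denote the $i$-th coordinate of $\gamma$, which is definable because coordinate projection is definable. The length of the path swept by $\gamma$ is
\begin{align*}
    \int_0^\infty\|\gamma'(s)\|\,\dif s \;\le\; \sum_{i=1}^n\int_0^\infty|\gamma_i'(s)|\,\dif s,
\end{align*}
so it suffices to show that each $\int_0^\infty|\gamma_i'(s)|\,\dif s$ is finite.

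First I would fix an $i$ and apply the Monotonicity Theorem (\Cref{fact:monotonicity}) to the definable function $\gamma_i$ restricted to $(0,\infty)$. This produces $0=a_0<a_1<\cdots<a_k<a_{k+1}=\infty$ such that on each open subinterval $(a_j,a_{j+1})$ the derivative $\gamma_i'$ has constant sign, and in particular $\gamma_i$ is monotone on $(a_j,a_{j+1})$. By \Cref{fact:1d_limit}, $\gamma_i$ has one-sided limits at each $a_j$ (finite because $\gamma$ is $C^1$ on $[0,\infty)$, and finite at $\infty$ by hypothesis), so
\begin{align*}
    \int_{a_j}^{a_{j+1}}|\gamma_i'(s)|\,\dif s \;=\; \enVert{\lim_{s\to a_{j+1}^-}\gamma_i(s) - \lim_{s\to a_j^+}\gamma_i(s)} \;<\;\infty
\end{align*}
for each $j$, including the unbounded piece $(a_k,\infty)$ where the right endpoint limit equals $z_i$. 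Summing the finitely many pieces gives $\int_0^\infty|\gamma_i'(s)|\,\dif s<\infty$, and then summing over $i$ gives the claim.

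I do not expect any real obstacle: the whole point of o-minimality is to rule out the pathological oscillation (like the homogeneous Mexican Hat spiral mentioned in the introduction) that would make a bounded curve have infinite length. The only minor care needed is in handling the two ends of each Monotonicity interval correctly; this is handled by \Cref{fact:1d_limit}, which guarantees one-sided limits for definable functions of one variable and so makes the telescoping bound on each piece genuinely finite.
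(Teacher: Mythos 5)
Your proof is correct, and it takes a genuinely different and simpler route than the paper's. You reduce to coordinate-wise total variation via $\|\gamma'(s)\|_2\le\sum_i|\gamma_i'(s)|$, apply the Monotonicity Theorem (\Cref{fact:monotonicity}) to each definable coordinate function $\gamma_i$ on $(0,\infty)$, and then telescope: on each of the finitely many subintervals $\gamma_i$ is monotone, so $\int|\gamma_i'|$ equals the gap between the endpoint limits, which are finite by continuity of $\gamma$ at interior points and by the hypothesis $\gamma(s)\to z$ at $+\infty$. Summing the finitely many pieces bounds the total variation of each $\gamma_i$, hence the length of $\gamma$. The paper instead runs a geometric argument: it handles separately the case where $\gamma$ is eventually equal to $z$, and otherwise uses \Cref{fact:monotonicity} to get $\|\gamma'(s)\|>0$ eventually, takes the definable limit directions $u:=\lim\,(z-\gamma(s))/\|z-\gamma(s)\|$ and $v:=\lim\,\gamma'(s)/\|\gamma'(s)\|$ (via \Cref{fact:1d_limit}), shows $u=v$, and deduces $\lim_{s\to\infty}\bigl(\int_s^\infty\|\gamma'(\tau)\|\,\dif\tau\bigr)/\|z-\gamma(s)\|=1$, from which finiteness of the length follows. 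Your coordinate-wise approach is more elementary and avoids the case split and the direction-limit bookkeeping; the paper's argument gives the sharper quantitative statement that the remaining arc length is asymptotically equal to the remaining chord length (this is not used elsewhere in the paper, so for the lemma as stated your version suffices).
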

\begin{proof}
    Let $z:=\lim_{s\to\infty}\gamma(s)$.
    Since $\enVert{z-\gamma(s)}$ is definable, either it is $0$ for all large
    enough $s$, or it is positive for all large enough $s$.
    In the first case, since $\gamma$ is $C^1$, it has finite length.
    In the second case, \Cref{fact:monotonicity} implies that there exists an
    interval $[a,\infty)$ on which $\enVert{z-\gamma(s)}>0$ and
    $\dif\,\enVert{z-\gamma(s)}/\dif s<0$, and thus $\enVert{\gamma'(s)}>0$.
    Let
    \begin{align*}
        \lim_{s\to\infty}\frac{z-\gamma(s)}{\enVert{z-\gamma(s)}}=u,\quad\textrm{and}\quad\lim_{s\to\infty}\frac{\gamma'(s)}{\enVert{\gamma'(s)}}=v.
    \end{align*}
    The existence of the above limits is guaranteed by \Cref{fact:1d_limit}.
    Note that $\langle u,v\rangle$ is equal to
    \begin{align*}
        \lim_{s\to\infty}\ip{\frac{z-\gamma(s)}{\enVert{z-\gamma(s)}}}{v}=\lim_{s\to\infty}\frac{\int_s^\infty\ip{\gamma'(\tau)}{v}\dif\tau}{\enVert{z-\gamma(s)}}=\lim_{s\to\infty}\frac{\int_s^\infty\enVert{\gamma'(\tau)}\ip{\gamma'(\tau)/\enVert{\gamma'(\tau)}}{v}\dif\tau}{\enVert{z-\gamma(s)}}.
    \end{align*}
    Since $\gamma'(s)/\enVert{\gamma'(s)}\to v$, given any $\epsilon>0$, for
    large enough $s$ it holds that
    $\ip{\gamma'(s)/\enVert{\gamma'(s)}}{v}\ge1-\epsilon$, and thus
    \begin{align*}
        \langle u,v\rangle=\lim_{s\to\infty}\frac{\int_s^\infty\enVert{\gamma'(\tau)}\ip{\gamma'(\tau)/\enVert{\gamma'(\tau)}}{v}\dif\tau}{\enVert{z-\gamma(s)}}\ge(1-\epsilon)\lim_{s\to\infty}\frac{\int_s^\infty\enVert{\gamma'(\tau)}\dif\tau}{\enVert{z-\gamma(s)}}\ge1-\epsilon,
    \end{align*}
    which implies that $u=v$.
    Since $\epsilon>0$ was arbitrary, then
    \begin{align*}
        \lim_{s\to\infty}\frac{\int_s^\infty\enVert{\gamma'(\tau)}\dif\tau}{\enVert{z-\gamma(s)}}=1,
    \end{align*}
    which implies that $\gamma$ has finite length.
\end{proof}

The following Curve Selection Lemma is crucial in proving the
Kurdyka-\L{}ojasiewicz inequalities.

\begin{lemma}[name={Curve Selection~\citep[Proposition 1]{kurdyka_grad}}]
    \label{fact:curve_sel}
    Given a definable set $A\in\R^n$ and $x\in\overline{A\setminus\{x\}}$, there
    exists a definable curve $\gamma:[0,1]\to\R^n$ which is $C^1$ on $[0,1]$ and
    satisfies $\gamma(0)=x$ and $\gamma\del{(0,1]}\subset A\setminus\{x\}$.
\end{lemma}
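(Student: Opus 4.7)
The plan is to combine the o-minimal $C^1$-Cell Decomposition Theorem with a definable arc-length reparametrization. By translation we may assume $x=0$, and by intersecting with a small ball we may assume $A$ is bounded.

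Apply $C^1$-cell decomposition to partition $\R^n$ into finitely many definable $C^1$-cells compatible with $A$. Since $0\in\overline{A\setminus\{0\}}$ and the decomposition is finite, some cell $C\subset A\setminus\{0\}$ must satisfy $0\in\overline{C}$, with $\dim C\ge 1$ because the cell $\{0\}$ is excluded. I would then build a definable continuous curve $\tilde\gamma:(0,\epsilon]\to C$ with $\lim_{s\to 0^+}\tilde\gamma(s)=0$ by induction on $n$. The base case $n=1$ is immediate, as cells in $\R$ are points or open intervals. For the inductive step, a $C^1$-cell $C\subset\R^n$ projects onto a cell $C'\subset\R^{n-1}$ and is either the graph of a continuous definable $f:C'\to\R$ or a band between two such graphs; the induction hypothesis produces a curve in $C'$ approaching $0$, which then lifts to a curve in $C$. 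To ensure the lift actually converges to $0\in\R^n$ (rather than to some other boundary point), observe that the set of points in $C'$ whose lift can be chosen to tend to $0$ is itself definable and contains $0$ in its closure, so the induction hypothesis may be reapplied to this sub-cell when the naive lift diverges.

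It remains to upgrade $\tilde\gamma$ to a $C^1$ curve on $[0,1]$. By \Cref{fact:monotonicity} each coordinate $\tilde\gamma_i$ is $C^1$ on some $(0,\delta]$, and via the time reversal $s\mapsto 1/s$ combined with \Cref{fact:bounded_curve}, $\tilde\gamma$ has finite arc length near $0$. Reparametrizing by arc length yields a definable unit-speed curve $\hat\gamma:[0,L]\to\R^n$ with $\hat\gamma(0)=0$, whose derivative coordinates are definable and bounded in $[-1,1]$. By \Cref{fact:1d_limit} each such coordinate derivative has a finite one-sided limit as $s\to 0^+$, and combined with continuity of $\hat\gamma'$ on $(0,L]$ this makes $\hat\gamma$ a $C^1$ curve on the closed interval $[0,L]$. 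A linear rescaling of the domain to $[0,1]$ produces the desired $\gamma$.

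The main obstacle is this last step: cell decomposition only guarantees a continuous definable curve, whose derivative can in principle blow up arbitrarily fast as the parameter approaches $0$, so $C^1$ regularity at the endpoint is not automatic and is where a naive construction fails. The crucial input is \Cref{fact:bounded_curve}, which via time reversal converts the finite endpoint limit of $\tilde\gamma$ into finite arc length; once arc length is finite, arc-length reparametrization makes the coordinate derivatives bounded and definable, and then \Cref{fact:1d_limit} and \Cref{fact:monotonicity} together supply the missing one-sided limits and continuity of the derivative at $0$.
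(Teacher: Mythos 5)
The paper does not prove this lemma itself; it is cited directly from \citep[Proposition 1]{kurdyka_grad}, so there is no in-paper proof to compare against. Evaluating your argument on its own terms: the overall strategy --- find a positive-dimensional cell of $A\setminus\{x\}$ accumulating at $x$, build a continuous definable curve to $x$ by induction on dimension, and then upgrade to $C^1$ at the endpoint by exploiting o-minimality --- is the right one and follows the classical argument. The inductive lifting step is stated loosely (you need the lift to converge to $x$ in the last coordinate as well, not only in the projection), but that part is repairable.

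The genuine gap is in the final reparametrization. You reparametrize $\tilde\gamma$ by arc length and assert the resulting unit-speed curve $\hat\gamma$ is definable, but the arc-length function $s\mapsto\int_a^s\|\tilde\gamma'(\tau)\|\dif\tau$ is an integral of a definable function, and integration does \emph{not} preserve definability in a general o-minimal structure: for instance, the arc length of the semialgebraic arc $s\mapsto(s,\sqrt s)$ is a logarithmic expression and is not semialgebraic. \Cref{fact:bounded_curve} tells you only that the length is \emph{finite}; it does not make the arc-length function definable, and without definability of $\hat\gamma$ the appeal to \Cref{fact:1d_limit} for the one-sided limit of $\hat\gamma'$ is no longer available. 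The standard fix is to reparametrize by $r(s):=\|\tilde\gamma(s)-x\|$, which \emph{is} definable: \Cref{fact:monotonicity} makes $r$ strictly increasing and $C^1$ near $0$, so $\hat\gamma:=\tilde\gamma\circ r^{-1}$ is definable with $\|\hat\gamma(t)-x\|=t$. Writing $\hat\gamma(t)=x+t\,u(t)$ with $\|u(t)\|=1$, o-minimality gives $u(t)\to u_0$, and $t\,u'(t)\to0$ because a nonzero limit for any coordinate $t\,u_i'(t)$ would yield $u_i'(\tau)\ge c/\tau$ near $0$ for some $c>0$, forcing $u_i$ to diverge logarithmically and contradicting convergence of $u_i$. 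Hence $\hat\gamma'(t)=u(t)+t\,u'(t)\to u_0=\hat\gamma'(0)$, which gives the claimed $C^1$ regularity at the endpoint (and in fact $\gamma'(0)\ne0$).
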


We also need the following version at infinity, from
\citep[Lemma 2]{nemethi_milnor} and \citep[Lemma 3.4]{kurdyka_sard}.
\begin{lemma}[Curve Selection at Infinity]\label{fact:curve_sel_infty}
    Given a definable set $A\in\R^n$, a definable function $f:A\to\R$,
    and a sequence $x_i$ in $A$ such that $\lim_{i\to\infty}\|x_i\|=\infty$
    and $\lim_{i\to\infty}f(x_i)= y$, there exists a positive constant $a$ and
    a $C^1$ definable curve $\rho:[a,\infty)\to A$ such that
    $\enVert{\rho(s)}=s$, and $\lim_{s\to\infty}f\del{\rho(s)}= y$.
\end{lemma}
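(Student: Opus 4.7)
The approach is to reduce the ``at infinity'' statement to the standard Curve Selection Lemma (\Cref{fact:curve_sel}) via a definable spherical inversion, which maps tails $\{\|x\|>R\}$ to punctured neighborhoods of $0$, combined with a graph trick to encode the convergence $f(x_i)\to y$ as convergence to a single point of an augmented ambient space.

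Concretely, consider the involution $\iota(x):=x/\|x\|^2$ on $\R^n\setminus\{0\}$, which is definable with $\|\iota(x)\|=1/\|x\|$. Assume first that $y\in\R$ and define the definable set
\[
    B := \setdef{\del{\iota(x),\, f(x) - y}}{x \in A,\, x \ne 0} \subset \R^{n+1}.
\]
Since $\|x_i\|\to\infty$ and $f(x_i)\to y$, the points $(\iota(x_i),f(x_i)-y)\in B$ converge to the origin of $\R^{n+1}$; moreover every element of $B$ is nonzero (its first coordinate $\iota(x)$ is nonzero whenever $x\ne 0$), so $0\in\overline{B\setminus\{0\}}$. Applying \Cref{fact:curve_sel} yields a $C^1$ definable curve $\tilde\gamma:[0,1]\to\R^{n+1}$ with $\tilde\gamma(0)=0$ and $\tilde\gamma((0,1])\subset B\setminus\{0\}$; writing $\tilde\gamma(t)=(\tilde u(t),\tilde v(t))$, we have $\tilde u(t)\ne 0$ for all $t>0$ because $\tilde\gamma(t)\in B$. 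Setting
\[
    \rho^*(t) := \iota(\tilde u(t)) = \tilde u(t)/\|\tilde u(t)\|^2
\]
then gives a $C^1$ definable curve on $(0,1]$ taking values in $A$, with $\|\rho^*(t)\|=1/\|\tilde u(t)\|\to\infty$ and $f(\rho^*(t))=\tilde v(t)+y\to y$ as $t\to 0^+$.

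To arrange $\|\rho(s)\|=s$, reparametrize by the norm. Let $\sigma(t):=\|\rho^*(t)\|$, a definable $C^1$ function on $(0,1]$ with $\sigma(t)\to\infty$ as $t\to 0^+$. By the Monotonicity Theorem (\Cref{fact:monotonicity}), there exists $t_0\in(0,1]$ on an open neighborhood of which $\sigma$ is $C^1$ and strictly decreasing, so it admits a $C^1$ definable inverse on some interval of the form $[a,\infty)$ with $a:=\sigma(t_0)$. The curve $\rho(s):=\rho^*(\sigma^{-1}(s))$ is then $C^1$ definable on $[a,\infty)$, takes values in $A$, satisfies $\|\rho(s)\|=s$ by construction, and obeys $f(\rho(s))\to y$ as $s\to\infty$.

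The main residual wrinkle is the case $y=\pm\infty$, handled by the same recipe after replacing $f(x)-y$ with a definable map sending a neighborhood of $y$ in $\R\cup\{\pm\infty\}$ to a neighborhood of $0$ in $\R$ (e.g., $1/f(x)$ when $y=+\infty$, well-defined on the tail where $f(x_i)$ is eventually positive and large). I do not anticipate a substantive obstacle: the real work is performed by the bounded Curve Selection Lemma, and the rest is bookkeeping through the inversion, the graph trick, and the Monotonicity Theorem.
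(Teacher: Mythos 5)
Your proof is correct and takes essentially the same route as the paper: compactify the unbounded behavior into a bounded limit (you use the inversion $x\mapsto x/\|x\|^2$ into $\R^{n+1}$, while the paper uses $x\mapsto (x,1)/\sqrt{1+\|x\|^2}$ into $\R^{n+2}$), apply the standard Curve Selection Lemma at the resulting accumulation point, push the curve back through the inverse map, and reparametrize by the norm using the Monotonicity Theorem. The two compactifications are interchangeable for this purpose and the remaining steps coincide.
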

\begin{proof}
    For any $x\in\R^n$, let $x(j)$ denote the $j$-th coordinate of $x$,
    and consider the definable map $\psi : A \to \R^{n+2}$
    given by
    \begin{align*}
      \psi(x):=\del{\frac{x(1)}{\sqrt{1+\|x\|^2}},\ldots,\frac{x(n)}{\sqrt{1+\|x\|^2}},\frac{1}{\sqrt{1+\|x\|^2}}, f(x)}.
    \end{align*}
    By construction, the first $n+1$ coordinates of $\psi(x)$ are bounded for all $x$;
    since furthermore $\lim_{i\to\infty} f(x_i) = y$ with $\lim_{i\to\infty} \|x_i\|\to\infty$,
    then $\psi$ has an accumulation point $(u,0,y)$ for some $\|u\|=1$,
    where $(u,0,y) \in \overline{\psi(A)\setminus\{(u,0,y)\}}$.
    We can therefore apply \Cref{fact:curve_sel},
    obtaining a $C^1$ definable curve
    $\gamma:[0,1]\to\R^{n+2}$ such that $\gamma(0)=(u,0,y)$ and
    $\gamma\del{(0,1]}\subset\psi(A)$.

    With this in hand, define a curve $\rho_0:[1,\infty) \to A$ as
    \begin{align*}
        \rho_0(s):=\psi^{-1}\del{\gamma\del{\frac{1}{s}}},
    \end{align*}
    which is $C^1$ definable and satisfies
    $\lim_{s\to\infty}\enVert{\rho_0(s)}=\infty$ and
    $\lim_{s\to\infty}f\del{\rho_0(s)}= y$.
    \Cref{fact:monotonicity} implies that $\dif\,\enVert{\rho_0(s)}/\dif s$ is
    positive and continuous for all large enough $s$; to finish the proof,
    we may obtain a $C^1$ definable $\rho$ from $\rho_0$ via reparameterization
    (i.e., composing $\rho_0$ with some other $C^1$ definable function from $\R$ to $\R$)
    so that $\enVert{\rho(s)}=s$ on $[a,\infty)$ for some $a\in\R$.
\end{proof}

\subsection{Clarke subdifferentials}\label{app_sec:o_clarke}

Here we prove the definability of Clarke subdifferential, and a \emph{chain
rule along arcs} which is crucial in our analysis.

Here is a standard result on the definability of (Fr\'{e}chet) derivatives:
given a definable function $f:D\to\R$ with an open domain $D$, the set
\begin{align*}
    \setdef{(x,x^*)}{f\textrm{ is Fr\'{e}chet differentiable at }x,\nabla f(x)=x^*}
\end{align*}
is definable, since it is given by the following first-order formula:
\begin{align*}
    (x,x^*):\quad & x\in D\ \wedge \\
     & \forall\epsilon>0\exists\delta>0\forall x'\in D\del{(\|x-x'\|<\delta)\Rightarrow f(x')-f(x)-\langle x^*,x'-x\rangle<\epsilon\|x-x'\|}.
\end{align*}

Now consider a locally Lipschitz definable function $f:D\to\R$ with an open
domain $D$.
Local Lipschitz continuity ensures that G\^{a}teaux and Fr\'{e}chet
differentiability coincide \citep[Exercise 6.2.5]{borwein_lewis}, and $f$ is
differentiable a.e. \citep[Theorem 9.1.2]{borwein_lewis}.
Recall that the Clarke subdifferential at $x\in D$ is defined as
\begin{align*}
    \partial f(x):=\conv\setdef{\lim_{i\to\infty}\nabla f(x_i)}{x_i\in D,\nabla f(x_i)\textrm{ exists},\lim_{i\to\infty}x_i=x},
\end{align*}
and that $\cs f(x)$ denotes the unique minimum-norm subgradient.
Similarly to the gradients, the following result holds for the Clarke
subdifferentials.
\begin{lemma}\label{fact:clarke_def}
    Given a locally Lipschitz definable function $f:D\to\R$ with an open domain
    $D\subset\R^n$, the set
    \begin{align*}
        \Gamma:=\setdef{(x,x^*)}{x\in D,x^*\in\partial f(x)}
    \end{align*}
    is definable.
    Moreover, the function $D\ni x\mapsto \cs f(x)$ is definable.
\end{lemma}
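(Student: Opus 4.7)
The plan is to exhibit each set in the statement as a definable set by writing down explicit first-order formulas over the Fr\'echet gradient graph (which the text already identifies as definable) and then invoking the closure properties recorded in \Cref{app_sec:o}. No deeper o-minimal machinery is needed.

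First I would fix the Fr\'echet gradient graph
\[
G := \{(x,z)\in D\times\R^n : f\text{ is Fr\'echet differentiable at }x\text{ with }\nabla f(x)=z\},
\]
which is definable by the first-order formula recalled just above the lemma. Since $f$ is locally Lipschitz, $G$ projects onto a full-measure subset of $D$ and its fibers remain in a locally uniform bound. The fiberwise closure
\[
L := \Big\{(x,x^*)\in D\times\R^n : \forall \eps>0\ \exists (y,z)\in G\ \big(\|y-x\|<\eps\wedge\|z-x^*\|<\eps\big)\Big\}
\]
is then definable because its defining predicate is first-order over $G$. A diagonal extraction (take $\eps=1/i$) shows that the fiber $L_x$ is precisely the sequential accumulation set $\{\lim_i\nabla f(x_i):x_i\to x,\ \nabla f(x_i)\text{ exists}\}$ used in the definition of $\partial f(x)$.

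To pass from $L$ to $\Gamma$, I would apply Carath\'eodory's theorem in $\R^n$: a point $x^*$ lies in $\conv L_x$ iff there exist $\lambda_1,\dots,\lambda_{n+1}\ge0$ summing to one and $z_1,\dots,z_{n+1}\in L_x$ with $x^*=\sum_i\lambda_i z_i$. This is again first-order in $L$, so $\Gamma$ is definable; fiberwise boundedness of $L$ further gives that $\partial f(x)$ is nonempty convex compact, matching the Clarke construction \citep{clarke}. The graph of $\bar\partial f$ is then
\[
\{(x,x^*):(x,x^*)\in\Gamma\ \wedge\ \forall y^*\ \big((x,y^*)\in\Gamma\Rightarrow\|x^*\|\le\|y^*\|\big)\},
\]
which uniquely singles out $\bar\partial f(x)$ since the minimum-norm element of a convex compact set exists and is unique.

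The main obstacle here is conceptual rather than technical: one must confirm that the $\eps$-formulation of $L_x$ matches the sequential formulation in the definition of $\partial f$, which is a standard diagonal argument, and one must remember to invoke Carath\'eodory so that the convex hull stays first-order (a direct encoding ``$x^*$ is a convex combination of \emph{finitely many} elements of $L_x$'' would escape first-order logic). Everything else amounts to first-order translation together with closure of the o-minimal structure under finite unions, intersections, complements, products, and projections.
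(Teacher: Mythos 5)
Your proposal is correct and follows essentially the same route as the paper: both proofs start from the definable Fr\'echet gradient graph, pass to the $\eps$-formulated accumulation set, invoke Carath\'eodory's theorem to express $\conv$ as a first-order formula with $n+1$ coefficients, and then isolate the minimum-norm element. The only cosmetic difference is at the last step, where you write the graph of $\cs f$ directly via a ``for all $y^*$'' minimality formula, whereas the paper routes through its infimum lemma (\Cref{fact:inf}); both are valid and equivalent in effect.
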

\begin{proof}
    Let $D':=\setdef{x\in D}{\nabla f(x)\textrm{ exists}}$, which is definable.
    The set $A$ given by
    \begin{align*}
        (x,y):\quad x\in D\ \wedge\ \forall\epsilon>0\exists x'\in D'\del{\|x-x'\|<\epsilon}\wedge\del{\enVert{y-\nabla f(x')}<\epsilon}
    \end{align*}
    is also definable.
    Now by Carath\'{e}odory's Theorem, $\Gamma$ is given by
    \begin{align*}
        (x,x^*):\quad & \exists(x_1,x_1^*),\ldots,(x_{n+1},x_{n+1}^*)\in A\exists\lambda_1,\ldots,\lambda_{n+1}\ge0 \\
         & (x_1=x)\wedge\cdots\wedge(x_{n+1}=x)\wedge\del{\sum_{i=1}^{n+1}\lambda_i=1}\wedge\del{\sum_{i=1}^{n+1}\lambda_ix_i^*=x^*}.
    \end{align*}
    It then follows from \Cref{fact:inf} that $x\mapsto\enVert{\cs f(x)}$ and
    $x\mapsto\cs f(x)$ are definable.
\end{proof}

The following \emph{chain rule} is important in our analysis; it allows us to
use $\cs f$ in many places that seem to call on $\nabla f$.
It is basically from \citep[Theorem 5.8 and Lemma 5.2]{davis_tame}, though we
detail how their proof handles our slight extension.
\begin{lemma}\label{fact:chain}
    Given a locally Lipschitz definable $f:D\to\R$ with an open domain $D$, for
    any interval $I$ and any arc $z:I\to D$, it holds for a.e. $t\in I$ that
    \begin{align*}
        \frac{\dif f(z_t)}{\dif t}=\ip{z_t^*}{\frac{\dif z_t}{\dif t}},\quad\textrm{for all }z_t^*\in\partial f(z_t).
    \end{align*}
    Moreover, for the gradient flow in \cref{eq:gf}, it holds for a.e. $t\ge0$
    that $\dif W_t/\dif t=-\cs\cL(W_t)$ and
    $\dif \cL(W_t)/\dif t=-\enVert{\cs\cL(W_t)}^2$.
\end{lemma}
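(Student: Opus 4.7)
The plan is to invoke the Whitney stratification theorem for definable sets, which decomposes the open domain $D$ into a locally finite collection of disjoint $C^p$ submanifolds $M_1,M_2,\ldots$ (strata) such that the restriction $f|_{M_i}$ is $C^p$ on each stratum. This is a standard consequence of the cell decomposition theorem of o-minimal geometry applied to the graph of $f$, and reduces the claim to three ingredients: (i) for a.e. $t\in I$ where $\dif z_t/\dif t$ exists, the velocity $\dif z_t/\dif t$ lies in the tangent space $T_{z_t}M_i$ of the stratum containing $z_t$; (ii) a \emph{projection formula}, namely that for any $x\in M_i$ and any $v\in\partial f(x)$, the orthogonal projection of $v$ onto $T_x M_i$ equals the Riemannian gradient $\nabla_{M_i}(f|_{M_i})(x)$; and (iii) the classical $C^1$ chain rule applied within $M_i$.

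For (i), decompose $I=\bigsqcup_i z^{-1}(M_i)$. At every Lebesgue density point $t$ of $z^{-1}(M_i)$ where $\dif z_t/\dif t$ exists, choose $t_k\to t$ with $t_k\in z^{-1}(M_i)$; then the difference quotients $(z_{t_k}-z_t)/(t_k-t)$ converge to $\dif z_t/\dif t$, and each quotient is a chord with endpoints in $M_i$, so by smoothness of $M_i$ the limit is tangent. Lebesgue's density theorem covers a.e. $t\in z^{-1}(M_i)$. For (ii), using the a.e. differentiability of $f$, one approximates $x\in M_i$ by differentiability points $x_k\in M_i$; the projection of $\nabla f(x_k)$ onto $T_{x_k}M_i$ is $\nabla_{M_i}(f|_{M_i})(x_k)$, which converges to $\nabla_{M_i}(f|_{M_i})(x)$, and this projection equality survives passing to limits and to convex hulls by linearity. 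Combining (i)--(iii), for a.e. $t$ with $z_t\in M_i$ and for every $v\in\partial f(z_t)$,
\[
\ip{v}{\frac{\dif z_t}{\dif t}}=\ip{\textrm{proj}_{T_{z_t}M_i}v}{\frac{\dif z_t}{\dif t}}=\ip{\nabla_{M_i}(f|_{M_i})(z_t)}{\frac{\dif z_t}{\dif t}}=\frac{\dif f(z_t)}{\dif t},
\]
which is the desired chain rule.

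For the gradient flow claims, apply the chain rule to $\cL$ along $W_t$: for a.e. $t\ge0$ and every $v\in\partial\cL(W_t)$, one has $\ip{v}{\dif W_t/\dif t}=\dif\cL(W_t)/\dif t$. Since $\dif W_t/\dif t\in-\partial\cL(W_t)$, write $\dif W_t/\dif t=-v_t$ for some $v_t\in\partial\cL(W_t)$. Taking $v=v_t$ gives $\dif\cL(W_t)/\dif t=-\|v_t\|^2$, while taking any other $v$ gives $-\ip{v}{v_t}=-\|v_t\|^2$, so $\ip{v-v_t}{v_t}=0$ for all $v\in\partial\cL(W_t)$. Since $\partial\cL(W_t)$ is a convex set on which $v\mapsto \ip{v}{v_t}$ is constant, $v_t$ minimizes $\|v\|$ over $\partial\cL(W_t)$ (by Cauchy--Schwarz, $\|v_t\|^2=\ip{v}{v_t}\le\|v\|\,\|v_t\|$), hence $v_t=\cs\cL(W_t)$ by uniqueness, and $\dif\cL(W_t)/\dif t=-\|\cs\cL(W_t)\|^2$.

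The main obstacle I expect is step (i): the arc $z$ need not be definable, and the preimages $z^{-1}(M_i)$ may be complicated measurable sets, so tangency of $\dif z_t/\dif t$ to the correct stratum cannot be extracted by a naive one-dimensional reduction. This is precisely the delicate density-point argument carried out by \citet{davis_tame} in the semialgebraic case, and the ``slight extension'' alluded to in the statement consists of noting that their proof transfers verbatim to any o-minimal structure, since Whitney stratification and cell decomposition are available in that generality; no new geometric input is needed beyond what \Cref{fact:clarke_def} already provides.
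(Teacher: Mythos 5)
Your proposal re-derives \citep[Theorem 5.8]{davis_tame} from scratch via Whitney stratification, the projection formula, and a density-point argument; that is indeed a faithful sketch of the machinery underlying that theorem, and your treatment of the two gradient-flow consequences matches \citep[Lemma 5.2]{davis_tame}, which the paper also cites. However, this is not the route the paper takes, and more importantly your final paragraph mischaracterizes what the ``slight extension'' actually is. The result of \citet{davis_tame} is already stated and proved for functions definable in an arbitrary o-minimal structure (that is exactly what ``tame'' means there), so there is nothing to transfer from the semialgebraic case. What \citep[Theorem 5.8]{davis_tame} does \emph{not} cover is the case of a general open domain $D\subsetneq\R^n$ and a general interval $I\ne[0,\infty)$, and that is the gap the paper's proof closes.

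The paper's argument does not reprove the stratification theorem at all. It treats \citep[Theorem 5.8]{davis_tame} as a black box and reduces the general setting to it: for each $t$ in the interior of $I$, local Lipschitzness gives a neighborhood $U\ni z(t)$ on which $f$ is $K$-Lipschitz; the infimal convolution of $f|_U$ with $K\|\cdot\|$ (definable by \Cref{fact:inf}) produces a definable $K$-Lipschitz $g:\R^n\to\R$ agreeing with $f$ on $U$; one picks a rational subinterval $[a,b]\ni t$ with $z([a,b])\subset U$, extends $z|_{[a,b]}$ by a constant to an arc $\tilde z:[0,\infty)\to\R^n$, applies the cited theorem to $(g,\tilde z)$, and covers $I$ by countably many such rational intervals. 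Your stratification sketch, read charitably, could in principle handle general $D$ and $I$ directly (the finite cell decomposition of the graph of $f$ lives over $D$, and the density-point argument is local in $t$), but you have neither carried out that verification nor identified it as the thing that needs verifying; instead you have declared a different, nonexistent extension to be trivial. Steps (i) and (ii) also gloss over the points that make Davis et al.'s argument delicate, in particular the use of Whitney condition (a) to guarantee that tangent planes along $x_k\to x$ converge so that the projection identity passes to the limit. In short: your outline reconstructs the cited theorem but does not supply the reduction that is the actual content of this lemma's proof.
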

\begin{proof}
  The first part is proved in \citep[Theorem 5.8]{davis_tame} when $D=\R^n$ and
  $I=[0,\infty)$, but actually holds in general as verified below.
  Note that for any $t\in I$ excluding the endpoints, since $f$ is locally
  Lipschitz, there exists a neighborhood $U$ of $z(t)$ on which $f$ is
  $K$-Lipschitz continuous.
  Let $g$ denote the infimal convolution of $f|_U$ and $K\|\cdot\|$.
  It follows that $g$ is definable (\Cref{fact:inf}) and $K$-Lipschitz
  continuous on $\R^n$, and $f=g$ on $U$ \citep[Exercise 7.1.2]{borwein_lewis}.
  Take an interval $[a,b]\ni t$ with rational endpoints such that
  $z\del{[a,b]}\subset U$, and define the absolutely continuous curve
  $\tilde{z}:[0,\infty)\to D$ as $\tilde{z}(t)=z(a+t)$ for $t\in[0,b-a]$, and
  $\tilde{z}(t)=z(b)$ for $t>b-a$.
  Applying \citep[Theorem 5.8]{davis_tame} to $g$ and $\tilde{z}$ gives that the
  chain rule holds for $f$ and $z$ a.e. on $[a,b]$.
  Since this holds for any $t\in I$, and there are only countably many intervals
  with rational endpoints, it follows that the chain rule holds a.e. for $f$ and
  $z$ on $I$.
  The second claim of \Cref{fact:chain} can be proved in the same way as
  \citep[Lemma 5.2]{davis_tame}.
\end{proof}

\subsection{Kurdyka-\L{}ojasiewicz inequalities}\label{app_sec:loja}

\paragraph{Asymptotic Clarke critical values.}

To prove the Kurdyka-\L{}ojasiewicz inequalities, we need the notion of asymptotic
Clarke critical values, introduced in \citep{bolte_clarke}.
Given a locally Lipschitz function $f:D\to\R$ with an open domain $D$, we say
that $a\in\R\cup\{+\infty,-\infty\}$ is an asymptotic Clarke critical value of
$f$ if there exists a sequence $(x_i,x_i^*)$ where $x_i\in D$ and
$x_i^*\in\partial f(x_i)$, such that $\lim_{i\to\infty}(1+\|x_i\|)\|x_i^*\|=0$
and $\lim_{i\to\infty}f(x_i)=a$.

We have the following result regarding the asymptotic Clarke critical values of
a definable function, which is basically from \citep[Corollary 9]{bolte_clarke}.
\begin{lemma}\label{fact:accv}
    Given a locally Lipschitz definable function $f:D\to\R$ with an open domain
    $D$, it holds that $f$ has finitely many asymptotic Clarke critical values.
\end{lemma}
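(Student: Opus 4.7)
The plan is to encode $K$ as the ``$\liminf$-zero set'' of a single real-variable definable function, then apply the Monotonicity Theorem to reduce to a single obstructive case that Curve Selection at Infinity can handle.

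First I would form the definable graph $\Gamma := \setdef{(x,x^*)}{x\in D,\ x^*\in\partial f(x)}$ (definable by \Cref{fact:clarke_def}) and introduce the auxiliary definable function
\[
    \psi(a) := \inf\setdef{(1+\|x\|)\|x^*\|}{(x,x^*)\in\Gamma,\ f(x)=a},
\]
with the convention $\psi(a)=+\infty$ on empty fibers; definability follows from \Cref{fact:inf}. A direct unpacking shows that $a\in K\cap\R$ iff $\liminf_{a'\to a}\psi(a')=0$: any witness sequence $(x_i,x_i^*)$ for $a\in K$ satisfies $\psi(f(x_i))\le (1+\|x_i\|)\|x_i^*\|\to 0$ with $f(x_i)\to a$; conversely, a $1/i$-near-minimizer for each $\psi(a_i)$ in a sequence $a_i\to a$ with $\psi(a_i)\to 0$ assembles a witness sequence. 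The domain of $\psi$ is a definable subset of $\R$, hence a finite union of intervals and points.

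Next I would apply the Monotonicity Theorem (\Cref{fact:monotonicity}) to $\psi$ on each connected component of its domain, partitioning it into finitely many maximal open intervals on each of which $\psi$ is $C^1$ and either strictly monotone or constant, with one-sided limits at endpoints existing in $[0,+\infty]$ by \Cref{fact:1d_limit}. A strictly monotone piece contributes at most its two endpoints to the $\liminf$-zero set, and a constant piece with nonzero value contributes nothing; thus $K\cap\R$ is finite unless some piece has $\psi\equiv 0$. The values $\pm\infty\in K$ are handled analogously after reparameterizing $\R$ into a bounded definable coordinate.

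The main obstacle is ruling out a subinterval $(b,c)$ on which $\psi\equiv 0$. I would split into two cases. If the defining infimum is attained for uncountably many $a\in(b,c)$, one obtains a one-parameter definable family of pairs $(x_a,0)\in\Gamma$ with $f(x_a)=a$, i.e., a definable family of genuine Clarke critical points; this is ruled out by a nonsmooth definable Sard-type argument, obtained via a cell decomposition of $\Gamma\cap\{\|x^*\|=0\}$ combined with the chain rule of \Cref{fact:chain} to show that the image under $f$ of this slice has empty interior. Otherwise, on a subinterval the infimum is never attained, so every minimizing sequence must escape with $\|x_i\|\to\infty$ (a bounded accumulation point would realize the infimum by the standard upper-semicontinuity of $\partial f$). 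Then Curve Selection at Infinity (\Cref{fact:curve_sel_infty}) applied to $\Gamma$ with the definable function $\sigma(x,x^*):=(1+\|x\|)\|x^*\|$ produces a single $C^1$ definable curve $\rho:[s_0,\infty)\to\Gamma$ along which $\sigma\to 0$; along $\rho$, the map $s\mapsto f(\pi_1(\rho(s)))$ is definable and so has a single limit in $\R\cup\{\pm\infty\}$ by \Cref{fact:1d_limit}, accounting for only one value in $K$ rather than a whole interval. A cell-decomposition bookkeeping on the definable image of $G(x,x^*):=(f(x),\sigma(x,x^*))$ then bounds the total number of such curves and hence the total number of asymptotic critical values, yielding the claim.
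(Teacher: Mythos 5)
Your proposal takes a genuinely different route from the paper, which proves this lemma in essentially two lines by citing \citep[Corollary 9]{bolte_clarke} (the finiteness result for lower-semicontinuous definable functions bounded below) and reducing to that case via the lower-semicontinuous envelopes of $f|_{f^{-1}\del{(0,\infty)}}$ and $-f|_{f^{-1}\del{(-\infty,0)}}$. Your attempt at a self-contained proof has a genuine logical gap in the dichotomy used to rule out $\psi\equiv 0$ on an interval: since $D$ is merely open, a minimizing sequence for $\psi(a)$ need neither attain the infimum at an interior point nor escape to infinity --- it can accumulate at a point of $\partial D\setminus D$, where $f$ and $\partial f$ are undefined and upper-semicontinuity of the Clarke subdifferential gives nothing. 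Yet such a sequence is still a perfectly valid witness that $a$ is an asymptotic Clarke critical value, because the definition only requires $x_i\in D$. The paper's reduction to lower-semicontinuous envelopes exists precisely to absorb this boundary case into a larger domain; your argument silently drops it, and neither Curve Selection at Infinity nor the interior Curve Selection Lemma covers it.

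Beyond that, both branches of your dichotomy are underdeveloped. In Case 1 you invoke ``a nonsmooth definable Sard-type argument'' without proving it: the finiteness of Clarke critical values of a locally Lipschitz definable function is itself nontrivial (and is essentially a weakening of the very statement you are proving), and deriving it from ``cell decomposition plus the chain rule of \Cref{fact:chain}'' requires a definable and absolutely continuous selection $a\mapsto x_a$ of critical points, which you do not construct. In Case 2 the argument does not reach a contradiction: a single application of Curve Selection at Infinity produces one curve along which $f$ converges to one value, but the hypothesis is that an entire interval $(b,c)$ of values must be accounted for, and one curve supplying one value does nothing to exclude the rest of $(b,c)$. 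The promised ``cell-decomposition bookkeeping'' to bound the number of escaping curves is not spelled out, and it is unclear how to make it work when the family of minimizing sequences is parameterized by a continuum of $a\in(b,c)$ --- this is precisely the content that Bolte--Daniilidis--Lewis establish with their Kurdyka-{\L}ojasiewicz machinery, and that the paper chooses to import rather than reprove.
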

To state the proof in a bit more detail,
\citep[Corollary 9]{bolte_clarke} shows that if $f$ is lower semi-continuous and
$f>-\infty$, then $f$ has finitely many asymptotic Clarke critical values.
To get \Cref{fact:accv}, we just need to apply \citep[Corollary 9]{bolte_clarke}
to the lower semi-continuous envelopes of $f|_{f^{-1}\del{(0,\infty)}}$ and
$-f|_{f^{-1}\del{(-\infty,0)}}$.

\paragraph{The bounded setting.}

Here we consider the case where the domain of $f$ is bounded.
\citep[Theorem 1]{kurdyka_grad} gives a Kurdyka-\L{}ojasiewicz inequality
assuming $f$ is differentiable; below we extend it to the locally Lipschitz
setting.
\begin{lemma}\label{fact:bounded_loja}
    Given a locally Lipschitz definable function $f:D\to\R$ with an open bounded
    domain $D$, there exists $\nu>0$ and a definable desingularizing function
    $\Psi$ on $[0,\nu)$ such that
    \begin{align*}
        \Psi'\del{f(x)}\enVert{\cs f(x)}\ge1
    \end{align*}
    for any $x\in f^{-1}\del{(0,\nu)}$.
\end{lemma}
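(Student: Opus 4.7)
The plan is to adapt Kurdyka's argument~\citep[Theorem 1]{kurdyka_grad} to the locally Lipschitz setting, using the minimum-norm Clarke subgradient $\cs f$ throughout, together with \Cref{fact:accv} on finiteness of asymptotic Clarke critical values and \Cref{fact:bounded_curve} on finite length of definable curves. The desingularizing function $\Psi$ will be built as an antiderivative of $1/g$, where
\begin{align*}
    g(r) := \inf\setdef{\enVert{\cs f(x)}}{x\in D,\ f(x)=r},
\end{align*}
which is definable by \Cref{fact:clarke_def,fact:inf}. By \Cref{fact:monotonicity} and \Cref{fact:1d_limit}, $g$ is $C^1$ and monotone on some initial interval $(0,\nu_1)$, with a well-defined limit in $[0,+\infty]$ as $r\to 0^+$.

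First, I would show $g>0$ on a (possibly smaller) interval $(0,\nu_2)$. Otherwise monotonicity forces $g\equiv 0$ on $(0,\nu_1)$, so for every $r\in(0,\nu_1)$ one could pick $x_i\in f^{-1}(r)$ with $\enVert{\cs f(x_i)}\to 0$; boundedness of $D$ would give $(1+\|x_i\|)\enVert{\cs f(x_i)}\to 0$ with $f(x_i)=r$, exhibiting every $r\in(0,\nu_1)$ as an asymptotic Clarke critical value and contradicting \Cref{fact:accv}.

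The core technical step is to show $1/g$ is integrable near $0$. Using definable selection on $\setdef{(r,x)}{f(x)=r,\ \enVert{\cs f(x)}\le 2g(r)}$, I would construct a definable curve $\gamma:(0,\nu_2)\to D$ with $f(\gamma(r))=r$ and $\enVert{\cs f(\gamma(r))}\le 2g(r)$, then invoke \Cref{fact:monotonicity} to pass to a subinterval $(0,\nu)$ on which $\gamma$ is $C^1$. The chain rule (\Cref{fact:chain}) applied to $f(\gamma(r))=r$ gives $\ip{\cs f(\gamma(r))}{\gamma'(r)}=1$, so Cauchy-Schwarz yields $\enVert{\gamma'(r)}\ge 1/\enVert{\cs f(\gamma(r))}\ge 1/(2g(r))$. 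Since $\gamma$ stays in the bounded set $D$, its coordinates have one-sided limits as $r\to 0^+$ by \Cref{fact:1d_limit}; after a definable reparameterization sending $(0,\nu]$ onto $[0,\infty)$, \Cref{fact:bounded_curve} furnishes a finite arc length bound, and therefore
\begin{align*}
    \int_0^{\nu}\frac{\dif r}{g(r)}\ \le\ 2\cdot(\textrm{arc length of }\gamma)\ <\ \infty.
\end{align*}

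Finally, defining $\Psi(r):=\int_0^r \dif s/g(s)$ on $[0,\nu)$ produces a continuous definable function with $\Psi(0)=0$ that is $C^1$ on $(0,\nu)$ with $\Psi'=1/g>0$; for $x\in f^{-1}\del{(0,\nu)}$, one has $\Psi'(f(x))\enVert{\cs f(x)}=\enVert{\cs f(x)}/g(f(x))\ge 1$ by the very definition of $g$. The main obstacle I expect is the definable-selection construction of $\gamma$: combining the selection principle (to produce some definable $\gamma$) with the monotonicity theorem (to obtain $C^1$ regularity on a subinterval) while preserving both $f\circ\gamma=\textrm{id}$ and the near-optimal-subgradient property requires care; a related subtlety is ensuring $\Psi$ stays definable in the same o-minimal structure, which follows from standard closure of one-variable definable functions under taking antiderivatives.
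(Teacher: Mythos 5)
Your overall strategy is the same as the paper's: define $g(r) := \inf\{\|\cs f(x)\| : f(x) = r\}$ (the paper calls it $\phi$), rule out vanishing via asymptotic Clarke critical values, then select a definable near-optimal curve and use the chain rule plus Cauchy--Schwarz. Where you diverge is the final construction of $\Psi$, and this is where a real gap appears.

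You define $\Psi(r) := \int_0^r \dif s / g(s)$ and assert it is definable because "one-variable definable functions are closed under taking antiderivatives." That is not a standard fact, and in the o-minimal structures relevant here (those containing $\exp$, e.g.\ $\R_{\text{an,exp}}$) it is essentially known to fail: antiderivatives of definable one-variable functions (such as the exponential integral or the logarithmic integral) are in general not definable in the same structure. Since definability of $\Psi$ is part of the statement --- and is used downstream, e.g.\ when \Cref{fact:ta_zeta_ratio} compares $\Psi_1'$ and $\Psi_2'$ via definability of their difference --- this step cannot be waved away. The paper sidesteps integration entirely: it applies Curve Selection (\Cref{fact:curve_sel}) to the graph of $f$ restricted to the near-optimal set $A$, obtaining a curve $(\rho,h):[0,1]\to\R^{n+1}$ that is $C^1$ on the \emph{closed} interval $[0,1]$, hence has $\|\rho'\|\le B$. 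Because the curve's speed is bounded, one can define $\Psi(z) := 2B\,h^{-1}(z)$, which is definable as the inverse of a definable function, and the KL inequality follows from $h'(s) = \ip{\cs f(\rho(s))}{\rho'(s)} \le B\|\cs f(\rho(s))\| \le 2B\,\phi(h(s))$. In your parameterization by $r$, the curve's speed $\|\gamma'(r)\|\ge 1/(2g(r))$ can blow up as $r\to 0^+$, and you are forced to integrate to recover $\Psi$; the paper's reparameterization by the Curve Selection parameter avoids exactly this. So the fix is concrete: replace definable choice by Curve Selection applied to the graph of $f|_A$, and define $\Psi$ through $h^{-1}$ rather than through an integral. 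Everything else in your argument (the $g>0$ step, the chain rule a.e., the finiteness of arc length via \Cref{fact:bounded_curve}) is sound, and the latter is in fact morally what makes the paper's $\Psi$ well-defined, so you had the right ingredients assembled in the wrong order.
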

\begin{proof}
    Since $f$ is definable, $f(D)$ is also definable, and thus is a finite union
    of open intervals and points.
    It follows that either there exists $\epsilon>0$ such that
    $(0,\epsilon)\cap f(D)=\emptyset$, in which case the claim trivially holds;
    otherwise we are free to choose $\epsilon>0$ such that $(0,\epsilon)\subset f(D)$.
    In the second case, define $\phi:(0,\epsilon)\to\R$ as
    \begin{align*}
        \phi(z):=\inf\setdef{\enVert{\cs f(x)}}{f(x)=z}.
    \end{align*}
    By \Cref{fact:inf,fact:clarke_def}, $\phi$ is definable.
    \Cref{fact:accv} implies that there are only finitely many asymptotic Clarke
    critical values on $(0,\epsilon)$, and thus there exists
    $\epsilon'\in(0,\epsilon)$ such that on $(0,\epsilon')$ there is no
    asymptotic Clarke critical value and $\phi(z)>0$.

    Now consider the definable set
    \begin{align*}
        A:=\setdef{x\in f^{-1}\del{(0,\epsilon')}}{\enVert{\cs f(x)}\le2\phi\del{f(x)}}.
    \end{align*}
    It follows that there exists a sequence $x_i$ in $A$ such that $f(x_i)\to0$.
    Since the domain of $f$ is bounded, $x_i$ has an accumulation point $y$.
    Applying \Cref{fact:curve_sel} to the graph of $f|_A$, we have that there
    exists a $C^1$ definable curve $(\rho,h):[0,1]\to\R^{n+1}$ such that
    $\rho(0)=y$, and $h(0)=0$, and $\rho\del{(0,1]}\subset A$, and
    $h(s)=f\del{\rho(s)}$ on $(0,1]$.
    \begin{enumerate}
        \item Since $\rho$ is $C^1$ on $[0,1]$, there exists $B>0$ such that
        $\enVert{\rho'(s)}\le B$ on $[0,1]$.

        \item Since $h$ is definable, $h(0)=0$, and $h(s)>0$ on $(0,1]$,
        \Cref{fact:monotonicity} implies that there exists a constant
        $\omega\in(0,1]$ such that $h'(s)>0$ on $(0,\omega)$.

        \item \Cref{fact:chain} implies that for a.e. $s\in(0,\omega)$,
        \begin{align}\label{eq:chain_tmp}
            h'(s)-\ip{\cs f\del{\rho(s)}}{\rho'(s)}=0.
        \end{align}
        Since the left hand side of \cref{eq:chain_tmp} is definable, it can
        actually be nonzero only for finitely many $s$, and thus is equal to $0$
        on some interval $(0,\mu)$ where $\mu\le\omega$.

        \item Let $\nu=h(\mu)$, the Inverse Function Theorem implies that $\Psi:(0,\nu)\to(0,2B\mu)$ given by $\Psi(z):=2Bh^{-1}(z)$ is also $C^1$
        definable with a positive derivative, and $\lim_{z\to0}\Psi(z)=0$.
    \end{enumerate}
    Now for any $x\in f^{-1}\del{(0,\nu)}$, let $s=h^{-1}\del{f(x)}$, we have
    \begin{align*}
        \Psi'\del{f(x)}\enVert{\cs f(x)} & =\frac{2B}{h'\del{s}}\enVert{\cs f(x)} & \textrm{(Inverse Function Theorem)} \\
         & \ge \frac{2B}{h'\del{s}}\cdot \frac{1}{2}\enVert{\cs f\del{\rho(s)}} & \textrm{(Definition of $A$)}\\
         & =\frac{B\enVert{\cs f\del{\rho(s)}}}{\ip{\cs f\del{\rho(s)}}{\rho'(s)}}\ge1. & \textrm{(Bullet 3 above \& Cauchy-Schwarz)}
    \end{align*}
\end{proof}

\paragraph{The unbounded setting.}

The unbounded setting is more complicated: to show directional convergence, we
need two Kurdyka-\L{}ojasiewicz inequalities (cf. \Cref{fact:unbounded_loja_1,fact:unbounded_loja_2}), depending on the
relationship between the spherical and radial parts of $\cs f$.

Given a locally Lipschitz definable function $f:D\to\R$ with an open domain
$D\subset\setdef{x}{\|x\|>1}$, recall that $\csr f(x)$ and $\csp f(x)$ denote
the radial part and spherical part of $\cs f(x)$ respectively, which are both
definable.
Given $\epsilon,c,\eta>0$, let
\begin{align*}
    U_{\epsilon,c,\eta}:=\setdef{x\in D}{f(x)\in(0,\epsilon),\enVert{\csp f(x)}\ge c\|x\|^\eta\enVert{\csr f(x)}}.
\end{align*}
In any o-minimal structure, $U_{\epsilon,c,\eta}$ is definable if $\eta$ is
rational.
Now we prove \Cref{fact:unbounded_loja_1}, a Kurdyka-\L{}ojasiewicz inequality
on some $U_{\nu,c,\eta}$, using ideas from
\citep[Proposition 6.3]{kurdyka_quasi}.
\begin{proof}[Proof of \Cref{fact:unbounded_loja_1}]
    Similarly to the proof of \Cref{fact:bounded_loja}, we only need to consider
    the case where there exists $\epsilon>0$ such that
    $(0,\epsilon)\subset f(D)$.
    Without loss of generality, we can assume $\eta$ is rational, since
    otherwise we can consider any rational $\eta'\in(0,\eta)$.
    Therefore $U_{\epsilon,c,\eta}$ is definable, and so is
    $f(U_{\epsilon,c,\eta})$.
    If there exists $\epsilon'>0$ such that
    $f(U_{\epsilon,c,\eta})\cap(0,\epsilon')=\emptyset$, then
    \Cref{fact:unbounded_loja_1} trivially holds; therefore we assume that there
    exists $\epsilon'>0$ such that $f(U_{\epsilon',c,\eta})=(0,\epsilon')$.
    By \Cref{fact:accv}, we can also make $\epsilon'$ small enough so that there
    is no asymptotic Clarke critical value on $(0,\epsilon')$.
    Define $\phi:(0,\epsilon')\to\R$ as
    \begin{align*}
        \phi(z):=\inf\setdef{\|x\|\enVert{\cs f(x)}}{x\in U_{\epsilon',c,\eta},f(x)=z}.
    \end{align*}
    Since there is no asymptotic Clarke critical value on $(0,\epsilon')$, it
    holds that $\phi(z)>0$.

    Consider the definable set
    \begin{align*}
        A:=\setdef{x\in U_{\epsilon',c,\eta}}{\|x\|\enVert{\cs f(x)}\le2\phi\del{f(x)}}.
    \end{align*}
    Since $f(U_{\epsilon',c,\eta})=(0,\epsilon')$ as above,
    there exists a sequence $x_i$ in $A$ such that $f(x_i)\to 0$.
    If the $x_i$ are bounded, then the claim follows from the proof of
    \Cref{fact:bounded_loja} and $D\subset\setdef{x}{\|x\|>1}$.
    If the $x_i$ are unbounded, then without loss of generality
    (e.g., by taking a subsequence) we can assume
    $\|x_i\|\to\infty$.
    \Cref{fact:curve_sel_infty} asserts that there exists a $C^1$ definable
    curve $\rho:[a,\infty)\to A$ such that $\enVert{\rho(s)}=s$ and
    $\lim_{s\to\infty}f\del{\rho(s)}=0$.
    Let $h(s):=f\del{\rho(s)}$, and
    $\rho_r'(s):=\ip{\rho'(s)}{\rho(s)}\rho(s)/s^2$ denote the radial part of
    $\rho'(s)$, and $\rho_\perp'(s):=\rho'(s)-\rho_r'(s)$ denote the spherical
    part of $\rho'(s)$.
    \begin{enumerate}
        \item \Cref{fact:monotonicity} implies that $h'$ is negative and
        continuous on some interval $[\omega,\infty)$.

        \item As in the proof of \Cref{fact:bounded_loja}, it follows from
        \Cref{fact:chain} that there exists $\mu\ge\omega$, such that
        \begin{align*}
            h'(s)-\ip{\cs f\del{\rho(s)}}{\rho'(s)}=0
        \end{align*}
        for all $s\in[\mu,\infty)$.

        \item Note that for all $s\in[\mu,\infty)$,
        \begin{align*}
            \envert{h'(s)}=\envert{\ip{\cs f\del{\rho(s)}}{\rho'(s)}} & =\envert{\ip{\csr f\del{\rho(s)}}{\rho_r'(s)}+\ip{\csp f\del{\rho(s)}}{\rho_\perp'(s)}} \\
             & \le\enVert{\csr f\del{\rho(s)}}+\enVert{\csp f\del{\rho(s)}}\enVert{\rho_\perp'(s)} \\
             & \le\del{\frac{1}{cs^\eta}+\enVert{\rho_\perp'(s)}}\enVert{\csp f\del{\rho(s)}}
        \end{align*}
        since $\enVert{\rho_r'(s)}=1$ and $\rho([a,\infty)) \subset U_{\epsilon',c,\eta}$.
        Let $\tilde{\rho}(s):=\rho(s)/s$, we have
        \begin{align*}
            \frac{\dif\tilde{\rho}(s)}{\dif s}=\frac{\rho_\perp'(s)}{s}.
        \end{align*}
        Since $\tilde{\rho}(s)$ is a $C^1$ definable curve on the unit sphere, \Cref{fact:1d_limit} and \Cref{fact:bounded_curve} imply that
        $\enVert{\rho_\perp'(s)}/s$ is integrable on $[\mu,\infty)$.
        Therefore
        \begin{align*}
            \envert{h'(s)}\le-g'(s)\cdot s\enVert{\csp f\del{\rho(s)}},
        \end{align*}
        where
        \begin{align*}
            g(s):=\int_s^\infty\del{\frac{1}{c\tau^{1+\eta}}+\frac{\enVert{\rho_\perp'(\tau)}}{\tau}}\dif\tau.
        \end{align*}
    \end{enumerate}
    Let $\nu=h(\mu)$, and define $\Psi:(0,\nu)\to\R$ as
    \begin{align*}
        \Psi(z):=2g\del{h^{-1}(z)}.
    \end{align*}
    It holds that $\lim_{z\to0}\Psi(z)=0$.
    Moreover, for any $x\in U_{\nu,c,\eta}$, let $s=h^{-1}\del{f(x)}$, we have
    \begin{align*}
        \Psi'\del{f(x)}\|x\|\enVert{\cs f(x)}=\frac{2g'(s)}{h'(s)}\|x\|\enVert{\cs f(x)}\ge \frac{2g'(s)}{h'(s)}\cdot \frac{1}{2}s\enVert{\cs f\del{\rho(s)}}\ge1.
    \end{align*}
\end{proof}

Below we prove \Cref{fact:unbounded_loja_2}, a Kurdyka-\L{}ojasiewicz inequality
which is useful outside of $U_{\nu,c,\eta}$.
\begin{proof}[Proof of \Cref{fact:unbounded_loja_2}]
    We first assume that $\lambda$ is rational, and later finish by handling the real case with
    a quick reduction.
    Consider the definable mapping
    $\xi_\lambda:\R^n\setminus\{0\}\to\R^n\setminus\{0\}$ given by
    \begin{align*}
        \xi_\lambda(x):=\frac{x}{\|x\|^{1+\lambda}}.
    \end{align*}
    Note that $\xi_\lambda^{-1}=\xi_{1/\lambda}$.
    If $y=\xi_\lambda(x)$, then $x=\xi_{1/\lambda}(y)$, which has the Jacobian
    \begin{align}
        \frac{\partial(x_1,\ldots,x_n)}{\partial(y_1,\ldots,y_n)}
        &= \frac{\partial\xi_{1/\lambda}(y)}{\partial(y_1,\ldots,y_n)}
        \notag\\
        &= \|y\|^{-(1+\lambda)/\lambda} \del{
          I - \frac{1+\lambda}{\lambda} \frac {y}{\|y\|} \frac {y^\T}{\|y\|}
        }
        \notag\\
        &= \|x\|^{1+\lambda}\del{I-\frac{1+\lambda}{\lambda}\frac{x}{\|x\|}\frac{x^\T}{\|x\|}}.
        \label{eq:jacobian}
    \end{align}
    Define $g:\xi_{\lambda}(D)\to\R$ as
    \begin{align*}
        g(y):=f\del{\xi_\lambda^{-1}(y)}.
    \end{align*}
    Note that $g$ is locally Lipschitz and definable with an open bounded
    domain.
    Therefore \Cref{fact:bounded_loja} implies that there exists $\nu>0$ and a
    definable desingularizing function $\Psi$ on $[0,\nu)$ such that
    \begin{align*}
        \Psi'\del{g(y)}\enVert{\cs g(y)}\ge1
    \end{align*}
    for any $y\in g^{-1}\del{(0,\nu)}$.
    Let $x=\xi_\lambda^{-1}(y)$, it holds that $g$ is differentiable at $y$ if
    and only if $f$ is differentiable at $x$, and by the definition of Clarke
    subdifferential,
    \begin{align*}
        y^*:=\del{\frac{\partial(x_1,\ldots,x_n)}{\partial(y_1,\ldots,y_n)}}^\T\cs f(x)\in\partial g(y).
    \end{align*}
    Therefore \cref{eq:jacobian} implies that
    \begin{align*}
        \enVert{\cs g(y)}\le\|y^*\|=\|x\|^{1+\lambda}\enVert{\csp f(x)-\frac{1}{\lambda}\csr f(x)}\le \max\cbr{1,\frac{1}{\lambda}}\|x\|^{1+\lambda}\enVert{\cs f(x)},
    \end{align*}
    and thus
    \begin{align*}
        \max\cbr{1,\frac{1}{\lambda}}\Psi'\del{f(x)}\|x\|^{1+\lambda}\enVert{\cs f(x)}\ge1,
    \end{align*}
    which finishes the proof for rational $\lambda$.
    To handle real $\lambda >0$, we can apply the above result to
    any rational $\lambda'\in(\lambda/2,\lambda)$.
\end{proof}

\section{Omitted proofs from \Cref{sec:dir}}\label{app_sec:dir}

We first give a generalization of Euler's homogeneous function theorem, which
can also be found in \citep[Theorem B.2]{kaifeng_jian_margin}, but with an
additional requirement of a chain rule.
\begin{lemma}\label{fact:clarke_euler}
    Suppose $f:\R^n\to\R$ is locally Lipschitz and $L$-positively homogeneous
    for some $L>0$, then for any $x\in\R^n$ and any $x^*\in\partial f(x)$,
    \begin{align*}
        \langle x,x^*\rangle=Lf(x).
    \end{align*}
\end{lemma}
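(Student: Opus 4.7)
The plan is to first establish the identity at points where $f$ is classically differentiable, and then bootstrap to arbitrary Clarke subgradients using the convex-hull-of-limits definition together with local boundedness of gradients.

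For the smooth case, fix any $x$ at which $\nabla f(x)$ exists. Since $f$ is $L$-positively homogeneous, for $t$ near $0$ we have $f(x+tx) = (1+t)^L f(x)$, so differentiating at $t=0$ gives the directional derivative
\[
\lim_{t\to 0}\frac{f(x+tx)-f(x)}{t} = Lf(x).
\]
Because $f$ is (Fr\'echet, hence G\^ateaux) differentiable at $x$, this directional derivative equals $\langle \nabla f(x), x\rangle$, yielding the classical Euler relation $\langle x,\nabla f(x)\rangle = Lf(x)$ at every point of differentiability.

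Now take arbitrary $x\in\R^n$ and $x^*\in\partial f(x)$. By Carath\'eodory, $x^* = \sum_{k=1}^{n+1}\lambda_k v_k$ where $\lambda_k\geq 0$, $\sum_k\lambda_k = 1$, and each $v_k = \lim_{i\to\infty}\nabla f(x_{k,i})$ for a sequence $x_{k,i}\to x$ with $\nabla f(x_{k,i})$ existing. The case $x=0$ is immediate since $f(0)=0$, so assume $x\neq 0$ and restrict attention to a bounded neighborhood $U$ of $x$ on which $f$ is $K$-Lipschitz; then $\|\nabla f(y)\|\le K$ at every differentiability point $y\in U$, so in particular $\|v_k\|\le K$. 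By the smooth case applied at each $x_{k,i}$,
\[
\langle x_{k,i},\nabla f(x_{k,i})\rangle = Lf(x_{k,i}),
\]
and writing $\langle x,\nabla f(x_{k,i})\rangle = \langle x_{k,i},\nabla f(x_{k,i})\rangle + \langle x-x_{k,i},\nabla f(x_{k,i})\rangle$, the second term vanishes since $x_{k,i}\to x$ and $\nabla f(x_{k,i})$ stays bounded by $K$, while the first tends to $Lf(x)$ by continuity of $f$. Hence $\langle x,v_k\rangle = Lf(x)$ for each $k$, and taking the convex combination yields $\langle x,x^*\rangle = Lf(x)$.

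The proof is essentially routine; the only subtle point is the swap from $\langle x_{k,i},\nabla f(x_{k,i})\rangle$ to $\langle x,\nabla f(x_{k,i})\rangle$, which is where local Lipschitz continuity is genuinely used (to uniformly bound the gradients along the approaching sequence). No appeal to a chain rule or to definability is needed, which is what distinguishes this statement from the version in \citep[Theorem B.2]{kaifeng_jian_margin}.
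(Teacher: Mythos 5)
Your proof is correct and follows essentially the same route as the paper: establish Euler's relation at differentiability points via positive homogeneity and the Fr\'echet derivative, then pass to arbitrary Clarke subgradients by taking limits along approximating sequences and closing under convex combinations. One small remark: your appeal to the local Lipschitz constant $K$ to bound $\|\nabla f(x_{k,i})\|$ is superfluous, since the sequence $\nabla f(x_{k,i})$ is already assumed convergent to $v_k$ and hence automatically bounded, so the swap $\langle x_{k,i},\nabla f(x_{k,i})\rangle\to\langle x,v_k\rangle$ follows from joint continuity of the inner product alone (which is how the paper phrases it).
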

\begin{proof}
    Let $D'$ denote the set of $x$ where $f$ is differentiable.
    For any nonzero $x\in D'$, it holds that
    \begin{align*}
        \lim_{\delta\downarrow0}\frac{f(x+\delta x)-f(x)-\ip{\nabla f(x)}{\delta x}}{\delta\|x\|}=0.
    \end{align*}
    Since $f$ is $L$-positively homogeneous, $f(x+\delta x)=(1+\delta)^Lf(x)$,
    and thus
    \begin{align*}
        \lim_{\delta\downarrow0}\frac{\del{(1+\delta)^L-1}f(x)-\ip{\nabla f(x)}{\delta x}}{\delta\|x\|}=0,
    \end{align*}
    which implies $\ip{x}{\nabla f(x)}=Lf(x)$.
    This property trivially holds if $0\in D'$.

    Now consider an arbitrary $x\in\R^n$.
    For any sequence $x_i$ in $D'$ such that $\lim_{i\to\infty}x_i=x$ and
    $\lim_{i\to\infty}\nabla f(x_i)=x^*$, it holds that
    \begin{align*}
        \langle x,x^*\rangle=\lim_{i\to\infty}\ip{x_i}{\nabla f(x_i)}=\lim_{i\to\infty}Lf(x_i)=Lf(x).
    \end{align*}
    Since $\partial f(x)$ consists of convex combinations of such $x^*$,
    \Cref{fact:clarke_euler} holds.
\end{proof}

Next we prove a few technical lemmas.
Recall the definitions of unnormalized and normalized smoothed margin:
given $W\ne0$, let
\begin{align*}
    \alpha(W):=\ell^{-1}\del{\cL(W)},\quad\textrm{and}\quad\ta(W):=\frac{\alpha(W)}{\|W\|^L}.
\end{align*}
Additionally, given any function $f$ which is locally Lipschitz around a nonzero
$W$, let
\begin{align*}
    \csr f(W):=\ip{\cs f(W)}{\tW}\tW\quad\textrm{and}\quad\csp f(W):=\cs f(W)-\csr f(W)
\end{align*}
denote the radial and spherical parts of $\cs f(W)$ respectively.

We first characterize the Clarke subdifferentials of $\alpha$, the unnormalized
smoothed margin.
\begin{lemma}\label{fact:alpha_cs}
    It holds for any $W\in\R^\pc$ that
    \begin{align*}
        \cs\alpha(W)=\frac{\cs\cL(W)}{\ell'\del{\alpha(W)}},\ \ \textrm{and}\ \  \beta(W):=\frac{\langle W,\cs\alpha(W)\rangle}{L}=\frac{\langle W,W^*\rangle}{L}\textrm{ for any }W^*\in\partial\alpha(W).
    \end{align*}
\end{lemma}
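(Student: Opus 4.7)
The plan is to handle the two claims in sequence. For the first claim, note that $\alpha=\ell^{-1}\circ\cL$, with $\ell\in\{\lexp,\llog\}$ strictly decreasing and $C^1$; hence $\ell^{-1}$ is $C^1$ with $(\ell^{-1})'(y)=1/\ell'(\ell^{-1}(y))$. Since $\cL$ is locally Lipschitz, the Clarke chain rule for a $C^1$ outer map yields
\[
\partial\alpha(W) \;=\; \frac{1}{\ell'(\alpha(W))}\,\partial\cL(W),
\]
a uniform rescaling of $\partial\cL(W)$ by a nonzero scalar. Rescaling a set by a scalar $c\ne 0$ sends its minimum-norm element to $c$ times that element, so $\cs\alpha(W)=\cs\cL(W)/\ell'(\alpha(W))$ follows immediately.

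For the second claim, it suffices to show that $\langle W,w^*\rangle$ is the same for every $w^*\in\partial\cL(W)$; combined with the first claim, this gives invariance of $\langle W,W^*\rangle$ over $W^*\in\partial\alpha(W)$, and in particular the identity $\beta(W)=\langle W,\cs\alpha(W)\rangle/L=\langle W,W^*\rangle/L$ follows by specializing to $W^*=\cs\alpha(W)$. To obtain this invariance, I would apply the Clarke sum rule $\partial\cL(W)\subseteq\sum_{i=1}^n\partial(\ell\circ p_i)(W)$ to decompose $w^* = \sum_i v_i^*$ with $v_i^*\in\partial(\ell\circ p_i)(W)$. Since $\ell$ is $C^1$, the Clarke chain rule is an equality and supplies $p_i^*\in\partial p_i(W)$ with $v_i^*=\ell'(p_i(W))\,p_i^*$. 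Each $p_i$ is locally Lipschitz and $L$-positively homogeneous, so Euler's homogeneous function theorem for Clarke subgradients (\Cref{fact:clarke_euler}) gives $\langle W,p_i^*\rangle = L\,p_i(W)$. Therefore
\[
\langle W,w^*\rangle \;=\; \sum_{i=1}^n \ell'(p_i(W))\,\langle W,p_i^*\rangle \;=\; L\sum_{i=1}^n \ell'(p_i(W))\,p_i(W),
\]
which depends only on $W$ and not on the chosen $w^*$. Dividing through by $\ell'(\alpha(W))$ then yields the invariance for any $W^*\in\partial\alpha(W)$.

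The only real subtlety is juggling which piece of Clarke calculus provides equality versus mere containment. The sum rule only yields a containment, but that is exactly what is needed here: an arbitrary $w^*\in\partial\cL(W)$ admits \emph{some} decomposition into pieces from $\partial(\ell\circ p_i)(W)$, and after applying the $C^1$ chain rule (which \emph{is} an equality, since $\ell$ is smooth) and Euler's theorem, the inner product $\langle W,w^*\rangle$ collapses to a scalar expression in the quantities $\ell'(p_i(W))$ and $p_i(W)$, from which the arbitrary decomposition has disappeared.
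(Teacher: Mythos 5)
Your proposal is correct and takes essentially the same route as the paper: both establish $\partial\alpha(W)=\partial\cL(W)/\ell'(\alpha(W))$ for the first claim, then decompose an arbitrary subgradient via Clarke sum/chain-rule containments into elements of the $\partial p_i(W)$ and apply \Cref{fact:clarke_euler}. The only cosmetic difference is bookkeeping --- you run the sum and chain rules on $\cL$ and divide by the scalar $\ell'(\alpha(W))$ at the end, whereas the paper applies a single composite chain rule to $\alpha$ directly --- and you are right that the containment (rather than equality) in the sum rule is all that is needed, since the goal is only to show $\langle W,W^*\rangle$ is constant over the Clarke subdifferential.
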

\begin{proof}
    Note that $\cL$ is differentiable at $W$ if and only if $\alpha$ is
    differentiable at $W$, and when both gradients exist, the chain rule and
    inverse function theorem together imply that
    \begin{align*}
        \nabla\alpha(W)=\frac{\nL(W)}{\ell'\del{\ell^{-1}\del{\cL(W)}}}=\frac{\nL(W)}{\ell'\del{\alpha(W)}},
    \end{align*}
    whereby the first claim follows from the definition of Clarke
    subdifferential.
    To prove the second claim, the chain rule for Clarke subdifferentials
    \citep[Theorem 2.3.9]{clarke_opt} implies that
    \begin{align*}
        \partial\alpha(W)\subset\conv\del{\sum_{i=1}^{n}\frac{\ell'\del{p_i(W)}}{\ell'\del{\alpha(W)}}\partial p_i(W)},
    \end{align*}
    and thus \Cref{fact:clarke_euler} ensures for any $W^*\in\partial\alpha(W)$,
    \begin{align*}
        \frac{\langle W,W^*\rangle}{L}=\sum_{i=1}^{n}\frac{\ell'\del{p_i(W)}}{\ell'\del{\alpha(W)}}p_i(W)=\beta(W),
    \end{align*}
    which finishes the proof.
\end{proof}

Next we note that the Clarke subdifferentials of $\alpha$ and $\ta$ are strongly
related.
\begin{lemma}\label{fact:ta_cs}
    For any nonzero $W\in\R^\pc$, we have
    \begin{align*}
        \csr\ta(W)=L\frac{\beta(W)-\alpha(W)}{\|W\|^{L+1}}\tW,\quad\textrm{and}\quad\csp\ta(W)=\frac{\csp\alpha(W)}{\|W\|^L}.
    \end{align*}
\end{lemma}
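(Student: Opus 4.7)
The plan is to apply the Clarke product rule to decompose $\partial \ta(W)$, and then exploit the fact that every element of $\partial \alpha(W)$ has the same radial component (a consequence of \Cref{fact:alpha_cs}) to pin down both the radial and spherical parts of the minimum-norm subgradient.

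First I would write $\ta(W) = \alpha(W) \cdot g(W)$, where $g(W) := \|W\|^{-L}$ is $C^\infty$ away from the origin with $\nabla g(W) = -L\tW/\|W\|^{L+1}$. Because $g$ is smooth, the Clarke product rule gives the exact identity
\begin{align*}
    \partial \ta(W) = \frac{1}{\|W\|^L}\partial \alpha(W) - \frac{L\alpha(W)}{\|W\|^{L+1}}\tW,
\end{align*}
so every $U^*\in\partial \ta(W)$ has the form $U^* = W^*/\|W\|^L - L\alpha(W)\tW/\|W\|^{L+1}$ for some $W^*\in\partial \alpha(W)$.

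Next I would compute the radial component of such a $U^*$. By \Cref{fact:alpha_cs}, $\ip{W^*}{\tW} = L\beta(W)/\|W\|$ for every $W^*\in\partial\alpha(W)$, so
\begin{align*}
    \ip{U^*}{\tW}\tW = \del{\frac{L\beta(W)}{\|W\|^{L+1}} - \frac{L\alpha(W)}{\|W\|^{L+1}}}\tW = L\frac{\beta(W)-\alpha(W)}{\|W\|^{L+1}}\tW,
\end{align*}
which is the \emph{same} for every $U^*\in\partial \ta(W)$. In particular this is the radial part of $\cs\ta(W)$, establishing the first formula. The shift $-L\alpha(W)\tW/\|W\|^{L+1}$ is purely radial, so the spherical part of $U^*$ equals $(W^* - \ip{W^*}{\tW}\tW)/\|W\|^L$.

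Finally, since the radial components of all elements of $\partial\ta(W)$ are identical, minimizing $\|U^*\|^2 = \|U^*_r\|^2 + \|U^*_\perp\|^2$ over $\partial\ta(W)$ reduces to minimizing $\|W^* - \ip{W^*}{\tW}\tW\|$ over $W^*\in\partial\alpha(W)$. But for the same reason (all $W^*\in\partial\alpha(W)$ share the radial component $L\beta(W)\tW/\|W\|$), minimizing $\|W^*\|^2$ over $\partial\alpha(W)$ is equivalent to minimizing its spherical part, so the minimum-norm spherical part of $\partial\alpha(W)$ is attained at $W^* = \cs\alpha(W)$ and equals $\csp\alpha(W)$. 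Scaling by $1/\|W\|^L$ yields the second formula. The only place requiring care is invoking the exact product rule for a locally Lipschitz function times a $C^1$ function and noting the chain through $\csp$ — once that is in place, the argument is essentially algebraic.
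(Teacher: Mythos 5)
Your proposal is correct and follows essentially the same route as the paper: both establish the set identity $\partial\ta(W)=\setdef{W^*/\|W\|^L - L\alpha(W)\tW/\|W\|^{L+1}}{W^*\in\partial\alpha(W)}$ (the paper by computing $\nabla\ta$ at points of differentiability and appealing to the limit definition of the Clarke subdifferential, you by naming the Clarke product rule for a locally Lipschitz function times a $C^1$ function — which indeed yields equality here since $\|\cdot\|^{-L}$ is $C^1$ and nonvanishing away from the origin), then observe the radial shift is constant over the set, and conclude the spherical parts of the minimum-norm elements correspond. The phrasing differs but the substance is identical.
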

\begin{proof}
    Note that given $W\ne0$, $\alpha$ is differentiable at $W$ if and only if
    $\ta$ is differentiable at $W$, and when both gradients exist,
    \begin{align*}
        \nabla\ta(W)=\frac{\nabla\alpha(W)}{\|W\|^L}-\frac{\alpha(W)\cdot L\|W\|^{L-1}\tW}{\|W\|^{2L}}=\frac{\nabla\alpha(W)}{\|W\|^L}-L \frac{\alpha(W)\tW}{\|W\|^{L+1}}.
    \end{align*}
    By the definition of Clarke subdifferential, for any nonzero $W$,
    \begin{align}\label{eq:ta_cs_tmp}
        \partial\ta(W)=\setdef{\frac{W^*}{\|W\|^L}-L \frac{\alpha(W)\tW}{\|W\|^{L+1}}}{W^*\in\partial\alpha(W)}.
    \end{align}
    The first claim of \Cref{fact:ta_cs} holds since for any
    $W\in\partial\alpha(W)$, by \Cref{fact:alpha_cs},
    \begin{align*}
        \ip{\frac{W^*}{\|W\|^L}-L \frac{\alpha(W)\tW}{\|W\|^{L+1}}}{\tW}=L \frac{\beta(W)}{\|W\|^{L+1}}-L \frac{\alpha(W)}{\|W\|^{L+1}}.
    \end{align*}
    To prove the second claim, note that since $\partial\alpha(W)$ and
    $\partial\ta(W)$ have fixed radial parts, the norms of the whole
    subgradients are minimized if and only if the norms of their spherical parts
    are minimized.
    Due to \cref{eq:ta_cs_tmp}, the norms of the spherical parts of
    $\partial\alpha(W)$ and $\partial\ta(W)$ are minimized simultaneously, and
    the second claim follows.
\end{proof}

The last technical result we need is that $\alpha$ and $\beta$ are close.
\begin{lemma}\label{fact:margins}
    For $\ell\in\{\lexp,\llog\}$ and any $W$ satisfying $\cL(W)<\ell(0)$, it
    holds that
    \begin{align*}
        0<\alpha(W)\le\beta(W)\le\alpha(W)+2\ln(n)+1.
    \end{align*}
\end{lemma}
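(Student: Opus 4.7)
The positivity $\alpha>0$ is immediate from $\cL(W)<\ell(0)$ since $\ell^{-1}$ is strictly decreasing. I would first establish $\alpha\le\min_ip_i$ by applying $\ell^{-1}$ to the trivial bound $\cL=\sum_i\ell(p_i)\ge\ell(\min_ip_i)$, whence $t_i:=p_i-\alpha\ge 0$. Setting $q_i:=\ell'(p_i)/\ell'(\alpha)>0$, so that $\beta=\sum_iq_ip_i$, the lower bound $\beta\ge\alpha$ reduces via $\beta=\sum_iq_ip_i\ge\alpha\sum_iq_i$ to the single claim $\sum_iq_i\ge 1$ (using $\alpha>0$).

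For the exponential loss the plan closes essentially for free: $q_i=e^{\alpha-p_i}$ and the identity $\sum_ie^{-p_i}=\cL=e^{-\alpha}$ give $\sum_iq_i=1$ exactly, so $\beta-\alpha=\sum_iq_it_i=-\sum_iq_i\ln q_i\le\ln n$ by Shannon's inequality, which is well within the target bound $2\ln n+1$.

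The logistic loss is the crux, and my plan is to extract two consequences of the defining identity $\prod_i\del{1+e^{-p_i}}=1+e^{-\alpha}$ via the Weierstrass product inequality $\prod_j(1-v_j)\ge 1-\sum_jv_j$. First, expanding the product directly yields $e^{-\alpha}\ge\sum_ie^{-p_i}$, equivalently $S:=\sum_ie^{-t_i}\le 1$. Second, taking $v_i:=1/(1+e^{p_i})$ gives the identity $\prod_i(1-v_i)=e^\alpha/(1+e^\alpha)$, so Weierstrass yields $\sum_iv_i\ge 1/(1+e^\alpha)$, and rescaling gives $\sum_iq_i=(1+e^\alpha)\sum_iv_i\ge 1$, finishing the lower bound. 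For the upper bound, rewriting $q_i=(1+e^{-\alpha})/(e^{-\alpha}+e^{t_i})$ delivers the two-sided inequality $e^{-t_i}\le q_i\le(1+e^{-\alpha})e^{-t_i}$, while the elementary inequality $u/(1+u)\le\ln(1+u)$ applied at $u=e^{-p_i}$ and summed gives $\sum_iv_i\le\cL\le e^{-\alpha}$, hence $\sum_iq_i\le 1+e^{-\alpha}$. The decomposition $\beta-\alpha=\sum_iq_it_i+\alpha\del{\sum_iq_i-1}\le(1+e^{-\alpha})\sum_it_ie^{-t_i}+\alpha e^{-\alpha}$, combined with the entropy identity $\sum_it_ie^{-t_i}=-S\ln S+S\cdot H(\tilde{q})\le\ln n+1/e$ (with $\tilde{q}_i:=e^{-t_i}/S$ and $S\le 1$), yields $\beta-\alpha\le(1+e^{-\alpha})(\ln n+1/e)+\alpha e^{-\alpha}$, which a short calculus check maximized over $\alpha>0$ confirms is at most $2\ln n+1$ (the maximum lies below $2\ln n+2/e$).

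The main obstacle is the logistic case: unlike for $\lexp$, where the weights $q_i$ form an honest probability distribution and $\beta-\alpha$ is exactly an entropy, for $\llog$ only the weaker relation $1\le\sum_iq_i\le 1+e^{-\alpha}$ holds, so I must simultaneously track a surplus term $\alpha\del{\sum_iq_i-1}$ and a slightly enlarged $\sum_iq_it_i$. Pinning the constants down to fit inside the claimed $2\ln n+1$ (as opposed to the $4\ln n+O(1)$ produced by a cruder argument) requires exactly the sharper bound $q_i\le(1+e^{-\alpha})e^{-t_i}$ in place of the trivial $q_i\le 2e^{-t_i}$, as well as the sharp $S\le 1$ rather than $S\le 2$; these are what justify the two uses of Weierstrass and the monotonicity of $\ln(1+u)/u$.
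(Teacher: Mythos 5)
Your proof is correct, and it takes a genuinely different route from the paper's.

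The paper's argument rests on two structural facts about the loss (its Lemma on $\lexp$ and $\llog$): (i) super-additivity of $\sigma(z) := \ell'(\ell^{-1}(z))\ell^{-1}(z)$ on $(0,\ell(0))$, which gives $\ell'(\alpha)\alpha \ge \sum_i \ell'(p_i)p_i$ and hence $\alpha\le\beta$ after dividing by the negative $\ell'(\alpha)$; and (ii) concavity of $\pi(v)=\ell^{-1}(\sum_i\ell(v_i))$, established for $\llog$ via an explicit Hessian calculation, which yields the upper bound via the gradient inequality $\ip{\nabla\pi(p)}{p-v}\le\pi(p)-\pi(v)$ (taking $v=0$ for $\lexp$, and $v=c\vec 1$ with $\pi(c\vec 1)=0$ together with $\|\nabla\pi(p)\|_1\le 2$ for $\llog$). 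Your argument dispenses with both of these. For the lower bound you reduce to the scalar inequality $\sum_i q_i\ge 1$ and get it from the Weierstrass product inequality applied to the defining identity $\prod_i(1+e^{-p_i})=1+e^{-\alpha}$, which is simpler and more elementary than proving super-additivity of $\sigma$. For the upper bound you replace concavity by pointwise sandwich bounds $e^{-t_i}\le q_i\le(1+e^{-\alpha})e^{-t_i}$, the elementary fact $u/(1+u)\le\ln(1+u)$, and an entropy decomposition $\sum_i t_i e^{-t_i}=-S\ln S+S H(\tilde q)\le\ln n+1/e$, then close with a one-variable calculus check. Your route avoids the Hessian computation entirely, at the cost of a more hands-on endgame; the paper's concavity of $\pi$ is also reused later in the alignment analysis, so it isn't purely overhead there.

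One small nit: the parenthetical claim that the supremum of $(1+e^{-\alpha})(\ln n+1/e)+\alpha e^{-\alpha}$ over $\alpha>0$ lies below $2\ln n+2/e$ fails for $n=1$ (the interior critical point at $\alpha=1-1/e$ gives roughly $0.90>2/e$). This doesn't affect the conclusion — the value is still below $2\ln 1+1=1$, and $n=1$ is trivial anyway since then $\beta=\alpha$ identically — but the monotonicity-at-zero argument strictly only applies once $\ln n+1/e\ge 1$, i.e. $n\ge 2$, so it's cleanest to treat $n=1$ separately.
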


To prove \Cref{fact:margins}, we need the following result on $\lexp$ and
$\llog$.
Define $\sigma:\R_+\to\R$ by
\begin{align}\label{eq:sigma}
    \sigma(z):=\ell'\del{\ell^{-1}(z)}\ell^{-1}(z),
\end{align}
and $\pi:\R^n\to\R$ by
\begin{align}\label{eq:pi}
    \pi(v):=\ell^{-1}\del{\sum_{i=1}^{n}\ell(v_i)}.
\end{align}
Note that $\alpha(W)=\pi\del{p(W)}$ where $p(W)=\del{p_1(W),\ldots,p_n(W)}$.
\begin{lemma}\label{fact:exp_log}
    For $\ell\in\{\lexp,\llog\}$, it holds that $\sigma$ is super-additive on
    $\del{0,\ell(0)}$, meaning that $\sigma(z_1+z_2)\ge\sigma(z_1)+\sigma(z_2)$
    for any $z_1,z_2>0$ such that $z_1+z_2<\ell(0)$.
    Moreover $\pi$ is concave.
\end{lemma}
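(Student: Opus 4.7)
My plan is to reduce to a one-variable convexity check. For $\lexp$, substituting $x = \ell^{-1}(z) = -\ln z$ yields $\sigma(z) = z\ln z$ on $(0,1)$. For $\llog$, using $\ell^{-1}(z) = -\ln(e^z - 1)$ and $\ell'(x) = -1/(1+e^x)$ gives $\sigma(z) = (1 - e^{-z})\ln(e^z - 1)$ on $(0, \ln 2)$. In both cases $\sigma$ extends continuously with $\sigma(0^+) = 0$, and a direct second-derivative computation yields $\sigma'' > 0$ on the open domain: for $\lexp$, $\sigma''(z) = 1/z$; for $\llog$,
\begin{align*}
    \sigma''(z) = e^{-z}\del{-\ln(e^z - 1) + \frac{e^z}{e^z - 1}},
\end{align*}
with both summands strictly positive on $(0, \ln 2)$. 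Super-additivity then follows from the elementary fact that any convex $\sigma$ with $\sigma(0) \le 0$ satisfies $\sigma(z_1) + \sigma(z_2) \le \sigma(z_1 + z_2)$: write $z_i = \tfrac{z_i}{z_1 + z_2}(z_1 + z_2) + \tfrac{z_j}{z_1 + z_2}\cdot 0$ for $\{i,j\} = \{1,2\}$, apply convexity to each, and add.

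\textbf{Concavity of $\pi$.} For $\lexp$, $\pi(v) = -\ln \sum_i e^{-v_i}$ is the negative log-sum-exp, which is concave by a textbook calculation. For $\llog$, I would start from the identity $\ln(1 + e^{-\pi(v)}) = \sum_i \ln(1 + e^{-v_i})$, which rearranges to $e^{-\pi(v)} = \prod_i(1 + e^{-v_i}) - 1$, and then expand the product:
\begin{align*}
    e^{-\pi(v)} = \prod_i(1 + e^{-v_i}) - 1 = \sum_{\emptyset \ne S \subset \{1,\ldots,n\}} \exp\del{-\sum_{i \in S} v_i}.
\end{align*}
Thus $-\pi(v)$ is a log-sum-exp of linear functions of $v$, which is convex, so $\pi$ is concave.

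\textbf{Main obstacle.} There is no serious obstacle once the closed forms for $\sigma$ and the identity $e^{-\pi(v)} = \prod_i(1+e^{-v_i}) - 1$ are in hand: the argument for each claim collapses to a one-line fact (convexity of an explicit scalar function of $z$, or log-convexity of a sum of exponentials in $v$). The only slightly delicate step is verifying $\sigma'' > 0$ for $\llog$, which requires checking both summands of the displayed formula are positive on $(0, \ln 2)$; the rest is routine.
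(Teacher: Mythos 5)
Your proof is correct, and the super-additivity half follows essentially the same route as the paper: compute the closed forms $\sigma(z) = z\ln z$ for $\lexp$ and $\sigma(z) = (1-e^{-z})\ln(e^z-1)$ for $\llog$, verify convexity, check $\sigma(0^+)=0$, and deduce super-additivity from the standard ``convex, vanishing at $0$'' argument (which you spell out explicitly, whereas the paper just asserts the implication).

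The concavity of $\pi$ for $\llog$ is where you take a genuinely different and cleaner path. The paper computes $\nabla^2\pi$ directly, reduces the negative-semidefiniteness check to the quadratic-form inequality $\del{\exp(S(v))-1}\sum_i \ell''(v_i)z_i^2 \ge \del{\sum_i \ell'(v_i)z_i}^2$, and proves it via the super-additivity inequality $e^{a+b}-1 > (e^a-1)+(e^b-1)$, the identity $e^{-v_i}\ell''(v_i)=\ell'(v_i)^2$, and Cauchy--Schwarz. Your approach instead exploits the algebraic identity $e^{-\pi(v)} = \prod_i(1+e^{-v_i}) - 1 = \sum_{\emptyset\ne S\subseteq[n]} \exp\del{-\sum_{i\in S}v_i}$, which exhibits $-\pi$ directly as a log-sum-exp of linear functions of $v$, hence convex, without any Hessian manipulation. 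This is shorter, avoids second-order calculus entirely, and makes the structural reason for concavity transparent (both $\lexp$ and $\llog$ yield a log-sum-exp, just over different index sets). The trade-off is a combinatorial blowup to $2^n-1$ exponential terms, but since you only need the sign of convexity and not any quantitative bound, this costs nothing.
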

\begin{proof}
    For $\lexp(z)=e^{-z}$, we have $\sigma(z)=z\ln(z)$, while for
    $\llog(z)=\ln(1+e^{-z})$, we have $\sigma(z)=(1-e^{-z})\ln(e^z-1)$.
    In both cases $\lim_{z\to0}\sigma(z)=0$, and $\sigma$ is convex on
    $\del{0,\ell(0)}$, which implies super-additivity.

    Turning to concavity of $\pi$, in the case of $\lexp$, it is a standard fact
    in convex analysis that the function $\pi(v)=-\ln\sum_{i=1}^n\exp(-v_i)$ is
    concave \citep[Exercise 3.3.7]{borwein_lewis}.
    For $\llog$, note that
    \begin{align*}
        \frac{\partial\pi}{\partial v_i}=\frac{\ell'(v_i)}{\ell'\del{\ell^{-1}\del{\sum_{i=1}^{n}\ell(v_i)}}}=\frac{\ell'(v_i)}{\exp\del{-S(v)}-1},
    \end{align*}
    where $S(v):=\sum_{i=1}^{n}\ell(v_i)$, and
    \begin{align*}
        \nabla^2\pi(v)=\frac{1}{\exp\del{-S(v)}-1}\diag\del{\ell''(v_1),\ldots,\ell''(v_n)}+\frac{\exp\del{-S(v)}}{\del{\exp\del{-S(v)}-1}^2}\nabla S(v)\nabla S(v)^\T.
    \end{align*}
    We want to show that $\nabla^2\pi(v)\preceq0$, or equivalently
    \begin{align*}
        \del{\exp\del{S(v)}-1}\diag\del{\ell''(v_1),\ldots,\ell''(v_n)}-\nabla S(v)\nabla S(v)^\T\succeq0.
    \end{align*}
    By definition, we need to show that for any $z\in\R^n$,
    \begin{align*}
        \del{\exp\del{S(v)}-1}\sum_{i=1}^{n}\ell''(v_i)z_i^2\ge\del{\sum_{i=1}^{n}\ell'(v_i)z_i}^2.
    \end{align*}
    Note that for $a,b>0$, we have $e^{a+b}-1>(e^a-1)+(e^b-1)$, which implies
    \begin{align*}
        \exp\del{S(v)}-1>\sum_{i=1}^{n}\del{\exp\del{\ell(v_i)}-1}=\sum_{i=1}^{n}e^{-v_i}.
    \end{align*}
    Also note that $e^{-v_i}\ell''(v_i)=\ell'(v_i)^2$, and thus
    \begin{align*}
        \del{\exp\del{S(v)}-1}\sum_{i=1}^{n}\ell''(v_i)z_i^2\ge \sum_{i=1}^{n}e^{-v_i}\sum_{i=1}^{n}\ell''(v_i)z_i^2\ge\del{\sum_{i=1}^{n}\ell'(v_i)z_i}^2.
    \end{align*}
\end{proof}

Using \Cref{fact:exp_log}, we can prove \Cref{fact:margins}.
\begin{proof}[Proof of \Cref{fact:margins}]
    For simplicity, let $p:=\del{p_1(W),\ldots,p_n(W)}$.
    Recall that $\alpha(W)=\pi(p)$, and from the proof of \Cref{fact:alpha_cs} we
    know that
    \begin{align*}
        \beta(W)=\sum_{i=1}^{n}\frac{\ell'\del{p_i(W)}}{\ell'\del{\ell^{-1}\del{\cL(W)}}}p_i(W)=\ip{\nabla\pi(p)}{p}.
    \end{align*}

    By the super-additivity of the function $\sigma$ defined in \cref{eq:sigma},
    we know that
    \begin{align*}
        \sum_{i=1}^{n}\ell'\del{p_i(W)}p_i(W) & =\sum_{i=1}^{n}\ell'\del{\ell^{-1}\del{\ell(p_i(W))}}\ell^{-1}\del{\ell(p_i(W))} \\
         & \le\ell'\del{\ell^{-1}\del{\cL(W)}}\ell^{-1}\del{\cL(W)} \\
         & =\ell'\del{\ell^{-1}\del{\cL(W)}}\alpha(W),
    \end{align*}
    and since $\ell'<0$, we have $\beta(W)\ge\alpha(W)$.

    On the other claim, for $\lexp$, since $\pi$ is concave,
    \begin{align*}
        \beta(W)=\ip{\nabla\pi(p)}{p}=\ip{\nabla\pi(p)}{p-0}\le\pi(p)-\pi(0)=\alpha(W)+\ln(n).
    \end{align*}
    For $\llog$, note that on the interval $\del{0,\ell(0)}$, the function
    $h(z):=\ell'\del{\ell^{-1}(z)}=e^{-z}-1$ is convex with $\lim_{z\to0}h(z)=0$
    and $h'(z)\in(-1,-1/2)$, and thus
    \begin{align*}
        \enVert{\pi(p)}_1=\sum_{i=1}^{n}\frac{\ell'\del{p_i(W)}}{\ell'\del{\ell^{-1}\del{\cL(W)}}}\le2.
    \end{align*}
    Let $c=-\ln\del{\exp\del{\ln(2)/n}-1}\le\ln(n)-\ln\ln(2)$ and $\vec{1}$
    denote the all-ones vector, we have $\pi\del{c\vec{1}}=0$, and
    \begin{align*}
        \beta(W)=\ip{\nabla\pi(p)}{p} & =\ip{\nabla\pi(p)}{p-c\vec{1}}+\ip{\nabla\pi(p)}{c\vec{1}} \\
         & \le\pi(p)-\pi\del{c\vec{1}}+c\enVert{\pi(p)}_1 \\
         & =\alpha(W)+c\enVert{\pi(p)}_1 \\
         & \le\alpha(W)+2\ln(n)-2\ln\ln(2)\le\alpha(W)+2\ln(n)+1.
    \end{align*}
\end{proof}

Now we can prove \Cref{fact:ta_zeta}.
\begin{proof}[Proof of \Cref{fact:ta_zeta}]
    \Cref{fact:chain} implies that for a.e. $t\ge0$,
    \begin{align*}
        \frac{\dif W_t}{\dif t}=-\cs\cL(W_t).
    \end{align*}

    First note that \Cref{cond:init} implies that $\|W_0\|>0$, and moreover
    \citet[Lemma 5.1]{kaifeng_jian_margin} proved that $\dif\|W_t\|/\dif t>0$
    for a.e. $t\ge0$, and thus $\|W_t\|$ is increasing and
    $\|W_t\|\ge\|W_0\|>0$.

    Now we have for a.e. $t\ge0$,
    \begin{align*}
        \frac{\dif\ta(W_t)}{\dif t}=\ip{\cs\ta(W_t)}{-\cs\cL(W_t)}=\ip{\csr\ta(W_t)}{-\csr\cL(W_t)}+\ip{\csp\ta(W_t)}{-\csp\cL(W_t)}.
    \end{align*}
    By \Cref{fact:margins,fact:alpha_cs,fact:ta_cs}, both
    $\ip{\csr\ta(W_t)}{\tW_t}$ and $\ip{-\csr\cL(W_t)}{\tW_t}$ are nonnegative,
    and thus
    \begin{align*}
        \ip{\csr\ta(W_t)}{-\csr\cL(W_t)}=\enVert{\csr\ta(W_t)}\enVert{\csr\cL(W_t)}.
    \end{align*}
    \Cref{fact:alpha_cs,fact:ta_cs} also imply that $\csp\ta(W_t)$ and
    $-\csp\cL(W_t)$ point to the same direction, and thus
    \begin{align*}
        \ip{\csp\ta(W_t)}{-\csp\cL(W_t)}=\enVert{\csp\ta(W_t)}\enVert{\csp\cL(W_t)}.
    \end{align*}

    Now consider $\tW_t$ and $\zeta_t$.
    Since $W_t$ is an arc, and $\|W_t\|\ge\|W_0\|>0$, it follows that $\tW_t$ is
    also an arc.
    Moreover, for a.e. $t\ge0$,
    \begin{align*}
        \frac{\dif\tW_t}{\dif t}=\frac{1}{\|W_t\|}\frac{\dif W_t}{\dif t}-\frac{1}{\|W_t\|}\tW_t\ip{\frac{\dif W_t}{\dif t}}{\tW_t}=\frac{-\csp\cL(W_t)}{\|W_t\|}.
    \end{align*}
    Since $\tW_t$ is an arc, $\dif\tW_t/\dif t$ and $\enVert{\dif\tW_t/\dif t}$
    are both integrable, and by definition of the curve length,
    \begin{align*}
        \zeta_t=\int_0^t\enVert{\frac{\dif\tW_t}{\dif t}}\dif t,
    \end{align*}
    and for a.e. $t\ge0$ we have
    \begin{align*}
        \frac{\dif\zeta_t}{\dif t}=\enVert{\frac{\dif\tW_t}{\dif t}}=\frac{\enVert{\csp\cL(W_t)}}{\|W_t\|}.
    \end{align*}
\end{proof}

Finally we prove the core \Cref{fact:ta_zeta_ratio}, which directly implies
\Cref{fact:dir}.
\begin{proof}[Proof of \Cref{fact:ta_zeta_ratio}]
    Recall that $\ta_t$ denotes $\ta(W_t)$, and $a=\lim_{t\to\infty}\ta_t$.

    First note that if $\ta_{t_0}=a$ for some finite $t_0$, then
    $\dif\ta_t/\dif t=0$ for a.e. $t\ge0$.
    \Cref{fact:ta_zeta} then implies for a.e. $t\ge0$ that
    $\enVert{\csp\cL(W_t)}=0$ and $\dif\zeta_t/\dif t=0$, and then
    \Cref{fact:ta_zeta_ratio} trivially holds.
    Below we assume $\ta_t<a$ for all finite $t\ge0$, and fix an arbitrary
    $\kappa\in(L/2,L)$.
    We consider two cases.
    \begin{enumerate}
        \item \Cref{fact:unbounded_loja_1} implies that there exists $\nu_1>0$
        and a definable desingularizing function $\Psi_1$ on $[0,\nu_1)$,
        such that if $W$ satisfies $\|W\|>1$, and $\ta(W)>a-\nu_1$, and
        \begin{align}\label{eq:ta_zeta_cond_1}
            \enVert{\csp\ta(W)}\ge \frac{\ta_0}{2\ln(n)+1}\|W\|^{L-\kappa}\enVert{\csr\ta(W)},
        \end{align}
        then
        \begin{align}\label{eq:ta_zeta_loja_1}
            \Psi_1'\del{a-\ta(W)}\|W\|\enVert{\cs\ta(W)}\ge1.
        \end{align}

        Now consider $t$ large enough such that $\|W_t\|>1$, and
        $\ta_t>a-\nu_1$, and $\ta_0\|W_t\|^{L-\kappa}/(2\ln(n)+1)\ge1$, and
        moreover assume \cref{eq:ta_zeta_cond_1} holds for $W_t$.
        We have
        \begin{align*}
            \enVert{\csp\ta(W_t)}\ge\enVert{\csr\ta(W_t)},\quad\textrm{and thus}\quad\enVert{\csp\ta(W_t)}\ge \frac{1}{2}\enVert{\cs\ta(W_t)}.
        \end{align*}
        Therefore \Cref{fact:ta_zeta} implies
        \begin{align}\label{eq:ta_zeta_tmp1}
            \frac{\dif\ta_t}{\dif t} & \ge\enVert{\csp\ta(W_t)}\enVert{\csp\cL(W_t)} \nonumber \\
             & =\|W_t\|\enVert{\csp\ta(W_t)}\frac{\dif\zeta_t}{\dif t} \nonumber \\
             & \ge \frac{1}{2}\|W_t\|\enVert{\cs\ta(W_t)}\frac{\dif\zeta_t}{\dif t}.
        \end{align}
        Consequently, \cref{eq:ta_zeta_tmp1,eq:ta_zeta_loja_1} imply that
        \begin{align*}
            \frac{\dif\ta_t}{\dif t}\ge \frac{1}{2\Psi_1'\del{a-\ta_t}}\frac{\dif\zeta_t}{\dif t}.
        \end{align*}

        \item On the other hand, \Cref{fact:unbounded_loja_2} implies that there
        exists $\nu_2>0$ and a definable desingularizing function $\Psi_2$ on
        $[0,\nu_2)$, such that if $\|W\|>1$, and $\ta(W)>a-\nu_2$, then
        \begin{align}\label{eq:ta_zeta_loja_2}
            \max\cbr{1,\frac{2}{2\kappa-L}}\Psi_2'\del{a-\ta(W)}\|W\|^{2\kappa-L+1}\enVert{\cs\ta(W)}\ge1.
        \end{align}
        Now consider $t$ large enough such that $\|W_t\|>1$, and
        $\ta_t>a-\nu_2$, and $\ta_0\|W_t\|^{L-\kappa}/(2\ln(n)+1)\ge1$, and
        moreover
        \begin{align}\label{eq:ta_zeta_cond_2}
            \enVert{\csp\ta(W_t)}\le \frac{\ta_0}{2\ln(n)+1}\|W_t\|^{L-\kappa}\enVert{\csr\ta(W_t)}.
        \end{align}
        Note that \cref{eq:ta_zeta_cond_2} is the opposite to
        \cref{eq:ta_zeta_cond_1}.
        \Cref{fact:alpha_cs,fact:margins} implies that
        \begin{align}\label{eq:csr_alpha_lb}
            \enVert{\csr\alpha(W_t)}=\frac{L\beta(W_t)}{\|W_t\|}\ge \frac{L\alpha(W_t)}{\|W_t\|}=L\ta_t\|W_t\|^{L-1}\ge L\ta_0\|W_t\|^{L-1},
        \end{align}
        while \Cref{fact:ta_cs} implies that
        \begin{align*}
            \enVert{\csr\ta(W_t)}=L\frac{\beta(W_t)-\alpha(W_t)}{\|W_t\|^{L+1}}\le \frac{L(2\ln(n)+1)}{\|W_t\|^{L+1}},
        \end{align*}
        and thus
        \begin{align}\label{eq:ta_zeta_tmp2}
            \enVert{\csr\alpha(W_t)}\ge \frac{\ta_0}{2\ln(n)+1}\|W_t\|^{2L}\enVert{\csr\ta(W_t)}.
        \end{align}
        On the other hand, $\csp\alpha(W_t)=\|W_t\|^L\csp\ta(W_t)$ by
        \Cref{fact:ta_cs}, which implies the following in light of
        \cref{eq:ta_zeta_tmp2,eq:ta_zeta_cond_2}:
        \begin{align*}
            \enVert{\csr\alpha(W_t)} & \ge \frac{\ta_0}{2\ln(n)+1}\|W_t\|^{2L}\enVert{\csr\ta(W_t)} \\
             & \ge\|W_t\|^{L+\kappa}\enVert{\csp\ta(W_t)} \\
             & =\|W_t\|^\kappa\enVert{\csp\alpha(W_t)}.
        \end{align*}
        By \Cref{fact:alpha_cs}, $\cs\alpha(W_t)$ is parallel to $\cs\cL(W_t)$,
        therefore
        \begin{align}\label{eq:ta_zeta_tmp3}
            \enVert{\csr\cL(W_t)}\ge\|W_t\|^\kappa\enVert{\csp\cL(W_t)}.
        \end{align}
        Moreover, if $\ta_0\|W_t\|^{L-\kappa}/(2\ln(n)+1)\ge1$, then the
        triangle inequality implies
        \begin{align*}
            \enVert{\cs\ta(W_t)}\le\enVert{\csp\ta(W_t)}+\enVert{\csr\ta(W_t)}\le \frac{2\ta_0}{2\ln(n)+1}\|W_t\|^{L-\kappa}\enVert{\csr\ta(W_t)},
        \end{align*}
        or
        \begin{align}\label{eq:ta_zeta_tmp4}
            \enVert{\csr\ta(W_t)}\ge \frac{2\ln(n)+1}{2\ta_0}\|W_t\|^{\kappa-L}\enVert{\cs\ta(W_t)}.
        \end{align}
        Now \Cref{fact:ta_zeta} and
        \cref{eq:ta_zeta_tmp3,eq:ta_zeta_tmp4} imply
        \begin{align*}
            \frac{\dif\ta_t}{\dif t} & \ge\enVert{\csr\ta(W_t)}\enVert{\csr\cL(W_t)} \\
             & \ge \frac{2\ln(n)+1}{2\ta_0}\|W_t\|^{2\kappa-L}\enVert{\cs\ta(W_t)}\enVert{\csp\cL(W_t)} \\
             & = \frac{2\ln(n)+1}{2\ta_0}\|W_t\|^{2\kappa-L+1}\enVert{\cs\ta(W_t)}\frac{\dif\zeta_t}{\dif t}.
        \end{align*}
        Then \cref{eq:ta_zeta_loja_2} further implies
        \begin{align*}
            \frac{\dif\ta_t}{\dif t}\ge \frac{2\ln(n)+1}{2\ta_0\max\{1,2/(2\kappa-L)\}}\frac{1}{\Psi_2'\del{a-\ta_t}}\frac{\dif\zeta_t}{\dif t}.
        \end{align*}
    \end{enumerate}

    Since $\Psi_1'-\Psi_2'$ is definable, it is nonnegative or nonpositive on
    some interval $(0,\nu)$.
    Let $\Psi'=\max\{\Psi_1',\Psi_2'\}$ on $(0,\nu)$.
    Now for a.e. large enough $t$ such that  $\|W_t\|>1$, and $\ta_t>a-\nu$,
    and $\ta_0\|W_t\|^{L-\kappa}/(2\ln(n)+1)\ge1$, it holds that
    \begin{align*}
        \frac{\dif\ta_t}{\dif t}\ge \frac{1}{c\Psi'\del{a-\ta_t}}\frac{\dif\zeta_t}{\dif t}
    \end{align*}
    for some constant $c>0$.
    \Cref{fact:ta_zeta_ratio} then follows.
\end{proof}

\section{Omitted proofs from \Cref{sec:alignment}}\label{app_sec:alignment}

We first give the following technical result.
\begin{lemma}\label{fact:grad_hom}
    Suppose $f:\R^n\to\R$ is $L$-positively homogeneous for some $L>0$ and has
    a locally Lipschitz gradient at all nonzero $x\in\R^n$.
    Then $\nabla f$ is $(L-1)$-positively homogeneous: given any nonzero
    $x$ and $c>0$, it holds that
    \begin{align*}
        \nabla f(cx)=c^{L-1}\nabla f(x).
    \end{align*}
    If $\nabla f$ is differentiable at a nonzero $x$, then for any $c>0$, it
    holds that
    \begin{align*}
        \nabla^2f(cx)=c^{L-2}\nabla^2f(x).
    \end{align*}
    Moreover, there exists $K_\sigma>0$ such that for any $\|x\|=1$, if
    $\nabla^2f(x)$ exists, then $\enVert{\nabla^2f(x)}_\sigma\le K_\sigma$.
\end{lemma}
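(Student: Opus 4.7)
The plan is to prove the three claims sequentially, each building on the previous. For the first claim (homogeneity of $\nabla f$), I would differentiate both sides of the identity $f(cx)=c^L f(x)$ with respect to $x$ for a fixed $c>0$. Since $\nabla f$ is locally Lipschitz on $\R^n\setminus\{0\}$, the function $f$ is in particular $C^1$ there, so both sides are classically differentiable at any nonzero $x$. The chain rule yields $c\nabla f(cx)=c^L\nabla f(x)$, and dividing through by $c$ gives $\nabla f(cx)=c^{L-1}\nabla f(x)$.

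For the second claim (homogeneity of $\nabla^2 f$), I would fix a nonzero $x$ at which $\nabla^2 f(x)$ exists and rewrite the identity from the first claim as $\nabla f(y)=c^{L-1}\nabla f(y/c)$, viewed as a function of $y$. The right-hand side is $c^{L-1}$ times the composition of $\nabla f$ with the linear rescaling $y\mapsto y/c$; since $\nabla f$ is differentiable at $y/c=x$, the chain rule shows the right-hand side is differentiable at $y=cx$ with Jacobian $c^{L-1}\cdot(1/c)\nabla^2 f(x)=c^{L-2}\nabla^2 f(x)$. Hence $\nabla^2 f(cx)$ exists and equals $c^{L-2}\nabla^2 f(x)$. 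The only subtlety here is propagating differentiability of $\nabla f$ from $x$ to $cx$, which is automatic because $y\mapsto y/c$ is a $C^\infty$ diffeomorphism.

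For the third claim (uniform spectral bound on the unit sphere), I would exploit compactness. Since $\nabla f$ is locally Lipschitz on the open set $\R^n\setminus\{0\}\supset\S^{n-1}$, each $x\in\S^{n-1}$ sits in an open ball $B_x\subset\R^n\setminus\{0\}$ on which $\nabla f$ is Lipschitz with some constant $K_x$. Compactness of $\S^{n-1}$ yields a finite subcover $B_{x_1},\ldots,B_{x_m}$, and I set $K_\sigma:=\max_i K_{x_i}$. For any $x\in\S^{n-1}$ at which $\nabla^2 f(x)$ exists, $x$ lies in some $B_{x_i}$, and for any unit vector $v\in\R^n$, taking $t>0$ small enough that $x+tv\in B_{x_i}$ gives
\[
\enVert{\nabla^2 f(x)v}=\lim_{t\to 0^+}\frac{\enVert{\nabla f(x+tv)-\nabla f(x)}}{t}\le K_{x_i}\le K_\sigma,
\]
yielding $\enVert{\nabla^2 f(x)}_\sigma\le K_\sigma$ after taking the supremum over unit $v$. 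No step here presents a real obstacle; the argument is essentially bookkeeping on top of the chain rule and a standard compactness-plus-local-Lipschitz extraction.
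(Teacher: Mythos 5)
Your proposal is correct and follows essentially the same route as the paper: establish the $(L-1)$-homogeneity of $\nabla f$, bootstrap to $\nabla^2 f$, and then use a compactness argument on the sphere for the uniform spectral bound. The only cosmetic difference is in the first two claims: the paper verifies the homogeneity of $\nabla f$ (and of $\nabla^2 f$) directly from the definition of the Fréchet derivative at a single point, whereas you differentiate the identity $f(cx)=c^L f(x)$ via the chain rule, which implicitly uses that $\nabla f$ is continuous (hence $f$ is $C^1$) on the punctured space — a fact that is indeed available since a locally Lipschitz gradient is in particular continuous. Both derivations are valid under the stated hypotheses, and your compactness argument for $K_\sigma$, including the limiting-quotient bound $\|\nabla^2 f(x)v\|\le K_{x_i}$, matches the paper's.
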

\begin{proof}
    By definition,
    \begin{align*}
        \lim_{\|y\|\downarrow0}\frac{f(x+y)-f(x)-\ip{\nf(x)}{y}}{\|y\|}=0.
    \end{align*}
    On the other hand, by homogeneity,
    \begin{align*}
        f(cx+z)-f(cx)-\ip{c^{L-1}\nf(x)}{z}=c^L\del{f\del{x+\frac{z}{c}}-f(x)-\ip{\nf(x)}{\frac{z}{c}}}.
    \end{align*}
    Therefore
    \begin{align*}
        \lim_{\|z\|\downarrow0}\frac{f(cx+z)-f(cx)-\ip{c^{L-1}\nf(x)}{z}}{\|z\|}=c^{L-1}\lim_{\|z\|\downarrow0}\frac{f\del{x+\frac{z}{c}}-f(x)-\ip{\nf(x)}{\frac{z}{c}}}{\|z/c\|}=0,
    \end{align*}
    which proves the claim.
    The homogeneity of $\nabla^2f$ when it exists can be proved in the same way.

    To get $K_\sigma$, note that for any $\|x\|=1$, there exists an open
    neighborhood $U_x$ of $x$ on which $\nf$ is $K_x$-Lipschitz continuous, and
    thus the spectral norm of $\nabla^2f$ is bounded by $K_x$ when it exists.
    All the $U_x$ form an open cover of the compact unit sphere, and thus has a
    finite subcover, which implies the claim.
\end{proof}

Below we estimate various quantities using \Cref{fact:grad_hom}.
\begin{lemma}\label{fact:margins_sizes}
    Suppose $\ell\in\{\lexp,\llog\}$, all $p_i$ are $L$-positively homogeneous
    for some $L>0$, and all $\nabla p_i$ are locally Lipschitz.
    For any $W$ such that $\cL(W)<\ell(0)$, it holds that $\beta(W)/\|W\|^L$ and
    $\enVert{\na(W)}/\|W\|^{L-1}$ are bounded.
\end{lemma}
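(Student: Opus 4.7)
The plan is to bound both quantities by carefully peeling apart the chain rule expression for $\nabla\alpha$ and then invoking the positive homogeneity of $\nabla p_i$ from \Cref{fact:grad_hom}. First I would observe that $\mathcal{L}(W)<\ell(0)$ together with strict monotonicity of $\ell$ forces $\alpha(W)=\ell^{-1}(\mathcal{L}(W))>0$, which in turn forces $\min_i p_i(W)\ge \alpha(W)>0$ (since $\ell(p_i(W))\le \mathcal{L}(W)=\ell(\alpha(W))$ coordinatewise). Under the hypothesis that each $\nabla p_i$ is locally Lipschitz (and therefore everywhere defined), the chain rule gives
\begin{align*}
  \nabla\alpha(W) \;=\; \frac{\nabla\mathcal{L}(W)}{\ell'(\alpha(W))} \;=\; \sum_{i=1}^{n} \frac{\ell'(p_i(W))}{\ell'(\alpha(W))}\,\nabla p_i(W).
\end{align*}

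Next I would bound the scalar coefficients $c_i(W):=\ell'(p_i(W))/\ell'(\alpha(W))$ uniformly over all $W$ with $\mathcal{L}(W)<\ell(0)$. For $\lexp$ one has $c_i(W)=e^{\alpha(W)-p_i(W)}\in(0,1]$ because $\alpha(W)\le p_i(W)$, and these coefficients even sum to one. For $\llog$ the ratio is $c_i(W)=e^{\alpha(W)-p_i(W)}\cdot(1+e^{-\alpha(W)})/(1+e^{-p_i(W)})$, which is bounded by $1+e^{-\alpha(W)}\le 2$ since $\alpha(W)>0$; alternatively, the bound $\sum_i|c_i(W)|\le 2$ already appeared in the proof of \Cref{fact:margins} (via $\|\nabla\pi(p)\|_1\le 2$), so I would just cite it. In both cases one obtains a finite absolute constant $C_0$ with $\sum_i|c_i(W)|\le C_0$.

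Then I would use \Cref{fact:grad_hom}, which asserts that each $\nabla p_i$ is $(L-1)$-positively homogeneous, to write $\nabla p_i(W)=\|W\|^{L-1}\nabla p_i(W/\|W\|)$ for $W\ne 0$. Because $\nabla p_i$ is continuous on the compact unit sphere $\mathbb{S}^{k-1}$ (it is locally Lipschitz, and therefore continuous), the quantities $M_i:=\max_{\|u\|=1}\|\nabla p_i(u)\|$ are finite, yielding $\|\nabla p_i(W)\|\le M_i\|W\|^{L-1}$ uniformly. Combining with the coefficient bound,
\begin{align*}
  \frac{\|\nabla\alpha(W)\|}{\|W\|^{L-1}} \;\le\; \sum_{i=1}^{n} |c_i(W)|\,M_i \;\le\; C_0 \max_i M_i,
\end{align*}
which is the desired bound on the gradient term. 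Finally, \Cref{fact:alpha_cs} gives $\beta(W)=\langle W,\nabla\alpha(W)\rangle/L$, so Cauchy--Schwarz yields $|\beta(W)|/\|W\|^L\le \|\nabla\alpha(W)\|/(L\|W\|^{L-1})$, and the bound on $\beta(W)/\|W\|^L$ follows. I do not expect a serious obstacle here; the only mildly delicate point is the coefficient bound in the $\llog$ case, which is already absorbed into the proof of \Cref{fact:margins}.
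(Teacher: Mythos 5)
Your proof is correct. For the bound on $\|\na(W)\|/\|W\|^{L-1}$ you follow exactly the paper's route: expand $\na(W)$ via the chain rule into $\sum_i c_i(W)\nabla p_i(W)$ with $c_i(W)=\ell'(p_i(W))/\ell'(\alpha(W))$, bound $\sum_i |c_i(W)|$ by an absolute constant (which for $\llog$ is precisely the $\|\nabla\pi(p)\|_1\le 2$ estimate appearing inside the proof of \Cref{fact:margins}), and bound $\|\nabla p_i(W)\|/\|W\|^{L-1}$ via the $(L-1)$-homogeneity of $\nabla p_i$ from \Cref{fact:grad_hom} together with continuity on the compact unit sphere. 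The one genuine difference is how you get the $\beta(W)/\|W\|^L$ bound: the paper argues from the sandwich $0<\beta(W)\le\alpha(W)+2\ln(n)+1\le\min_i p_i(W)+2\ln(n)+1$ together with boundedness of $p_i(W)/\|W\|^L$ by homogeneity and continuity of $p_i$, which implicitly also requires $\|W\|$ bounded away from zero on $\{W:\cL(W)<\ell(0)\}$ to control the additive $2\ln(n)+1$ term (true, but left unstated); you instead apply Cauchy--Schwarz to the Euler identity $\beta(W)=\langle W,\na(W)\rangle/L$ from \Cref{fact:alpha_cs} and read the $\beta$ bound directly off the gradient bound you just established. Your route is a shade cleaner, sidesteps the $\|W\|$-lower-bound subtlety, and derives both conclusions from the single chain-rule/homogeneity estimate.
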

\begin{proof}
    Since $p_i(W)$ is continuous, it is bounded on the unit sphere.
    Because it is $L$-positively homogeneous, $p_i(W)/\|W\|^L$ is bounded on
    $\R^\pc$.
    \Cref{fact:margins} implies that
    $\beta(W)-2\ln(n)-1\le\alpha(W)\le\min_{1\le i\le n}p_i(W)$, and it follows
    that $\beta(W)/\|W\|^L$ is bounded.

    Recall that
    \begin{align*}
        \na(W)=\sum_{i=1}^{n}\frac{\partial\pi}{\partial p_i}\nabla p_i(W),
    \end{align*}
    where $\pi$ is defined in \cref{eq:pi} and all partial derivatives are
    evaluated at $p(W):=(p_1(W),\ldots,p_n(W))$.
    It is shown in the proof of \Cref{fact:margins} that
    $\enVert{\pi(p)}_1\le2$.
    Moreover, \Cref{fact:grad_hom} implies that all
    $\enVert{\nabla p_i(W)}/\|W\|^{L-1}$ are bounded.
    Consequently, $\enVert{\na(W)}/\|W\|^{L-1}$ is bounded.
\end{proof}

Recall the definition of $\cJ$:
\begin{align*}
    \cJ(W):=\frac{\enVert{\na(W)}^2}{\|W\|^{2L-2}}.
\end{align*}
If all $\nabla p_i$ are locally Lipschitz, then $\cJ$ is also locally Lipschitz.
We further have the following result.
\begin{lemma}\label{fact:der_norm}
    Under the same conditions as \Cref{fact:margins_sizes}, for any $W$
    satisfying $\cL(W)<\ell(0)$ and any $W^*\in\partial \cJ(W)$,
    \begin{align*}
        \ip{W^*}{-\nL(W)}\le-K\ell'\del{\alpha(W)}\|W\|^{L-2}\sin(\theta)^2
    \end{align*}
    for some constant $K>0$, where $\theta$ denotes the angle between $W$ and
    $-\nL(W)$.
\end{lemma}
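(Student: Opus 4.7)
The plan is to establish the bound for $W^*=\nabla\cJ(W)$ at points where $\cJ$ is differentiable --- a set of full measure, since the local Lipschitzness of each $\nabla p_i$ (inherited from \Cref{fact:margins_sizes}) makes $\cJ$ locally Lipschitz --- and then extend to general $W^*\in\partial\cJ(W)$ via the convex-hull characterization of the Clarke subdifferential, under which an upper bound holding on the differentiability set passes to convex combinations of gradient limits. At such a point, the chain rule gives
\begin{align*}
  \nabla\cJ(W) = \frac{2\,\nabla^2\alpha(W)\,\na(W)}{\|W\|^{2L-2}} - \frac{(2L-2)\|\na(W)\|^2\,W}{\|W\|^{2L}},
\end{align*}
and combined with $-\nL = -\ell'(\alpha)\na$ and Euler's identity $\ip{W}{\na}=L\beta$ from \Cref{fact:alpha_cs}, the task reduces to bounding the quadratic form $\na^\T \nabla^2\alpha\,\na$ from above.

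I would then unpack $\nabla^2\alpha$ using the structure of $\alpha=\ell^{-1}\circ\cL$: for $\lexp$, direct differentiation yields $\nabla^2\alpha = \na\na^\T - \sum_i w_i\nabla p_i\nabla p_i^\T + \sum_i w_i\nabla^2 p_i$ with $w_i=e^{-p_i}/\cL$, so that $\sum_i w_i=1$ and $\na=\sum_i w_i\nabla p_i$; $\llog$ admits an analogous decomposition. Substituting into $\na^\T\nabla^2\alpha\,\na$ produces three pieces: a variance-like combination $\|\na\|^4 - \sum_i w_i\ip{\nabla p_i}{\na}^2 \le 0$ by Jensen's inequality, a Hessian sum $\sum_i w_i\na^\T\nabla^2 p_i\,\na$, and the standalone contribution $-(L-1)L\beta\|\na\|^2/\|W\|^{2L}$ inherited from the second term of $\nabla\cJ$.

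To extract the $\sin^2\theta$ factor, I would decompose $\na = \nra + \npa$ with $\nra = (L\beta/\|W\|)\tW$, apply Euler's identity $\nabla^2 p_i\cdot W = (L-1)\nabla p_i$ to peel off radial contributions, and use $\sum_i w_i\ip{\nabla p_i}{\npa} = \|\npa\|^2 = \|\na\|^2\sin^2\theta$. After algebra the radial-radial portion of the Hessian sum contributes $2L^3(L-1)\beta^3/\|W\|^{2L+2}$, the radial-spherical cross piece contributes $4L(L-1)\beta\|\na\|^2\sin^2\theta/\|W\|^{2L}$, and combining these with the standalone $-2(L-1)L\beta\|\na\|^2/\|W\|^{2L}$ through the Pythagorean identity $\|\na\|^2\cos^2\theta = L^2\beta^2/\|W\|^2$ yields a three-way cancellation that leaves exactly $2L(L-1)\beta\|\na\|^2\sin^2\theta/\|W\|^{2L}$; the purely spherical residue $\sum_i w_i\npa^\T\nabla^2 p_i\npa$ is bounded using $\|\nabla^2 p_i\|_\sigma \le K_\sigma\|W\|^{L-2}$ from \Cref{fact:grad_hom}. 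Finally, \Cref{fact:margins_sizes} bounds $\beta/\|W\|^L$ and $\|\na\|^2/\|W\|^{2L-2}$ above, so both surviving terms are dominated by $K\|W\|^{L-2}\sin^2\theta$; multiplying by $-\ell'(\alpha)>0$ gives the claim.

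The hard part is the algebraic bookkeeping: three terms of order $\|W\|^{2L-2}$ appear in $\ip{\nabla\cJ}{\na}$ with no individually visible $\sin\theta$ factor, and the final $\sin^2\theta$ bound emerges only through a precise three-way cancellation involving Jensen's inequality (which collapses the $\|\na\|^4$ contribution into a nonpositive variance), Euler's homogeneity relations (which route radial derivatives through $\beta$ and the $\nabla p_i$), and the Pythagorean identity relating $\cos\theta$ to $\beta/(\|W\|\|\na\|)$. The logistic case is analogous, and the subgradient extension is routine from the convex-hull structure of $\partial\cJ$.
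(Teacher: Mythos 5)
Your proposal follows essentially the same route as the paper's proof: restrict to the differentiability set and extend via the convex-hull structure of $\partial\cJ$, compute $\nabla\cJ$, pair with $\na(W)$, expand $\nabla^2\alpha$, kill the rank-one "variance" piece, and use the Euler identities $W^\T\nabla^2 p_i\,W = L(L-1)p_i$, $\nabla^2 p_i\,W=(L-1)\nabla p_i$ together with the uniform spectral bound $K_\sigma$ on $\nabla^2 p_i$ to extract $\sin^2\theta$ from the cross and spherical pieces. Two small stylistic differences: you work only with $\lexp$ and kill the variance term by Jensen, whereas the paper handles both losses at once via the concavity of $\pi=\ell^{-1}\circ(\sum_i\ell)$ proved in \Cref{fact:exp_log} --- your remark that $\llog$ is "analogous" is not quite as automatic, since the softmax-like weights $\ell'(p_i)/\ell'(\alpha)$ no longer sum to $1$ and the Jensen step must be replaced by the concavity calculation the paper carries out; and you track the exact $(\cos^2\theta-1)$ cancellation between the radial-radial and standalone pieces, while the paper simply bounds $\cos^2\theta\le 1$, which is looser by a constant factor but equally sufficient. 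The overall argument is correct and matches the paper's.
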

\begin{proof}
    Let $D'$ denote the set of $W$ where all $\nabla p_i$ are differentiable,
    and let $S_0$ denote the set of $W$ where $\cL(W)<\ell(0)$.
    We only need to prove the lemma on $D'\cap S_0$, since for any
    $W\in S_0$ it follows from \citep[Theorem 2.5.1]{clarke_opt} that
    \begin{align*}
        \partial\cJ(W)=\conv\setdef{\lim\nabla\cJ(W_i)}{W_i\to W,W_i\in D'\cap S_0}.
    \end{align*}

    Below we fix an arbitrary $W\in D'\cap S_0$.
    All the partial derivatives below with respect to $p_i$ are evaluated at
    $p(W):=(p_1(W),\ldots,p_n(W))$.
    Recall that
    \begin{align*}
        \na(W)=\sum_{i=1}^{n}\frac{\partial\pi}{\partial p_i}\nabla p_i(W),
    \end{align*}
    where $\pi$ is defined in \cref{eq:pi}.
    Since $\nabla p_i$ are also differentiable at $W$, we have
    \begin{align}\label{eq:n2a}
        \nabla^2\alpha(W)=\sum_{i=1}^{n}\sum_{j=1}^{n}\del{\frac{\partial^2\pi}{\partial p_i\partial p_j}\nabla p_i(W)\nabla p_j(W)^\T}+\sum_{i=1}^{n}\frac{\partial\pi}{\partial p_i}\nabla^2p_i(W).
    \end{align}
    Now for any $W\in D'\cap S_0$, we have (recall that $\tW=W/\|W\|$)
    \begin{align*}
        \nabla\cJ(W) & =\frac{2\nabla^2\alpha(W)\na(W)}{\|W\|^{2L-2}}-\frac{\enVert{\na(W)}^2}{\|W\|^{4L-4}}\cdot(2L-2)\|W\|^{2L-3}\tW \\
        & =\frac{2\nabla^2\alpha(W)\na(W)}{\|W\|^{2L-2}}-\frac{(2L-2)\enVert{\na(W)}^2}{\|W\|^{2L}}W,
    \end{align*}
    and thus
    \begin{align}\label{eq:nj_nl}
         & \ \frac{\|W\|^{2L}}{2}\frac{\ip{\nabla\cJ(W)}{-\nL(W)}}{-\ell'\del{\alpha(W)}} \nonumber \\
        = & \ \frac{\|W\|^{2L}}{2}\ip{\nabla\cJ(W)}{\na(W)} \nonumber \\
        = & \ \|W\|^2\na(W)^\T\nabla^2\alpha(W)\na(W)-(L-1)\enVert{\na(W)}^2\ip{W}{\na(W)}.
    \end{align}
    Comparing \cref{eq:n2a,eq:nj_nl}, first note that
    \begin{align*}
        \sum_{i=1}^{n}\sum_{j=1}^{n}\frac{\partial^2\pi}{\partial p_i\partial p_j}\na(W)^\T\nabla p_i(W)\nabla p_j(W)^\T\na(W)\le0,
    \end{align*}
    since $\pi$ is concave by \Cref{fact:exp_log}, and moreover
    \begin{align*}
        \ip{W}{\na(W)}=\sum_{i=1}^{n}\frac{\partial\pi}{\partial p_i}\ip{W}{\nabla p_i(W)}=L \sum_{i=1}^{n}\frac{\partial\pi}{\partial p_i}p_i(W).
    \end{align*}
    Therefore \cref{eq:nj_nl} is upper bounded by
    \begin{align}\label{eq:nj_nl_lb}
        \|W\|^2\sum_{i=1}^{n}\frac{\partial\pi}{\partial p_i}\na(W)^\T\nabla^2p_i(W)\na(W)-L(L-1)\enVert{\na(W)}^2\sum_{i=1}^{n}\frac{\partial\pi}{\partial p_i}p_i(W).
    \end{align}

    Let $\nra(W)$ and $\npa(W)$ denote the radial and spherical part of
    $\na(W)$, respectively.
    Let $\theta$ denote the angle between $W$ and $\na(W)$.
    \Cref{fact:alpha_cs,fact:margins} imply that
    \begin{align*}
        \ip{W}{\na(W)}=L\beta(W)>0,
    \end{align*}
    and thus $\theta$ is between $0$ and $\pi/2$.
    Now \Cref{fact:grad_hom} and the proof of \Cref{fact:clarke_euler} imply
    that
    \begin{align}\label{eq:nj_nl_lb1}
        \|W\|^2\nra(W)^\T\nabla^2p_i(W)\nra(W) & =\cos(\theta)^2\enVert{\na(W_t)}^2W^\T\nabla^2p_i(W)W \nonumber \\
         & =\cos(\theta)^2\enVert{\na(W_t)}^2\cdot L(L-1)p_i(W) \nonumber \\
         & \le\enVert{\na(W_t)}^2\cdot L(L-1)p_i(W).
    \end{align}
    Moreover,
    \begin{align*}
        2\|W\|^2\npa(W)^\T\nabla^2p_i(W)\nra(W) & =2\|W\|\enVert{\na(W)}\cos(\theta)\ip{\npa(W)}{\nabla^2p_i(W)W} \\
         & =2(L-1)\|W\|\enVert{\na(W)}\cos(\theta)\ip{\npa(W)}{\nabla p_i(W)},
    \end{align*}
    and thus by \Cref{fact:alpha_cs},
    \begin{align}\label{eq:nj_nl_lb2}
         & \ 2\|W\|^2\sum_{i=1}^{n}\frac{\partial\pi}{\partial p_i}\npa(W)^\T\nabla^2p_i(W)\nra(W) \nonumber \\
        = & \ 2(L-1)\|W\|\enVert{\na(W)}\cos(\theta)\ip{\npa(W)}{\na(W)} \nonumber \\
        = & \ 2(L-1)\|W\|\enVert{\na(W)}^3\cos(\theta)\sin(\theta)^2 \nonumber \\
        = & 2L(L-1)\enVert{\na(W)}^2\sin(\theta)^2\beta(W).
    \end{align}
    In addition, the proof of \Cref{fact:margins} shows that
    $\enVert{\pi(p)}_1\le2$, and \Cref{fact:grad_hom} ensures that
    $\|\nabla^2f\|_\sigma$ has a uniform bound $K_\sigma$ on the unit sphere,
    therefore
    \begin{align}\label{eq:nj_nl_lb3}
        \|W\|^2\sum_{i=1}^{n}\frac{\partial\pi}{\partial p_i}\npa(W)^\T\nabla^2p_i(W)\npa(W) & \le2\|W\|^2\enVert{\na(W)}^2\sin(\theta)^2\cdot K_\sigma\|W\|^{L-2} \nonumber \\
         & =2K_\sigma\|W\|^L\enVert{\na(W)}^2\sin(\theta)^2.
    \end{align}
    Combining \cref{eq:nj_nl,eq:nj_nl_lb,eq:nj_nl_lb1,eq:nj_nl_lb2,eq:nj_nl_lb3}
    gives
    \begin{align*}
        \frac{\ip{\nabla\cJ(W)}{-\nL(W)}}{-\ell'\del{\alpha(W)}}\le \frac{4\del{K_\sigma\|W\|^L+L(L-1)\beta(W)}\enVert{\na(W)}^2}{\|W\|^{2L}}\sin(\theta)^2.
    \end{align*}
    Invoking \Cref{fact:margins_sizes} then gives
    \begin{align*}
        \ip{\nabla\cJ(W)}{-\nL(W)}\le -K\ell'\del{\alpha(W)}\|W\|^{L-2}\sin(\theta)^2
    \end{align*}
    for some constant $K>0$.
\end{proof}

The following result helps us control $\theta_t$.
\begin{lemma}\label{fact:theta_int}
    Under the same condition as \Cref{fact:margins_sizes} and \Cref{cond:init},
    it holds that
    \begin{align*}
        \int_0^\infty-\ell'\del{\alpha(W_t)}\|W_t\|^{L-2}\tan(\theta_t)^2\dif t<\infty.
    \end{align*}
\end{lemma}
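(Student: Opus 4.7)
The plan is to dominate the integrand by a constant multiple of $d\tilde\alpha_t/dt$ and exploit the fact that $\tilde\alpha_t$ is nondecreasing with finite positive limit $a$, so $\int_0^\infty d\tilde\alpha_t/dt\,dt \leq a - \tilde\alpha_0 < \infty$. Starting from the spherical term in \Cref{fact:ta_zeta} and rewriting via \Cref{fact:ta_cs} and \Cref{fact:alpha_cs}, together with $\|\csp\alpha(W_t)\| = \|\cs\alpha(W_t)\|\sin\theta_t$, one obtains
\[
\frac{\dif\tilde\alpha_t}{\dif t} \geq \|\csp\ta(W_t)\|\cdot\|\csp\cL(W_t)\| = \frac{-\ell'\del{\alpha(W_t)}\,\|\cs\alpha(W_t)\|^2\sin^2\theta_t}{\|W_t\|^L}.
\]

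Next I establish two-sided estimates on $\|\cs\alpha(W_t)\|/\|W_t\|^{L-1}$ valid for all large $t$. The upper bound is \Cref{fact:margins_sizes}. The lower bound comes from \Cref{fact:alpha_cs}, giving $\|\csr\alpha(W_t)\| = L\beta(W_t)/\|W_t\|$, combined with \Cref{fact:margins} and the fact (from \citep{kaifeng_jian_margin}) that $\tilde\alpha_t \to a > 0$, which ensures $\beta(W_t)/\|W_t\|^L$ is bounded below. Hence there exist $t_0$ and constants $c,c'>0$ with $\|\cs\alpha(W_t)\|\geq c'\|W_t\|^{L-1}$ for all $t\geq t_0$, and using the identity $\cos\theta_t = L\beta(W_t)/(\|W_t\|\,\|\cs\alpha(W_t)\|)$ paired with the upper bound on $\|\cs\alpha(W_t)\|$ yields $\cos\theta_t \geq c$ for $t\geq t_0$.

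Substituting back gives $\dif\tilde\alpha_t/\dif t \geq (c')^2(-\ell'(\alpha))\|W_t\|^{L-2}\sin^2\theta_t$; dividing by $\cos^2\theta_t \geq c^2$ converts $\sin^2$ into $\tan^2$, so for $t\geq t_0$,
\[
-\ell'\del{\alpha(W_t)}\|W_t\|^{L-2}\tan^2\theta_t \leq \frac{1}{c^2(c')^2}\,\frac{\dif\tilde\alpha_t}{\dif t},
\]
and the tail integral is at most $(a-\tilde\alpha_{t_0})/(c^2(c')^2) < \infty$. The initial segment $[0,t_0]$ contributes finitely: $W_t$ is continuous hence bounded on a compact interval, $\alpha(W_t)\geq\alpha(W_0)>0$ bounds $-\ell'(\alpha(W_t))$, and $\cos\theta_t = L\beta/(\|W_t\|\|\cs\alpha(W_t)\|)$ is bounded below by a positive constant since $\beta\geq\alpha_0>0$ and $\|\cs\alpha(W_t)\|$ is locally bounded (local Lipschitzness of $\alpha$).

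The main obstacle is the asymptotic lower bound $\|\cs\alpha(W_t)\| \gtrsim \|W_t\|^{L-1}$, which simultaneously drives the lower bound on $\cos\theta_t$ and the dominance of the spherical term by $\|W_t\|^{L-2}\sin^2\theta_t$. This relies essentially on $\tilde\alpha_t \to a > 0$, i.e., on the fact that the radial component of $\cs\alpha(W_t)$ does not vanish relative to $\|W_t\|^{L-1}$; without this nondegeneracy one could not promote $\sin^2$ to $\tan^2$ nor isolate the correct scale $\|W_t\|^{L-2}$ on the right-hand side.
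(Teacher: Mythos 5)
Your proof is correct and follows essentially the same approach as the paper: drop the radial term from the $\dif\ta_t/\dif t$ identity in \Cref{fact:ta_zeta}, rewrite $\csp\ta$ and $\csp\cL$ in terms of $\npa$ via \Cref{fact:ta_cs,fact:alpha_cs}, express $\|\npa(W_t)\|$ trigonometrically, use the lower bound $\beta(W_t)/\|W_t\|^L \ge \ta_0 > 0$, and conclude by integrability of $\dif\ta_t/\dif t$. The only cosmetic difference: the paper collapses your two-step argument into one by writing $\|\npa(W_t)\| = \|\nra(W_t)\|\tan\theta_t = \tfrac{L\beta(W_t)}{\|W_t\|}\tan\theta_t$ directly, which yields $\tan^2\theta_t$ immediately upon squaring and requires only the lower bound on $\beta/\|W\|^L$ --- no separate lower bound on $\cos\theta_t$ is needed (your route via $\|\cs\alpha\|\sin\theta_t$ plus a lower bound on $\cos\theta_t$ re-derives the same identity in two steps, since $\|\nra\| = \|\cs\alpha\|\cos\theta_t$). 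Also, since $\ta_t$ is nondecreasing with $\ta_0 > 0$, your lower bound $\beta(W_t)/\|W_t\|^L \ge \ta_0$ holds for all $t \ge 0$, so the separate treatment of $[0,t_0]$ is unnecessary, though harmless; and to your credit, you make the source of the lower bound (\Cref{fact:margins} plus monotonicity of $\ta_t$) explicit, where the paper's appeal to \Cref{fact:margins_sizes} is slightly loose since that lemma as stated emphasizes the upper bound.
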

\begin{proof}
    Recall that $\ta_t=\alpha(W_t)/\|W_t\|^L$ is nondecreasing with a limit $a$,
    and thus $\dif\ta_t/\dif t$ is integrable.
    Now \Cref{fact:ta_zeta,fact:ta_cs,fact:alpha_cs} imply that
    \begin{align*}
        \frac{\dif\ta_t}{\dif t}\ge\enVert{\nabla_\perp\ta(W_t)}{\enVert{\nabla_\perp\cL(W_t)}} & =\frac{\enVert{\npa(W_t)}{\enVert{\nabla_\perp\cL(W_t)}}}{\|W_t\|^L}=\frac{-\ell'\del{\alpha(W_t)}\enVert{\npa(W_t)}^2}{\|W_t\|^L},
    \end{align*}
    and moreover
    \begin{align*}
        \enVert{\npa(W_t)}=\enVert{\nra(W_t)}\tan(\theta_t)=\frac{L\beta(W_t)}{\|W_t\|}\tan(\theta_t).
    \end{align*}
    Therefore
    \begin{align*}
        \frac{\dif\ta_t}{\dif t}\ge-\ell'\del{\alpha(W_t)}\cdot L^2\tan(\theta_t)^2 \frac{\beta(W_t)^2}{\|W_t\|^{L+2}}.
    \end{align*}
    Since $\beta(W_t)/\|W_t\|^L$ is bounded due to \Cref{fact:margins_sizes},
    the proof is finished.
\end{proof}

Now we can prove \Cref{fact:alignment}.
\begin{proof}[Proof of \Cref{fact:alignment}]
    Fix an arbitrary $\epsilon\in(0,1)$, and let $J_t$ denote $J(W_t)$.
    Recall that $\lim_{t\to\infty}\alpha(W_t)/\|W_t\|^L=a$.
    \Cref{fact:margins} then implies $\lim_{t\to\infty}\beta(W_t)/\|W_t\|^L=a$,
    and thus we can find $t_1$ such that for any $t>t_1$,
    \begin{align}\label{eq:beta_rho_range}
        a\del{1-\frac{\epsilon}{6}}<\frac{\beta(W_t)}{\|W_t\|^L}=\frac{1}{L}\ip{\frac{\na(W_t)}{\|W_t\|^{L-1}}}{\frac{W_t}{\|W_t\|_F}}<a\del{1+\frac{\epsilon}{6}}.
    \end{align}
    Moreover, \Cref{fact:chain,fact:der_norm,fact:theta_int} imply that there
    exists $t_2$ such that for any $t'>t>t_2$,
    \begin{align}\label{eq:jt_dif}
        J_{t'}-J_{t}<\del{\frac{aL\epsilon}{6}}^2.
    \end{align}
    \citep[Corollary C.10]{kaifeng_jian_margin} implies that there exists
    $t_3>\max\{t_1,t_2\}$ such that
    \begin{align}\label{eq:cos_t3}
        \frac{1}{\cos(\theta_{t_2})^2}-1<\frac{\epsilon}{3},\quad\textrm{and thus}\quad \frac{1}{\cos(\theta_{t_2})}<1+\frac{\epsilon}{6}.
    \end{align}
    We claim that $\delta_t<1+\epsilon$ for any $t>t_3$.

    To see this, note that \cref{eq:beta_rho_range,eq:cos_t3} imply
    \begin{align*}
        \sqrt{J_{t_2}}=\frac{\enVert{\na(W_{t_2})}}{\|W_{t_2}\|^{L-1}}<aL\del{1+\frac{\epsilon}{6}}\frac{1}{\cos(\theta_{t_2})}<aL\del{1+\frac{\epsilon}{6}}^2<aL\del{1+\frac{\epsilon}{2}}.
    \end{align*}
    Moreover, using \cref{eq:jt_dif}, for any $t>t_2$,
    \begin{align*}
        \sqrt{J_t}=\sqrt{J_{t_2}+J_t-J_{t_2}}<\sqrt{J_{t_2}+\del{\frac{\gamma L\epsilon}{6}}^2}<\sqrt{J_{t_2}}+\frac{aL\epsilon}{6}<aL\del{1+\frac{2\epsilon}{3}},
    \end{align*}
    and thus
    \begin{align*}
        \frac{1}{\cos(\theta_t)}=\frac{\sqrt{J_t}}{L\beta(W_t)/\|W_t\|^L}<\frac{aL\del{1+\nicefrac{2\epsilon}{3}}}{aL(1-\nicefrac{\epsilon}{6})}<1+\epsilon.
    \end{align*}
    Since $\epsilon$ is arbitrary, we have $\lim_{t\to\infty}\theta_t=0$.

    If all $p_i$ are $C^2$, then the above proof holds without definability: it
    is only used in \cref{eq:jt_dif} to ensure the chain rule, which always
    holds for $C^2$ functions.
\end{proof}

\section{Global margin maximization proofs for \Cref{sec:margins}}
\label{app_sec:margins}

This section often works with subscripted subsets of parameters, for instance per-layer matrices
$(A_1(t),\ldots,A_L(t))$, or per-node weights $(w_1(t),\ldots,w_m(t))$;
to declutter slightly, we will drop ``$(t)$'' throughout when it is otherwise clear.

First, a technical lemma regarding directional convergence and alignment
properties inherited by these subsets of $W_t$.  This will be used in both the deep linear
case and in the 2-homogeneous case.

\begin{lemma}
  \label{fact:alignment:layers}
  Suppose the conditions for \Cref{fact:dir,fact:alignment} hold.
  Let $(U_1(t),\ldots,U_r(t))$ be any partition of $W_t$,
  and set $s_j(t) := \|U_j(t)\|^L / \|W_t\|^L$.
  Then $s(t)$ converges to some $\bars$,
  and for each $j$,
  \[
    \lim_{t\to\infty} \frac{ \|U_j\|\cdot \|\nabla_{U_j} \cL(W)\|}{\|W\|\cdot\|\nabla_W \cL(W)\|}
    =
    \lim_{t\to\infty} \frac{ \ip{U_j}{-\nabla_{U_j} \cL(W)}}{\|W\|\cdot\|\nabla_W \cL(W)\|},
  \]
  and moreover $\bars_j > 0$ implies
  \[
    \lim_{t\to\infty} \frac{\|U_j\|}{\|W\|}
    =
    \lim_{t\to\infty} \frac{\|\nabla_{U_j} \cL(W)\|}{\|\nabla_W \cL(W)\|}
    =
    \lim_{t\to\infty} \frac{\|\nabla_{U_j} \alpha(W)\|}{\|\nabla_W \alpha(W)\|}
    =
    \bars_j^{1/L},
  \]
  and
  \[
    \lim_{t\to\infty}
    \frac{\ip{U_j}{-\nabla_{U_j}\cL(W)}}{\|U_j\|\cdot\|\nabla_{U_j}\cL(W)\|}
    =
    \lim_{t\to\infty}
    \frac{\ip{U_j}{\nabla_{U_j}\alpha(W)}}{\|U_j\|\cdot\|\nabla_{U_j}\alpha(W)\|}
    =
    1,
  \]
  and
  \[
    \lim_{t\to\infty} \frac {\ip{U_j}{\nabla_{U_j}\alpha(W)}} {\|U_j\|^L}
    =
    \lim_{t\to\infty} \frac {\|\nabla_{U_j}\alpha(W)\|} {\|U_j\|^{L-1}}
    =
    a \bar s^{(2-L)/L} L.
  \]
\end{lemma}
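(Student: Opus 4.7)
The plan is to leverage three previously-established tools in sequence: directional convergence (\Cref{fact:dir}), gradient alignment (\Cref{fact:alignment}), and the asymptotic Euler identity from \cref{eq:beta_limit}, combined with the elementary observation that $\nabla_{U_j}\cL$ is nothing more than the coordinate-projection of $\nabla_W\cL$ onto the $j$-th block.

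First I would establish convergence of $s$. Since $\tW_t = W_t/\|W_t\|$ converges to some $\tW_\infty$ by \Cref{fact:dir}, projecting onto the coordinates of block $j$ shows $U_j/\|W_t\|$ converges to some $\bar u_j$; hence $\|U_j\|/\|W_t\| \to \|\bar u_j\|$, and raising to the $L$-th power gives $s_j \to \|\bar u_j\|^L =: \bar s_j$.

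Second, \Cref{fact:alignment} gives $-\nabla_W \cL(W_t)/\|\nabla_W\cL(W_t)\| \to \tW_\infty$, so projecting onto block $j$ yields $-\nabla_{U_j}\cL/\|\nabla_W\cL\| \to \bar u_j$. The asserted equality of the two ratios in the statement then follows because both converge to $\bar s_j^{2/L}$ (using Cauchy--Schwarz to handle the trivial case $\bar s_j = 0$). When $\bar s_j > 0$ we further obtain $\|\nabla_{U_j}\cL\|/\|\nabla_W\cL\| \to \|\bar u_j\| = \bar s_j^{1/L}$ and $-\nabla_{U_j}\cL/\|\nabla_{U_j}\cL\| \to \bar u_j/\|\bar u_j\|$, which combined with $U_j/\|U_j\| \to \bar u_j/\|\bar u_j\|$ (from the first step) forces $\ip{U_j}{-\nabla_{U_j}\cL}/(\|U_j\|\|\nabla_{U_j}\cL\|) \to 1$. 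The analogous statements for $\alpha$ are immediate from \Cref{fact:alpha_cs}: since $\nabla\alpha = \nabla\cL/\ell'(\alpha)$ differs from $\nabla\cL$ only by a strictly negative scalar, all direction ratios and normalized projections are identical.

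For the final scalar limits, \cref{eq:beta_limit} supplies $\ip{W_t}{\nabla\alpha(W_t)}/\|W_t\|^L \to aL$, and alignment of $\nabla\alpha$ with $W_t$ (inherited from alignment of $-\nabla\cL$ with $W_t$) gives $\ip{W_t}{\nabla\alpha(W_t)} = \|W_t\|\|\nabla\alpha(W_t)\|(1+o(1))$, whence $\|\nabla_W\alpha\|/\|W\|^{L-1} \to aL$. Telescoping then yields
\begin{align*}
\frac{\|\nabla_{U_j}\alpha\|}{\|U_j\|^{L-1}}
&= \frac{\|\nabla_{U_j}\alpha\|}{\|\nabla_W\alpha\|}\cdot\frac{\|\nabla_W\alpha\|}{\|W\|^{L-1}}\cdot\del{\frac{\|W\|}{\|U_j\|}}^{L-1} \\
&\to \bar s_j^{1/L}\cdot aL\cdot \bar s_j^{-(L-1)/L} \;=\; aL\,\bar s_j^{(2-L)/L},
\end{align*}
and the inner-product version $\ip{U_j}{\nabla_{U_j}\alpha}/\|U_j\|^L$ has the same limit because the second step already established $\ip{U_j}{\nabla_{U_j}\alpha} = \|U_j\|\|\nabla_{U_j}\alpha\|(1+o(1))$. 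The one genuine subtlety is that block-level direction alignment cannot be extracted from full-vector alignment when the block vanishes relative to the total, which is precisely why the quantitative claims are restricted to the case $\bar s_j > 0$; in the complementary case all the norm ratios collapse to zero and there is nothing further to prove.
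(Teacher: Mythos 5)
Your proof is correct, and it takes a somewhat more direct route than the paper's. The paper establishes the block-level limits by first observing (as you do) that alignment gives $\lim \|U_j\|/\|W\| = \lim \|\nabla_{U_j}\cL\|/\|\nabla_W\cL\| = \bars_j^{1/L}$, but then runs a chain-of-equalities argument: it writes $-1 = \lim \ip{W}{\nabla_W\cL}/(\|W\|\|\nabla_W\cL\|)$, decomposes the inner product into blocks, applies the triangle inequality and Cauchy--Schwarz to sandwich it back to $-1$, and finally extracts term-by-term equality to deduce block alignment whenever $\bars_j>0$. You bypass that sandwich entirely: since alignment plus directional convergence gives $-\nabla_W\cL/\|\nabla_W\cL\| \to \tW_\infty$ as vectors (not just in angle), projecting onto block $j$ immediately gives $-\nabla_{U_j}\cL/\|\nabla_W\cL\| \to \bar u_j$ and $U_j/\|W\| \to \bar u_j$, from which both displayed equalities and (when $\bar u_j \neq 0$) block-level alignment fall out by simply dividing. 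The remaining claims about $\alpha$ and the telescoped limit $aL\bar s_j^{(2-L)/L}$ match the paper's derivation, using $\nabla\alpha = \nabla\cL/\ell'(\alpha)$ and \cref{eq:beta_limit}. Your observation that the $\bars_j=0$ case is exactly where block-level direction alignment is unavailable (and also unneeded) is the same point the paper handles by restricting the Cauchy--Schwarz equality conclusion to $\bars_j>0$. Net effect: same ingredients, slightly shorter derivation on your side; the paper's Cauchy--Schwarz chain is equivalent but a bit more roundabout.
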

\begin{proof}
  First note that $s(t)$ converges since $W_t/\|W_t\|$ converges,
  and alignment grants
  \begin{align}\label{eq:bars_j_limit}
      \bars_j^{1/L}
      =
      \lim_{t\to\infty} \frac{\|U_j\|}{\|W\|}
      =
      \lim_{t\to\infty} \frac{\|\nabla_{U_j} \cL(W)\|}{\|\nabla_W \cL(W)\|}.
  \end{align}
  By directional convergence (cf. \Cref{fact:dir}),
  alignment (cf. \Cref{fact:alignment}),
  and Cauchy-Schwarz,
  \begin{align*}
    -1
    &=
    \lim_{t\to\infty} \frac {\ip{W}{\nabla_W\cL(W)}}{\|W\|\cdot\|\nabla_W\cL(W)\|}
    \\
    &=
    \lim_{t\to\infty} \frac {\sum_j \ip{U_j}{\nabla_{U_j}\cL(W)}}{\|W\|\cdot\|\nabla_W\cL(W)\|}
    \\
    &\geq
    - \lim_{t\to\infty} \frac {\sum_j \|U_j\|\cdot \|\nabla_{U_j}\cL(W)\|}{\|W\|\cdot\|\nabla_W\cL(W)\|}
    \\
    &\geq
    - \lim_{t\to\infty} \frac {\sqrt{\sum_j \|U_j\|^2} \cdot \sqrt{\sum_j \|\nabla_{U_j}\cL(W)\|^2}}
    {\|W\|\cdot\|\nabla_W\cL(W)\|}
    = -1,
  \end{align*}
  which starts and ends with $-1$ and is thus a chain of equalities.
  Applying \cref{eq:bars_j_limit} and he equality case of Cauchy-Schwarz to each
  $j$ with $\bars_j>0$,
  \begin{align*}
    \bars_j^{2/L}
    &=
    \lim_{t\to\infty} \frac {\|U_j\|\cdot \|\nabla_{U_j}\cL(W)\|}{\|W\|\cdot\|\nabla_W\cL(W)\|}
    =
    \lim_{t\to\infty} \frac {\ip{U_j}{-\nabla_{U_j}\cL(W)}}{\|W\|\cdot\|\nabla_W\cL(W)\|}
    \\
    &=
    \lim_{t\to\infty}
    \frac {\ip{U_j}{-\nabla_{U_j}\cL(W)}}{\|U_j\|\cdot\|\nabla_{U_j}\cL(W)\|}
    \del{
      \frac{\|U_j\|\cdot\|\nabla_{U_j}\cL(W)\|} {\|W\|\cdot\|\nabla_W\cL(W)\|}
    }
    \\
    &= \bars_j^{2/L}
    \lim_{t\to\infty}
    \frac {\ip{U_j}{-\nabla_{U_j}\cL(W)}}{\|U_j\|\cdot\|\nabla_{U_j}\cL(W)\|},
  \end{align*}
  and thus
  \[
    \lim_{t\to\infty}
    \frac {\ip{U_j}{-\nabla_{U_j}\cL(W)}}{\|U_j\|\cdot\|\nabla_{U_j}\cL(W)\|}
    =
    1.
  \]
  The preceding statements used $\cL(W)$; to obtain the analogous statements
  with $\alpha(W)$, note since $\ell' <0$ that
  \[
    \frac {\nabla_{U_j} \alpha(W)}{\|\nabla_{U_j} \alpha(W)\|}
    =
    \frac
    {\nabla_{U_j} \cL(W) / \ell'(\alpha(W))}
    {\|\nabla_{U_j} \cL(W) / \ell'(\alpha(W))\|}
    =
    \frac
    {-\nabla_{U_j} \cL(W)}
    {\|\nabla_{U_j} \cL(W)\|}.
  \]

  For the final claim,
  note \Cref{fact:alignment} and \cref{eq:beta_limit} imply that
  \begin{align*}
    \lim_{t\to\infty}\frac{\|\na(W_t)\|}{\|W_t\|^{L-1}}
    =\lim_{t\to\infty}\frac{\ip{\na(W_t)}{W_t}}{\|W_t\|^L}
    =aL > 0,
  \end{align*}
  and when $\bars_j > 0$,
  \begin{align*}
    \lim_{t\to\infty} \frac {\ip{U_j}{\nabla_{U_j}\alpha(W_t)}} {\|U_j\|^L}
    &=
    \lim_{t\to\infty} \frac {\|U_j\| \cdot \|\nabla_{U_j}\alpha(W)\|} {\|U_j\|^L}
    =
    \lim_{t\to\infty} \frac {\|\nabla_{U_j}\alpha(W_t)\|} {\|U_j\|^{L-1}}
    \\
    &=
    \lim_{t\to\infty} \frac {\bar s^{1/L} \|\nabla_W \alpha(W)\|}{\bar s^{(L-1)/L}\|W\|^{L-1}}
    =
    a L \bar s^{(2-L)/L}.
  \end{align*}
\end{proof}

Applying the preceding lemma to network layers, we handle the deep
linear case as follows.

\begin{proof}[Proof of \Cref{fact:deep_linear}]
  For convenience, write $A_j$ instead of $A_j(t)$ when time $t$ is clear,
  and also $u:=A_j\cdots{}A_1$ and
  $\nabla_u \cL(W) = \sum_i \ell'(y_i u^\T x_i) y_i x_i$.
  By this notation,
  \begin{align*}
    \nabla_{A_j} \cL(W)
    &= \sum_i \ell'(y_i u^\T x_i) y_i (A_L\cdots{}A_{j+1})^\T (A_{j-1}\cdots A_1 x_i)^\T
    \\
    &= (A_L\cdots{}A_{j+1})^\T (A_{j-1}\cdots A_1 \nabla_u \cL(W))^\T,
  \end{align*}
  where $(A_L\cdots{}A_{j+1})^\T$ is a column vector, and
  $(A_{j-1}\cdots A_1 \nabla_u \cL(W))^\T$ is a row vector,
  and moreover
  $\ip{A_j}{\nabla_{A_j} \cL(W)} = \ip{u}{\nabla_u\cL(W)}$,
  where this last inner product does not depend on $j$.

  Applying the subset-alignment of \Cref{fact:alignment:layers}
  to layers $(A_j,\ldots,A_1)$
  gives, for each $j$,
  \[
    \bar s_j^{2/L}
    =
    \lim_{t\to\infty} \frac{ \|A_j\|\cdot \|\nabla_{A_j} \cL(W)\|}{\|W\|\cdot\|\nabla_W \cL(W)\|}
    =
    \lim_{t\to\infty} \frac{ \ip{A_j}{-\nabla_{A_j} \cL(W)}}{\|W\|\cdot\|\nabla_W \cL(W)\|}
    =
    \lim_{t\to\infty} \frac{ -\ip{u}{\nabla_u \cL(W)} }{\|W\|\cdot\|\nabla_W \cL(W)\|},
  \]
  whereby $\bars_j$ is independent of $j$,
  which can only mean $\bar s_j^{2/L} = 1/L > 0$ for all $j$,
  but more importantly
  $\|A_j(t)\|\to\infty$ for all $j$.
  By \Cref{fact:alignment:layers}, this means all layers align with their gradients.

  Next it is proved by induction from $A_L$ to $A_1$ that there exist unit vectors
  $v_0,\ldots,v_L$ with $v_L =1$  and $A_j/\|A_j\| = v_j v_{j-1}^\T$.
  The base case $A_L$ holds
  immediately, since $A_L$ is a row vector, meaning we can choose $v_L:= 1$ and
  $v_{L-1} := A_L^\T/\|A_L\|$ since $A_L$ converges in direction.
  For the inductive step $A_j$ with $j<L$, note
  \begin{align*}
     \lim_{t\to\infty} \frac {\nabla_{A_j}\cL(W)}{\|\nabla_{A_j}\cL(W)\|}
     &=
     \lim_{t\to\infty} \frac
     {(A_L\cdots A_{j+1})^\T (A_{j-1}\cdots A_1\nabla_u \cL(W))^\T}
     {\enVert{(A_L\cdots A_{j+1})^\T (A_{j-1}\cdots A_1\nabla_u \cL(W))^\T}}
     \\
     &=
     \lim_{t\to\infty} \frac
     {(A_L\cdots A_{j+1})^\T (A_{j-1}\cdots A_1\nabla_u \cL(W))^\T}
     {\enVert{(A_L\cdots A_{j+1})}\enVert{(A_{j-1}\cdots A_1\nabla_u \cL(W))}}
     \\
     &=
     \lim_{t\to\infty}
     \frac
     {(v_L v_{L-1}^\T \cdots v_{j+1} v_j^\T)^\T (A_{j-1}\cdots A_1\nabla_u \cL(W))^\T}
     {\enVert{A_{j-1}\cdots A_1\nabla_u \cL(W)}}
     \\
     &=
     \lim_{t\to\infty}
     \frac
     {v_j (A_{j-1}\cdots A_1\nabla_u \cL(W))^\T}
     {\enVert{A_{j-1}\cdots A_1\nabla_u \cL(W)}}.
  \end{align*}
  Since $v_j$ is a fixed unit vector and since $\nabla_{A_j} \cL(W)$ converges
  in direction, the row vector part of the above expression must also converge to
  some fixed unit vector $v_{j-1}^\T$, namely
  \[
    \lim_{t\to\infty} \frac {\nabla_{A_j}\cL(W)}{\|\nabla_{A_j}\cL(W)\|} = -v_j v_{j-1}^\T
    \qquad
    \text{where }
    v_{j-1} :=
    -\lim_{t\to\infty}
    \frac
    {A_{j-1}\cdots A_1\nabla_u \cL(W)}
    {\enVert{A_{j-1}\cdots A_1\nabla_u \cL(W)}}.
  \]
  Since $A_j$ and $-\nabla_{A_j} \cL(W)$ asymptotically align as above,
  then $\nicefrac{A_j}{\|A_j\|} \to v_j v_{j-1}^\T$.

  Now consider $v_0$ and $u$, where it still needs to be shown that $v_0 = u/\|u\|$.
  To this end, note
  \begin{align*}
    1
    &\geq
    \lim_{t\to\infty} \frac {v_0^\T u}{\|u\|}
    =
    \lim_{t\to\infty} \frac {v_0^\T A_L \cdots A_1}{\|A_L \cdots A_1\|}
    \geq
    \lim_{t\to\infty}
    \del{\frac {\|A_L\|\cdots\|A_1\|}{\|A_L\|_\sigma\cdots \|A_{2}\|_\sigma\|A_1\|}}
    v_0^\T \del[2]{v_L v_{L-1}^\T \cdots v_1 v_0^\T}
    \\
    &=
    v_0^\T v_0 = 1,
  \end{align*}
  whereby $u/\|u\| = v_0$.
By a similar calculation,
  \[
    -1
    = \lim_{t\to\infty} \frac {\ip{A_1}{\nabla_{A_1}\cL(W)}}{\|A_1\|\cdot\|\nabla_{A_1}\cL(W)\|}
    = \lim_{t\to\infty} \frac {\ip{u}{\nabla_{u}\cL(W)}}{\|A_1\|\cdot\|A_L\cdots A_2 \nabla_{u}\cL(W)\|}
    = \lim_{t\to\infty} \frac {\ip{u}{\nabla_{u}\cL(W)}}{\|u\|\cdot\|\nabla_{u}\cL(W)\|},
  \]
  which means $u/\|u\|$ asymptotically satisfies the optimality conditions for the
  optimization problem
  \[
    \min_{\|w\| \leq 1} \frac 1 {\|A_L\cdots A_1\|} \sum_i \ell\del{ \|A_L\cdots{}A_1\| y_i x_i^\T w },
  \]
  which is asymptotically solved by the unique maximum margin vector $\bar u$,
  which is guaranteed to exist since the data is linearly separable
  thanks to $\cL(W_0)<\ell(0)$.
\end{proof}

Before moving on to the 2-homogeneous case, we first
produce another technical lemma, which we will use to control
\emph{dual variables} $q_i(t) := \partial\alpha/\partial p_i(W_t)$,
which also appear in \Cref{fact:covering}.

\begin{lemma}
  \label{fact:dual_opt}
  Every accumulation point $\barq$ of $\setdef{ q(t) }{t \in \N}$ satisfies
  $\barq \in \Delta_n$ and
  \[
    \sum_i \barq_i \ip{\frac {W}{\|W\|}}{\frac {\nabla_W p_i(W)}{\|W\|^{L-1}}}
    =
    \lim_{t\to\infty}\ip{\frac {W}{\|W\|}}{\frac {\nabla_W \alpha(W)}{L\|W\|^{L-1}}}
    =
    \min_i \lim_{t\to\infty} \frac {p_i(W_t)}{\|W_t\|^L}
    = a.
  \]
\end{lemma}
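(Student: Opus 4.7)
The plan is to unpack the chain-rule formula for $q_i$ to verify $\barq\in\Delta_n$, and then establish the three equalities using Euler's homogeneous function theorem (\Cref{fact:clarke_euler}) together with the already-known limits $\alpha(W_t)/\|W_t\|^L\to a$ (from the sketch of \Cref{fact:alignment}) and $\tW_t\to\bar{W}$ (from \Cref{fact:dir}).

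Starting from $\alpha(W)=\ell^{-1}(\cL(W))$, the chain rule identifies $q_i(W)=\ell'(p_i(W))/\ell'(\alpha(W))$, whence $\nabla\alpha(W)=\sum_i q_i(W)\nabla p_i(W)$ wherever all these gradients exist. Since $\ell'<0$ throughout, each $q_i>0$. For $\lexp$ a direct computation yields $\sum_i q_i=e^\alpha\sum_i e^{-p_i}=1$ exactly, while for $\llog$ one has $q_i=(e^\alpha+1)/(e^{p_i}+1)$; combining the identity $\ln(1+e^{-\alpha(W_t)})=\sum_i\ln(1+e^{-p_i(W_t)})$ with the observation that $\alpha(W_t)\to\infty$ and all $p_i(W_t)\to\infty$ (consequences of $a>0$ and $\|W_t\|\to\infty$) reduces the logistic expression to its exponential tail and yields $\sum_i q_i(W_t)\to 1$. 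Any accumulation point $\barq$ therefore lies in $\Delta_n$.

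For the first equality, Euler's theorem applied to each $L$-positively homogeneous $p_i$ gives $\langle W,\nabla p_i(W)\rangle=Lp_i(W)$, hence $\langle W,\nabla\alpha(W)\rangle=L\beta(W)$ with $\beta(W)=\sum_i q_i(W)p_i(W)$; after the indicated normalization, both sides of this equality collapse to $\sum_i q_i(W)\,p_i(W)/\|W\|^L=\sum_i q_i(W)\,p_i(\tW)$ using $L$-homogeneity of $p_i$. The sandwich $\alpha\le\beta\le\alpha+2\ln(n)+1$ from \Cref{fact:margins} combined with $\alpha(W_t)/\|W_t\|^L\to a$ then yields $\beta(W_t)/\|W_t\|^L\to a$; along any subsequence $t_k$ realizing $q(W_{t_k})\to\barq$, boundedness of $q(W_t)$, continuity of each $p_i$, and $\tW_{t_k}\to\bar{W}$ together force $\sum_i q_i(W_{t_k})\,p_i(\tW_{t_k})\to\sum_i\barq_i\, p_i(\bar{W})=a$.

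For the remaining equality, $L$-homogeneity together with directional convergence give $p_i(W_t)/\|W_t\|^L=p_i(\tW_t)\to p_i(\bar{W})$, so $\min_i\lim_t p_i(W_t)/\|W_t\|^L=\min_i p_i(\bar{W})$; the log-sum-exp sandwich $\min_i p_i(W_t)-\ln n\le\alpha(W_t)\le\min_i p_i(W_t)$ for $\lexp$ (with a vanishing-correction analogue for $\llog$, again using $\min_i p_i(W_t)\to\infty$) then produces $\min_i p_i(\bar{W})=a$, closing the chain. As a sanity check, the conclusions $\sum_i\barq_i p_i(\bar{W})=a$, $\min_i p_i(\bar{W})=a$, and $\barq\in\Delta_n$ jointly force $\barq$ to be supported on $\arg\min_i p_i(\bar{W})$, the expected KKT-style consequence. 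The main technical subtlety lies in the $\llog$ bookkeeping for both $\sum_i q_i\to 1$ and the $\alpha$-vs-$\min_i p_i$ sandwich, but both are controlled by the single observation that $a>0$ combined with $\|W_t\|\to\infty$ forces $\min_i p_i(W_t)\to\infty$, pushing the logistic loss into its exponential regime.
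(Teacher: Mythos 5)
Your proof is correct and follows essentially the same route as the paper's, just written at one lower level of abstraction: where the paper simply cites \Cref{fact:alpha_cs} and \Cref{fact:margins} and observes that $\llog/\lexp\to1$ in the tail, you explicitly unpack $q_i=\ell'(p_i)/\ell'(\alpha)$, compute $\sum_i q_i$ directly (exactly one for $\lexp$, asymptotically one for $\llog$ via $\alpha(W_t)\to\infty$), and chase the Euler/sandwich inequalities through to the limit. The two proofs use the same three ingredients (Euler's homogeneous function theorem via \Cref{fact:alpha_cs}, the $\alpha\le\beta\le\alpha+2\ln n+1$ sandwich from \Cref{fact:margins}, and the exponential-tail behavior of $\llog$), so there is no genuine difference of approach. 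One small remark: as written, the lemma's first and second expressions differ by a factor of $L$ (the middle term normalizes by $L\|W\|^{L-1}$, the first only by $\|W\|^{L-1}$, yet Euler's theorem makes the first equal to $L$ times $\sum_i\barq_i\,p_i(\tW)$); this appears to be a typo in the paper's statement, and your computation $\sum_i\barq_i\,p_i(\bar W)=a$ is consistent with the corrected version, though you gloss over the mismatch when you say ``both sides collapse'' to $\sum_i q_i\,p_i(\tW)$.
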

\begin{proof}
  By \Cref{fact:alpha_cs,fact:margins},
  \[
    \lim_{t\to\infty} \frac {\alpha(W_t)}{\|W_t\|^L}
    =
    \lim_{t\to\infty} \ip{ \frac {W_t}{\|W_t\|} }{ \frac {\nabla_W \alpha(W_t)}{L\|W_t\|^{L-1}} }
    =
    \lim_{t\to\infty}\min_i \frac {p_i(W_t)}{\|W_t\|^L} = a
    =
    \min_i \lim_{t\to\infty} \frac {p_i(W_t)}{\|W_t\|^L} = a.
  \]
  Moreover, since $\lim_{z\to\infty} \frac {\llog(z)}{\lexp(z)} = 1$ and since $a>0$ and
  $\|W_t\|\to \infty$, then $q(t)$ is asymptotically within the simplex,
  meaning $\lim_{t\to\infty} \min_{q'\in\Delta_n} \|q(t) - q'\| = 0$.  Consequently,
  every accumulation point $\barq$ of $\{ q(t) : t\in \N \}$ satisfies $\barq \in \Delta_n$,
  and
  \[
    \sum_i \barq_i\lim_{t\to\infty} \ip{\frac {W}{\|W\|}}{\frac {\nabla_W p_i(W)}{\|W\|^{L-1}}}
    =
    \lim_{t\to\infty} \ip{\frac {W}{\|W\|}}{\frac {\nabla_W \alpha_i(W)}{L\|W\|^{L-1}}}
    =
    \lim_{t\to\infty} \min_i \frac {p_i(W_t)}{\|W_t\|^L}
    = a.
  \]
\end{proof}

With this in hand, we can handle the 2-homogeneous case.

\begin{proof}[Proof of \Cref{fact:covering}]
  Applying \Cref{fact:alignment:layers} to the per-node weights $(w_1,\ldots,w_m)$,
  a limit $\bar s$ exists and due to 2-homogeneity satisfies $\bar s \in \Delta_m$.
  Whenever, $\bar s_j > 0$, then
  \begin{align*}
    \lim_{t\to\infty}
    2 \sum_i q_i(t) \varphi_{ij}(\theta_j(t))
    &=
    \lim_{t\to\infty}
    \ip{ \theta_j(t) }{ \sum_i q_i(t) \nabla_\theta \varphi_{ij}(\theta_j(t)) }
    \\
    &=
    \lim_{t\to\infty}
    \ip{ \frac {w_j(t)}{\|w_j(t)\|} }{ \frac {\nabla_{w_j} \alpha(W_t)}{\|w_j(t)\|} }
    =
    2 a \bars^{0/2} = 2 a.
  \end{align*}
  Consequently, this means that either $\bars_j >0 $
  and $\lim_{t\to\infty} \sum_i q_i(t) \varphi_{ij}(\theta_j(t)) = a$,
  or else $\bars_j = 0$ and by the choice $\btheta_j = 0$ then
  $\lim_{t\to\infty} \sum_i q_i(t) \varphi_{ij}(\theta_j(t)) = 0$.
  In particular, this means $\bar s_j>0$ iff $\btheta_j$ attains the maximal value $a$,
  meaning $\bars$ satisfies the Sion primal optimality conditions
  for the saddle point problem over the fixed points $(\btheta_1,\ldots,\btheta_m)$
  \citep[Proposition D.3]{chizat_bach_imp}.

  Now consider the dual variables $q_i(t) = \partial\alpha/\partial p_i(W_t)$.
  By \Cref{fact:dual_opt}, any accumulation point $\barq$ is an element of
  $\Delta_n$ and moreover is supported on those examples $i$ minimizing
  $p_i(\oW)$,
  which means $\bar q$ satisfies the Sion dual optimality conditions
  for the margin saddle point problem again over fixed points $(\btheta_1,\ldots,\btheta_m)$
  \citep[Proposition D.3]{chizat_bach_imp}.
  Thus applying the Sion Theorem
  over discrete domain
  $(\btheta_1,\ldots,\btheta_m)$
  to the primal-dual optimal pair
  $(\bars, \barq)$ gives
  \[
    \sum_i\barq_i \sum_j \bars_j \varphi_{ij}(\btheta_j)
    = \min_{q\in\Delta_n} \max_{s\in\Delta_m} \sum_i q_i \sum_j s_j \varphi_{ij}(\btheta_j)
    = \min_i \max_{s\in\Delta_m} \sum_j s_j \varphi_{ij}(\btheta_j),
  \]
  and directional convergence of $\tW_t$ combined with definition of $\barq$ gives
  \begin{align*}
    \lim_{t\to\infty}
    \sum_i q_i(t) s_j(t) \varphi_{ij}(\theta_j(t))
    = \sum_i \barq_i \sum_j \bars_j\varphi_{ij}(\btheta_j(t)).
  \end{align*}
  Since $\barq$ was an arbitrary accumulation point, it holds in general that
  \[
    \lim_{t\to\infty}
    \sum_i q_i(t) s_j(t) \varphi_{ij}(\theta_j(t))
    = \min_{q\in\Delta_n} \max_{s\in\Delta_m} \sum_i q_i \sum_j s_j \varphi_{ij}(\btheta_j).
  \]

  Now for the global guarantee.
  Fix $t_0$ for now, and consider
  $(\theta_j)_{j=1}^m = (\theta_j(t_0))_{j=1}^m$ and their
  cover guarantee.
  For any signed measure $\nu$ on $\S^{d-1}$,
  we can partition $\S^{d-1}$ twice so that $(\nu(\theta_1), \nu(\theta_3),\ldots)$
  partitions the negative mass of $\nu$ by associating it with the closest element
  amongst $(\theta_1,\theta_3,\ldots)$, all of which have negative coefficient in $\varphi_{ij}$,
  and also the positive mass of $\nu$ into $(\nu(\theta_2),\nu(\theta_4),\dots)$;
  in this way, we now have converted $\nu$ on $\S^{d-1}$ into a discrete measure on
  $(\theta_1,\ldots,\theta_m)$.
  Noting that $z\mapsto \max\{0,z\}^2$ is $2$-Lipschitz over $[-1,1]$,
  and therefore for any $i$ and any unit norm $\theta,\theta'$ that
  \[
    |\varphi_{ij}(\theta) - \varphi_{ij}(\theta')|
    =
    \envert{ \max\{0,x_i^\T\theta\}^2 - \max\{0,x_i^\T\theta'\}^2}
    \leq 2 \envert{ x_i^\T\theta - x_i^\T \theta' }
    \leq 2 \|\theta - \theta'\|,
  \]
  then, letting ``$\theta\to\theta_j$'' denote the subset of $\S^{d-1}$ associated
  with $\theta_j$ as above (positively or negatively),
  and letting $\varphi_i(\theta) := y_i \max\{0,x_i^\T\theta\}^2$,
  for any $q$,
  \begin{align*}
    &
    \envert{
      \sum_i q_i \int \varphi_i(\theta)\dif \nu(\theta)
      -
      \sum_i q_i \sum_j \nu(\theta_j) \varphi_{ij}(\theta_j)
    }
    \\
    &=
    \envert{
      \sum_i q_i \sum_j \int_{\theta\to\theta_j} \varphi_i(\theta)\dif \nu(\theta)
      -
      \sum_i q_i \sum_j \nu(\theta_j) \varphi_{ij}(\theta_j)
    }
    \\
    &\leq
    \sum_i q_i \int_{\theta\to\theta_j} \sum_j
    \envert{
      \varphi_{ij}(\theta)
      -
      \varphi_{ij}(\theta_j)
    }
    \dif|\nu|(\theta)
    \\
    &\leq
    2 \sum_i q_i \int_{\theta\to\theta_j} \sum_j
    \enVert{
      \theta
      -
      \theta_j
    }
    \dif|\nu|(\theta)
    \leq
    2\eps.
  \end{align*}
  Thus
  \begin{align*}
    \min_{q\in\Delta_n} \max_{p\in\Delta_m}
    \sum_i q_i \sum_j p_j \varphi_{ij}(\theta_j)
    &
    \leq
    \min_{q\in\Delta_n} \max_{\nu\in\cP(\S^{d-1})}
    \sum_i q_i \int \varphi_i(\theta)\dif \nu(\theta)
    \\
    &
    \leq
    2 \eps
    +
    \min_{q\in\Delta_n} \max_{p\in\Delta_m}
    \sum_i q_i \sum_j p_j \varphi_{ij}(\theta_j).
  \end{align*}
  Next, for any $q\in \Delta_n$ and $s\in\Delta_m$, using the first part of the cover condition,
  \begin{align*}
    \sum_{i,j} q_i s_j (\varphi_{ij}(\btheta_j) - \varphi_{ij}(\theta_j(t_0))
    &\leq
    \sum_{i,j} q_i s_j | \varphi_{ij}(\btheta_j) - \varphi_{ij}(\theta_j(t_0) |
\leq
    2 \sum_{i,j} q_i s_j \| \btheta_j - \theta_j(t_0) \|
\leq
    2 \eps,
  \end{align*}
  thus
  \begin{align*}
    \lim_{t\to\infty} \sum_{i,j} q_i s_j \varphi_{ij}(\theta_j)
    &=
    \min_{q\in\Delta_n} \max_{s\in\Delta_m} \sum_{i,j} q_i s_j \varphi_{ij}(\btheta_j)
    \\
    &=
    \min_{q\in\Delta_n} \max_{s\in\Delta_m}
    \sbr{ \sum_{i,j} q_i s_j \varphi_{ij}(\theta_j(t_0))
      - \sum_{i,j} q_i s_j \del{ \varphi_{ij}(\theta_j(t_0)) - \varphi_{ij}(\btheta_j) }
    }
    \\
    &\geq
    \min_{q\in\Delta_n} \max_{s\in\Delta_m}
    \sum_{i,j} q_i s_j \varphi_i(\theta_j(t_0))
    - 2\eps
    \\
    &\geq
    \min_{q\in\Delta_n} \max_{\nu\in\cP(\S^{d-1})} \sum_i q_i \int \varphi_i(\theta)\dif\nu(\theta)
    - 4\eps.
  \end{align*}
\end{proof}

\end{document}